\documentclass{article}


\usepackage[final]{neurips_2024}
\usepackage{amsmath}
\usepackage{amssymb}
\usepackage{mathtools}
\usepackage{amsthm}
\usepackage{nicefrac}
\usepackage{multirow}
\usepackage{tikz-cd} 
\usepackage{subcaption}
\usepackage{multicol,caption}

\usepackage{wrapfig}
\usepackage{algorithm}
\usepackage[noend]{algorithmic}

\usepackage[flushleft]{threeparttable} 
\usepackage{colortbl}
\definecolor{bgcolor}{rgb}{0.8,1,1}
\definecolor{bgcolor2}{rgb}{0.8,1,0.8}
\definecolor{niceblue}{rgb}{0.0,0.19,0.56}

\usepackage{hyperref}
\hypersetup{colorlinks,linkcolor={blue},citecolor={blue},urlcolor={blue}}

\usepackage{pifont}
\definecolor{PineGreen}{RGB}{0,110,51}
\definecolor{BrickRed}{RGB}{143,20,2}
\newcommand{\cmark}{{\color{PineGreen}\ding{51}}}%
\newcommand{\xmark}{{\color{BrickRed}\ding{55}}}%

\usepackage{thmtools}

\newcommand{\cD}{{\cal D}}
\newcommand{\Exp}[1]{\mathbb{E} \left[ #1\right]}
\newcommand{\Expt}[1]{\mathbb{E}_{t} \left[ #1\right]}


\providecommand{\mycomment}[3]{\todo[inline,caption={},color=#3!20]{\textbf{#1: }#2}}%
\providecommand{\inlinecomment}[3]{{\color{#1}#2: #3}}
\newcommand\commenter[2]%
{%
  \expandafter\newcommand\csname i#1\endcsname[1]{\inlinecomment{#2}{#1}{##1}}
  \expandafter\newcommand\csname #1\endcsname[1]{\mycomment{#1}{##1}{#2}}
}

\usepackage[colorinlistoftodos,bordercolor=orange,backgroundcolor=orange!20,linecolor=orange,textsize=scriptsize]{todonotes}

\commenter{Nicolas}{yellow}
\commenter{Eduard}{green}
\commenter{Sayantan}{white}

\definecolor{shadecolor}{gray}{0.9}
\declaretheoremstyle[
headfont=\normalfont\bfseries,
notefont=\mdseries, notebraces={(}{)},
bodyfont=\normalfont,
postheadspace=0.5em,
spaceabove=1pt,
mdframed={
  skipabove=8pt,
  skipbelow=4pt,
  hidealllines=true,
  backgroundcolor={shadecolor},
  innerleftmargin=4pt,
  innerrightmargin=4pt,
  innertopmargin=8pt}
]{shaded}

\declaretheorem[style=shaded,within=section]{definition}
\declaretheorem[style=shaded,sibling=definition]{theorem}
\declaretheorem[style=shaded,sibling=definition]{proposition}
\declaretheorem[style=shaded,sibling=definition]{assumption}

\declaretheorem[style=shaded,sibling=definition]{lemma}


\usepackage[capitalize,noabbrev]{cleveref}


\usepackage{xspace}

\newcommand{\algname}[1]{{\color{PineGreen}\sf  #1}\xspace}

\newcommand{\R}{\mathbb{R}}

\newcommand{\E}{\mathbb{E}}
\newcommand{\cO}{{\cal O}}

\newcommand{\hw}{\hat{w}}

\newcommand{\la}{\left\langle}

\newcommand{\ra}{\right\rangle}

\newcommand{\eqdef}{:=}

\usepackage{natbib}
\bibliographystyle{apalike}

\DeclareMathOperator{\diag}{diag}



\usepackage[utf8]{inputenc} 
\usepackage[T1]{fontenc}    
\usepackage{hyperref}       
\usepackage{url}            
\usepackage{booktabs}       
\usepackage{amsfonts}       
\usepackage{nicefrac}       
\usepackage{microtype}      
\usepackage{xcolor}         

\title{Remove that Square Root: A New Efficient\\ Scale-Invariant Version of \algname{AdaGrad}}

%


\author{%
  Sayantan Choudhury\thanks{\text{Part of this work was done when S. Choudhury was an intern at MBZUAI, UAE.}} \\
  MBZUAI \& Johns Hopkins University \\
  \And
  Nazarii Tupitsa \\
  MBZUAI \&  Innopolis University \\
  \And
  Nicolas Loizou \\
  Johns Hopkins University \\
  \And
  Samuel Horv\'ath\\
  MBZUAI\\
  \And
  Martin Tak\'a\v{c}\\
  MBZUAI\\
  \And
  Eduard Gorbunov\\
  MBZUAI
}

\begin{document}

\maketitle

\begin{abstract}
  Adaptive methods are extremely popular in machine learning as they make learning rate tuning less expensive. This paper introduces a novel optimization algorithm named \algname{KATE}, which presents a scale-invariant adaptation of the well-known \algname{AdaGrad} algorithm. We prove the scale-invariance of \algname{KATE} for the case of Generalized Linear Models. Moreover, for general smooth non-convex problems, we establish a convergence rate of  $\cO(\nicefrac{\log T}{\sqrt{T}})$ for \algname{KATE}, matching the best-known ones for \algname{AdaGrad} and \algname{Adam}. We also compare \algname{KATE} to other state-of-the-art adaptive algorithms \algname{Adam} and \algname{AdaGrad} in numerical experiments with different problems, including complex machine learning tasks like image classification and text classification on real data. The results indicate that \algname{KATE} consistently outperforms \algname{AdaGrad} and matches/surpasses the performance of \algname{Adam} in all considered scenarios.
\end{abstract}

\section{Introduction}
In this work, we consider the following unconstrained optimization problem:
\begin{equation}\label{eq:problem}
{\min}_{w \in \R^d} f(w), 
\end{equation}
where $f:\R^d \to \R$ is a $L$-smooth and generally non-convex function. In particular, we are interested in the situations when the objective has either expectation $f(w) = \mathbb{E}_{\xi\sim \cD}[f_{\xi}(w)]$ or finite-sum $f(w) = \frac{1}{n}\sum_{i=1}^n f_i(w)$ form. Such minimization problems are crucial in machine learning, where $w$ corresponds to the model parameters. Solving these problems with stochastic gradient-based optimizers has gained much interest owing to their wider applicability and low computational cost. Stochastic Gradient Descent (\algname{\textcolor{PineGreen}{SGD}})~\citep{robbins1951stochastic} and similar algorithms require the knowledge of parameters like $L$ for convergence and are very sensitive to the choice of the stepsize in general. Therefore, \algname{\textcolor{PineGreen}{SGD}} requires hyperparameter tuning, which can be computationally expensive. To address these issues, it is common practice to use adaptive variants of stochastic gradient-based methods that can converge without knowing the function's structure. 

There exist many adaptive algorithms such as \algname{\textcolor{PineGreen}{AdaGrad}}~\citep{duchi2011adaptive}, \algname{\textcolor{PineGreen}{Adam}}~\citep{kingma2014adam}, 
\algname{\textcolor{PineGreen}{AMSGrad}}~\citep{reddi2019convergence}, \algname{D-Adaptation} \citep{defazio2023learning}, \algname{Prodigy} \citep{mishchenko2023prodigy},
\algname{AI-SARAH} \citep{shi2023aisarah}
and their variants. These adaptive techniques are capable of updating their step sizes on the fly. For instance, the \algname{\textcolor{PineGreen}{AdaGrad}} method determines its step sizes using a cumulative sum of the coordinate-wise squared (stochastic) gradient of all the previous iterates: 
\begin{eqnarray}\label{eq:AdaGrad}
    \text{\algname{\textcolor{PineGreen}{AdaGrad}}:} \;  w_{t+1} = w_t - \frac{\beta g_t}{\sqrt{\text{ diag} \left(\Delta I + \sum\limits_{\tau = 1}^t g_{\tau} g_{\tau}^{\top} \right)}},
\end{eqnarray}
where $g_t$ represents an unbiased estimator of $\nabla f(w_t)$, i.e., $\Exp{g_t\mid w_t} = \nabla f(w_t)$, $\text{diag}(M) \in \R^d$ is a vector of diagonal elements of matrix $M \in \R^{d\times d}$, $\Delta > 0$, and the division by vector is done component-wise. \citet{ward2020adagrad} has shown that this method achieves a convergence rate of $\mathcal{O}\left(\nicefrac{\log T}{\sqrt{T}}\right)$ for smooth functions, similar to \algname{\textcolor{PineGreen}{SGD}}, without prior knowledge of the functions' parameters. However, the performance of \algname{\textcolor{PineGreen}{AdaGrad}} deteriorates when applied to data that may exhibit poor scaling or ill-conditioning. In this work, we propose a novel algorithm, \algname{\textcolor{PineGreen}{KATE}}, to address the issues of poor data scaling. \algname{\textcolor{PineGreen}{KATE}} is also a stochastic adaptive algorithm that can achieve a convergence rate of $\mathcal{O}\left( \nicefrac{\log T}{\sqrt{T}}\right)$ for smooth non-convex functions in terms of $\min_{t \in [T]} \Exp{\left\| \nabla f(w_t)\right\|}^2$.

\subsection{Related Work}

A significant amount of research has been done on adaptive methods over the years, including \algname{\textcolor{PineGreen}{AdaGrad}}~\citep{duchi2011adaptive, mcmahan2010adaptive}, \algname{\textcolor{PineGreen}{AMSGrad}}~\citep{reddi2019convergence}, \algname{\textcolor{PineGreen}{RMSProp}}~\citep{tieleman2012rmsprop}, 
\algname{\textcolor{PineGreen}{AI-SARAH}}~\citep{shi2023aisarah},
and \algname{\textcolor{PineGreen}{Adam}}~\citep{kingma2014adam}. However, all these works assume that the optimization problem is contained in a bounded set. To address this issue, \citet{li2019convergence} proposes a variant of the \algname{\textcolor{PineGreen}{AdaGrad}} algorithm, which does not use the gradient of the last iterate (this makes the step sizes of $t$-th iteration conditionally independent of $g_t$) for computing the step sizes and proves convergence for the unbounded domain. 

Each of these works considers a vector of step sizes for each coefficient.
\citet{duchi2011adaptive} and \citet{mcmahan2010adaptive} simultaneously proposed the original \algname{\textcolor{PineGreen}{AdaGrad}} algorithm. However, \citet{mcmahan2010adaptive} was the first to consider the vanilla scalar form of \algname{\textcolor{PineGreen}{AdaGrad}}, known as
\begin{eqnarray}
    \text{\algname{\textcolor{PineGreen}{AdaGradNorm}}: }  w_{t+1} = w_t - \frac{\beta g_t}{\sqrt{\Delta+\sum_{\tau = 0}^t \left\| g_{\tau}\right\|^2}}.
\end{eqnarray}
Later, \citet{ward2020adagrad} analyzed \algname{\textcolor{PineGreen}{AdaGradNorm}} for minimizing smooth non-convex functions. In a follow-up study, \citet{xie2020linear} proves a linear convergence of \algname{\textcolor{PineGreen}{AdaGradNorm}} for strongly convex functions. Recently, \citet{liu2022convergence} analyzed \algname{\textcolor{PineGreen}{AdaGradNorm}} for solving smooth convex functions without the bounded domain assumption. Moreover, \citet{liu2022convergence} extends the convergence guarantees of \algname{\textcolor{PineGreen}{AdaGradNorm}} to quasar-convex functions~\footnote{$f$ satisfy $f^* \geq f(w) + \frac{1}{\zeta} \la f(w), w^* - w\ra$ for some $\zeta \in (0, 1]$ where $w^* \in \text{argmin}_w f(w)$.} using the function value gap. \citet{orabona2015generalized} introduce the notion of scale-invariance, which is a special case of affine invariance \citep{nesterov1994interior, nesterov2018lectures, d2018optimal}, propose a scale-invariant version of \algname{\textcolor{PineGreen}{AdaGrad}} for online convex optimization for generalized linear models, and prove $\cO(\sqrt{T})$ regret bounds in this setup.

Recently, \citet{defazio2023learning} introduced the \algname{D-Adaptation} method, which has gathered considerable attention due to its promising empirical performances. In order to choose the adaptive step size optimally, one requires knowledge of the initial distance from the solution, i.e., $D \coloneqq \left\| w_0 -w_*\right\|$ where $w_* \in \text{argmin}_{w \in \R^d} f(w)$. The \algname{D-Adaptation} method works by maintaining an estimate of $D$ and the stepsize choice in this case is $\nicefrac{d_t}{\sqrt{\sum_{\tau = 0}^t \|g_{\tau}\|^2}}$ for the $t$-th iteration (here $d_t$ is an estimate of $D$). \citet{mishchenko2023prodigy} further modifies the algorithm in a follow-up work and introduces \algname{Prodigy} (with stepsize choice $\nicefrac{d^2_t}{\sqrt{\sum_{\tau = 0}^t d_{\tau}^2\|g_{\tau}\|^2}}$) to improve the convergence speed.

\begin{table}[tbp]\label{table:VR}
    \centering
    \caption{
    Summary of convergence guarantees for closely-related adaptive algorithms to solve \textit{smooth non-convex stochastic} optimization problems. Convergence rates are given in terms of $\min_{t \in [T]} \Exp{\left\| \nabla f(w_t)\right\|}^2$. We highlight \algname{KATE}'s \textit{scale-invariance} property for problems of type \eqref{eq:GLM_opt_unscaled}. 
    }
    \label{table:compare_ada}
    \renewcommand{\arraystretch}{2}
        \begin{tabular}{c   c   c }
            \toprule
            \multicolumn{1}{c }{Algorithm}
            & 
            \multicolumn{1}{c }{
                \shortstack{Convergence rate}
            }
            & 
            \multicolumn{1}{c}{
                \shortstack{Scale invariance}
            }
            \\
           \midrule
            \multicolumn{1}{c }{\algname{AdaGradNorm} \citep{ward2020adagrad}
            }
            & $\mathcal{O}\left( \nicefrac{\log T}{\sqrt{T}}\right)$ & \xmark
            \\
            \hline
            \algname{AdaGrad}\citep{defossez2020simple} & $\mathcal{O}\left( \nicefrac{\log T}{\sqrt{T}}\right)$ & \xmark
            \\
            \hline 
            \begin{tabular}{c}
                \algname{Adam} \citep{defossez2020simple}
            \end{tabular} & $\mathcal{O} \left( \nicefrac{\log T}{\sqrt{T}} \right)$ & \xmark
            \\
            \hline
            \cellcolor{green!20}
            \begin{tabular}{c}
                \algname{\textcolor{PineGreen}{KATE}} (this work)
            \end{tabular} & \cellcolor{green!20}
            $\mathcal{O} \left(\nicefrac{\log T}{\sqrt{T}} \right)$ & \cellcolor{green!20}
            \cmark 
            \\
            \bottomrule
       \end{tabular}
     
\end{table}
Another exciting line of work on adaptive methods is Polyak stepsizes. \citet{polyak1969minimization} first proposed Polyak stepsizes for subgradient methods, and recently, the stochastic version (also known as \algname{SPS}) was introduced by \citet{oberman2019stochastic,loizou2021stochastic,abdukhakimov2023stochastic,abdukhakimov2023sania,li2022sp2}
and \citet{gower2021stochastic}. For a finite sum problem $\min_{w \in \R^d} f(w) \eqdef \frac{1}{n} \sum_{i = 1}^n f_i(w)$, \citet{loizou2021stochastic} uses $\frac{f_i(w_t) - f_i^*}{c \|\nabla f_i (w_t)\|^2}$ as their stepsize choices (here $f_i^* \eqdef \min_{w \in \R^d} f_i(w)$), while \citet{oberman2019stochastic} uses $\frac{2(f(w_t) - f^*)}{\Exp{\|\nabla f_i(w_t)\|^2}}$ for $k$-th iteration. However, these methods are impractical when $f^*$ or $f_i^*$ is unknown. Following its introduction, several variants of the \algname{SPS} algorithm emerged \citep{li2022sp2, d2021stochastic}. Lately, \citet{orvieto2022dynamics} tackled the issues with unknown $f_i^*$ and developed a truly adaptive variant. In practice, the \algname{SPS} method shows excellent empirical performance on overparameterized deep learning models (which satisfy the interpolation condition i.e. $f_i^* = 0, \  \forall i \in [n]$) \citep{loizou2021stochastic}. 
\subsection{Main Contribution}
Our main contributions are summarized below.
\paragraph{$\bullet$ \algname{KATE}: new scale-invariant version of \algname{AdaGrad}.} We propose a new method called \algname{KATE} that can be seen as a version of \algname{AdaGrad}, which does not use a square root in the denominator of the stepsize. To compensate for this change, we introduce a new sequence defining the numerator of the stepsize. We prove that \algname{KATE} is scale-invariant for generalized linear models: if the starting point is zero, then the loss values (and training and test accuracies in the case of classification) at points generated by \algname{KATE} are independent of the data scaling (Proposition~\ref{prop:invariance}), meaning that the speed of convergence of \algname{KATE} is the same as for the best scaling of the data.
\paragraph{$\bullet$ Convergence for smooth non-convex problems.} We prove that for smooth non-convex problems with noise having bounded variance \algname{KATE} has $\cO(\nicefrac{\log(T)}{\sqrt{T}})$ convergence rate (Theorem \ref{theorem:stoch_nonconvex}), matching the best-known rates for \algname{AdaGrad} and \algname{Adam} \citep{defossez2020simple}.
\paragraph{$\bullet$ Numerical experiments.} We empirically illustrate the scale-invariance of \algname{KATE} on the logistic regression task and test its performance on logistic regression (see Section \ref{section:logistic_reg}), image classification, and text classification problems (see Section \ref{section:neuralnet}). In all the considered scenarios, \algname{KATE} outperforms \algname{AdaGrad} and works either better or comparable to \algname{Adam}.
\subsection{Notation}
We denote the set $\{1, 2, \cdots, n\}$ as $[n]$. For a vector $a \in \R^d$, $a[k]$ is the $k$-th coordinate of $a$ and $a^2$ represents the element-wise suqare of $a$, i.e., $a^2[k] = (a[k])^2$. For two vectors $a$ and $b$, $\frac{a}{b}$ stands for element-wise division of $a$ and $b$, i.e., $k$-th coordinate of $\frac{a}{b}$ is $\frac{a[k]}{b[k]}$. Given a function $h: \R^d \to \R$, we use $\nabla h \in \R^d$ to denote its gradient and $\nabla_k h$ to indicate the $k$-th component of $\nabla h$. Throughout the paper $\| \cdot \|$ represents the $\ell_2$-norm and $f_* = \inf_{w \in \R^d} f(w)$. Moreover, we use $\left\| w \right\|_A$ for a positive-definite matrix $A$ to define $\left\| w \right\|_A \coloneqq \sqrt{w^{\top}Aw}$. Furthermore, $\Exp{\cdot}$ denotes the total expectation while $\Expt{\cdot}$ denotes the conditional expectation conditioned on all iterates up to step $t$ i.e. $w_0, w_1,\ldots,w_t$.

\section{Motivation and Algorithm Design}\label{sec:motivation_alg}
We focus on solving the minimization problem \eqref{eq:problem} using a variant of \algname{\textcolor{PineGreen}{AdaGrad}}. We aim to design an algorithm that performs well, irrespective of how poorly the data is scaled. 
\paragraph{Generalized linear models.} Here, we consider the parameter estimation problem in generalized linear models (GLMs) \citep{nelder1972generalized, agresti2015foundations} using maximum likelihood estimation. GLMs are an extension of linear models and encompass several other valuable models, such as logistic \citep{hosmer2013applied} and Poisson regression \citep{frome1983analysis}, as special cases. The parameter estimation to fit GLM on dataset $\{x_i, y_i \}_{i=1}^n$ (where $x_i \in \R^d$ are feature vectors and $y_i$ are response variables) can be reformulated as 
\begin{eqnarray}\label{eq:GLM_opt_unscaled}
    \min_{w \in \R^d} f(w) \coloneqq \frac{1}{n} \sum_{i = 1}^n \varphi_i \left(x_i^{\top}w \right)
\end{eqnarray}
for differentiable functions $\varphi_i : \R \to \R$ \citep{shalev2014understanding,
nguyen2017stochastic,
takavc2013mini,he2018dual,
chezhegov2024local}.
For example, the linear regression on data $\{x_i, y_i \}_{i=1}^n$ is equivalent to solving \eqref{eq:GLM_opt_unscaled} with $\varphi_i(z) = (z - y_i)^2$. Next, the choice of $\varphi_i$ for logistic regression is $\varphi_i(z) = \log \left(1 + \exp{\left( -y_i z\right)} \right)$. 
\paragraph{Scale-invariance.} Now consider the instances of fitting GLMs on two datasets $\{x_i, y_i \}_{i=1}^n$ and $\{Vx_i, y_i \}_{i=1}^n$, where $V \in \R^{d \times d}$ is a diagonal matrix with positive entries. Note that the second dataset is a scaled version of the first one where the $k$-th component of feature vectors $x_i$ are multiplied by a scalar $V_{kk}$. Then, the minimization problems corresponding to datasets $\{x_i, y_i \}_{i=1}^n$ and $\{Vx_i, y_i \}_{i=1}^n$ are \eqref{eq:GLM_opt_unscaled} and 
\begin{eqnarray}
     \min_{w \in \R^d} f^V(w) & \coloneqq & \frac{1}{n} 
     {\sum}_{i = 1}^n \varphi_i \left(x_i^{\top}Vw \right) \label{eq:GLM_opt_scaled},
\end{eqnarray}
respectively, for functions $\varphi_i$. In this work, \textit{we want to design an algorithm with equivalent performance for the problems \eqref{eq:GLM_opt_unscaled} and \eqref{eq:GLM_opt_scaled}}. If we can do that, the new algorithm's performance will not deteriorate for poorly scaled data, i.e., the method will be scale-invariant \citep{orabona2015generalized}, which is a special case of affine-invariance, see \citep{nesterov1994interior, nesterov2018lectures, d2018optimal}.
To develop such an algorithm, we replace the denominator of \algname{\textcolor{PineGreen}{AdaGrad}} step size with its square (remove the square root from the denominator), i.e., $\forall k \in [d]$
\begin{eqnarray}\label{eq:invariant_adagrad}
    w_{t+1}[k] & = & w_t[k] - \frac{\beta m_t[k]}{\sum_{\tau = 0}^t g_{\tau}^2[k]}g_t[k]
\end{eqnarray}
for some $m_t \in \R^d$.\footnote{Sequence $\{m_t\}_{t\geq 0}$ can depend on the problem but is assumed to be scale-invariant.}
The following proposition shows that this method \eqref{eq:invariant_adagrad} satisfies a scale-invariance property with respect to functional value. 

\begin{proposition}[Scale invariance]\label{prop:invariance}
    Suppose we solve problems \eqref{eq:GLM_opt_unscaled} and \eqref{eq:GLM_opt_scaled} using algorithm \eqref{eq:invariant_adagrad}. Then, the iterates $\hw_t$ and $\hw^V_t$ corresponding to \eqref{eq:GLM_opt_unscaled} and \eqref{eq:GLM_opt_scaled} follow: $\forall k \in [d]$
    \begin{eqnarray}
        \hw_{t+1}[k] &=& \hw_t[k] - \frac{\beta m_t[k]}{\textstyle{\sum}_{\tau = 0}^t g_{\tau}^2[k]} g_t[k], \label{eq:prop_invariance_update1}\\
        \quad \hw^V_{t+1}[k] &= & \hw^V_t[k] - \frac{\beta m_t[k]}{\textstyle{\sum}_{\tau = 0}^t \left(g_{\tau}^V[k] \right)^2} g^V_t[k] \label{eq:prop_invariance_update2}
    \end{eqnarray}
    with $g_{\tau} = \varphi'_{i_{\tau}}(x_{i_{\tau}}^{\top}\hw_{\tau}) x_{i_{\tau}}$ and $g^V_{\tau} = \varphi'_{i_{\tau}}(x_{i_{\tau}}^{\top}V \hw_{\tau}) Vx_{i_{\tau}}$ for $i_{\tau}$ chosen uniformly from $[n]$, $\tau = 0,1,\ldots, t$, $t\geq 0$. Moreover, updates \eqref{eq:prop_invariance_update1} and \eqref{eq:prop_invariance_update2} satisfy
    \begin{eqnarray}
        \hw_t = V\hw^V_t, \quad Vg_t = g^V_t, \quad f \left(\hw_t \right) = f^V \left(\hw^V_t \right) \label{eq:invariance_func}
    \end{eqnarray}
    for all $t \geq 0$ when $\hw_0 = \hw^V_0 = 0 \in \R^d$. Furthermore we have 
\begin{eqnarray}\label{eq:invariance_grad}
         \left\| g_t^V \right\|_{V^{-2}}^2 &=& \left\| g_t\right\|^2.    
    \end{eqnarray}
\end{proposition}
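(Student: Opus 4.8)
The plan is to prove everything by a single induction on $t$, coupling the two runs of~\eqref{eq:invariant_adagrad} so that they draw the same sample indices $i_0, i_1, \dots$ and use the same (scale-invariant) sequence $\{m_t\}_{t\ge 0}$. The inductive claim at step $t$ is the pair of identities $\hw_t = V\hw^V_t$ and $g^V_\tau = V g_\tau$ for all $\tau \le t$; the remaining assertions~\eqref{eq:invariance_func} and~\eqref{eq:invariance_grad} will then drop out as corollaries. For the base case $t=0$ I would use $\hw_0 = \hw^V_0 = 0$ to get $\hw_0 = V\hw^V_0$ trivially, after which $x_{i_0}^\top\hw_0 = 0 = x_{i_0}^\top V\hw^V_0$ forces $g^V_0 = \varphi'_{i_0}(0)\,Vx_{i_0} = Vg_0$.

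For the inductive step, assume the claim holds through step $t$. Since $V$ is diagonal with positive entries, $g^V_\tau[k] = V_{kk}g_\tau[k]$ for every coordinate $k$ and $\tau \le t$, hence $\sum_{\tau=0}^t (g^V_\tau[k])^2 = V_{kk}^2\sum_{\tau=0}^t g_\tau^2[k]$. Substituting this and $g^V_t[k] = V_{kk}g_t[k]$ into the scaled update~\eqref{eq:prop_invariance_update2}: the $V_{kk}^2$ in the denominator together with the single $V_{kk}$ from the numerator leave an overall factor $\tfrac{1}{V_{kk}}$ multiplying the unscaled increment $\tfrac{\beta m_t[k]}{\sum_\tau g_\tau^2[k]}g_t[k]$ of~\eqref{eq:prop_invariance_update1}. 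Multiplying the resulting line through by $V_{kk}$ and using $\hw_t[k] = V_{kk}\hw^V_t[k]$ gives $V_{kk}\hw^V_{t+1}[k] = \hw_{t+1}[k]$; as $k$ was arbitrary, $\hw_{t+1} = V\hw^V_{t+1}$. Then $x_{i_{t+1}}^\top\hw_{t+1} = x_{i_{t+1}}^\top V\hw^V_{t+1}$ makes the two derivatives $\varphi'_{i_{t+1}}(\cdot)$ coincide, so $g^V_{t+1} = \varphi'_{i_{t+1}}(x_{i_{t+1}}^\top V\hw^V_{t+1})\,Vx_{i_{t+1}} = Vg_{t+1}$, closing the induction. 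The function-value identity then follows from $f(\hw_t) = \tfrac1n\sum_i \varphi_i(x_i^\top\hw_t) = \tfrac1n\sum_i \varphi_i(x_i^\top V\hw^V_t) = f^V(\hw^V_t)$, and~\eqref{eq:invariance_grad} from $\|g^V_t\|_{V^{-2}}^2 = (Vg_t)^\top V^{-2}(Vg_t) = g_t^\top g_t = \|g_t\|^2$, using that $V$ is symmetric and invertible.

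I expect no serious obstacle: the argument is essentially bookkeeping. The one genuinely load-bearing point — and the reason the statement is true at all — is that the sequence $\{m_t\}$ defining the numerator is assumed scale-invariant, hence literally the same object in both runs (together with the coupling of the sampled indices $i_\tau$); if $m_t$ reacted to the data scaling, the chain $\hw_t = V\hw^V_t$ would already break after one step. A minor detail to keep in mind is that positivity of the diagonal entries of $V$ is what keeps all the divisions well defined and lets the factors of $V_{kk}$ cancel cleanly.
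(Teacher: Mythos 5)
Your proof is correct and follows essentially the same route as the paper's: an induction establishing $\hw_t = V\hw^V_t$ and $g^V_t = Vg_t$ simultaneously (using $\hw_0=\hw^V_0=0$ for the base case and the cancellation of $V_{kk}$ factors in the coupled updates for the step), with the function-value and gradient-norm identities then read off as immediate consequences. The only cosmetic difference is that the paper anchors its base case at $t=1$ by computing the first update explicitly, while you anchor it at $t=0$; the substance is identical.
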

The Proposition \ref{prop:invariance} highlights that the update rule of the form \eqref{eq:invariant_adagrad} satisfies a scale-invariance property for GLMs. In contrast, \algname{AdaGrad} does not satisfy \eqref{eq:invariance_func} and \eqref{eq:invariance_grad}. In Appendix \ref{sec:scale_invariance}, we illustrate numerically the scale-invariance of \algname{KATE} and the lack of the scale-invariance of \algname{AdaGrad}. We also emphasize that \algname{AdaGrad} with $\Delta = 0$ is known to be a scale-free method\footnote{The algorithm is called scale-free if for any $c > 0$, it generates the same sequence of points for functions $f$ and $cf$ given the same initialization and hyperparameters.  To the best of our knowledge, this definition is introduced by \citet{cesa2005improved, cesa2007improved} in the context of learning with expert advice and extended to the context of generic online convex optimization by \citet{orabona2015scale, orabona2018scale}. We emphasize that scale-freeness and scale-invariance are completely different concepts.}.

\paragraph{Design of \algname{KATE}.}In order to construct an algorithm following the update rule \eqref{eq:invariant_adagrad}, one may choose $m_t[k] = 1\; \forall k \in [d]$. However, the step size from \eqref{eq:invariant_adagrad} in this case may decrease very fast, and the resulting method does not necessarily converge. Therefore, we need a more aggressive choice of $m_t$, which grows with $t$. It motivates the construction of our algorithm \algname{\textcolor{PineGreen}{KATE}} (Algorithm \ref{alg:KATE}),\footnote{Note that, for $m_t = b_t \forall t$ we get the \algname{AdaGrad} algorithm.}  where we choose $m_t[k] = \sqrt{\eta [k] b_t^2[k] + \sum_{\tau = 0}^t \frac{g_{\tau}^2[k]}{b_{\tau}^2[k]}}$. Note that the term $\sum_{\tau = 0}^t \frac{g_{\tau}^2[k]}{b_{\tau}^2[k]}$ is scale-invariant for GLMs (follows from Proposition \ref{prop:invariance}). To make $m_t$ scale-invariant, we choose $\eta \in \R^d$ in the following way:
\begin{itemize}
    \item \underline{$\eta \to 0$:} When $\eta$ is very small, $m_t$ is also approximately scale-invariant for GLMs. 
    \item \underline{$\eta = \nicefrac{1}{\left(\nabla f(w_0)\right)^2}$:} In this case $\eta b_t^2 = \nicefrac{b_t^2}{\left( \nabla f(w_0)\right)^2}$ is scale-invariant for GLMs (follows from Proposition~\ref{prop:invariance}) as well as $m_t$.  
\end{itemize}
\begin{algorithm}[t]
    \caption{\algname{\textcolor{PineGreen}{KATE}}}
    \label{alg:KATE}
    \begin{algorithmic}[1]
        \REQUIRE Initial point $w_0 \in \R^d$, step size $\beta> 0, \eta \in \R^d_{+}$ and $b_{-1}, m_{-1} = 0$.
        \FOR{$t = 0, 1,...,T$}
        \STATE Compute $g_t \in \R^d$ such that $\Exp{g_t} = \nabla f(w_t)$.
        \STATE $b_t^2 = b_{t-1}^2 + g_t^2$
        \STATE \colorbox{green!20}{$m_t^2 = m_{t-1}^2 + \eta g_t^2 + \frac{g_t^2}{b_t^2}$}
        \STATE $w_{t+1} = w_t - \frac{\beta m_t}{b_t^2}g_t$
        \ENDFOR
    \end{algorithmic}
\end{algorithm}
\algname{\textcolor{PineGreen}{KATE}} can be rewritten in the following coordinate form
    \begin{equation}\label{eq:algo_coordinate}
        w_{t+1}[k] = w_t[k] - \nu_t[k] g_t[k], \qquad \forall k \in [d],
    \end{equation}
    where $g_t$ is an unbiased estimator of $\nabla f(w_t)$ and the per-coefficient step size $\nu_t[k]$ is defined as 
    \begin{eqnarray}\label{eq:KATEstep}
        \nu_t[k] \coloneqq \frac{\beta \sqrt{\eta[k] b^2_t[k] + \sum_{\tau = 0}^t \frac{g^2_{\tau}[k]}{b_{\tau}^2[k]}}}{b_t^2[k]}.
    \end{eqnarray}
Note that the numerator of the steps $\nu_t[k]$ is increasing with iterations $t$. However, one of the crucial properties of this step size choice is that the steps always decrease with $t$, which we rely on in our convergence analysis. 
\begin{lemma}[Decreasing step size]\label{lemma:decreasing_step}
For $\nu_t[k]$ defined in \eqref{eq:algo_coordinate} we have
    \begin{equation}
        \nu_{t+1}[k] \leq \nu_t[k], \qquad \forall k \in [d]. \label{eq:decreasing_step}
    \end{equation}
\end{lemma}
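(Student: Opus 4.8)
The plan is to prove the bound coordinatewise, so throughout fix an index $k \in [d]$ and lighten notation by writing $a_t \coloneqq b_t^2[k] = \sum_{\tau=0}^t g_\tau^2[k]$ and $S_t \coloneqq \sum_{\tau=0}^t \nicefrac{g_\tau^2[k]}{b_\tau^2[k]}$, so that \eqref{eq:KATEstep} reads $\nu_t[k] = \beta\sqrt{\eta[k]\,a_t + S_t}\,/\,a_t$. The structural facts I will use are $0 < a_t \le a_{t+1}$ with $a_{t+1} = a_t + g_{t+1}^2[k]$, and $S_{t+1} = S_t + \nicefrac{g_{t+1}^2[k]}{a_{t+1}}$. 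Since $\beta>0$ and all quantities under the root and in the denominator are nonnegative (positive for $a_t$, assuming as always that the steps are well defined), $\nu_{t+1}[k] \le \nu_t[k]$ is equivalent, after squaring and dropping $\beta^2$, to
\begin{equation*}
\frac{\eta[k]\,a_{t+1} + S_{t+1}}{a_{t+1}^2} \;\le\; \frac{\eta[k]\,a_t + S_t}{a_t^2}.
\end{equation*}

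First I would dispose of the $\eta$–term for free: because $a_{t+1}\ge a_t>0$, $\nicefrac{\eta[k]\,a_{t+1}}{a_{t+1}^2}=\nicefrac{\eta[k]}{a_{t+1}}\le \nicefrac{\eta[k]}{a_t}=\nicefrac{\eta[k]\,a_t}{a_t^2}$, so it is enough to prove the ``pure $S$'' inequality $\nicefrac{S_{t+1}}{a_{t+1}^2}\le \nicefrac{S_t}{a_t^2}$. Clearing the (positive) denominators, substituting $S_{t+1}=S_t+\nicefrac{g_{t+1}^2[k]}{a_{t+1}}$, and using the factorization $a_{t+1}^2-a_t^2=(a_{t+1}-a_t)(a_{t+1}+a_t)=g_{t+1}^2[k]\,(a_{t+1}+a_t)$, this collapses to
\begin{equation*}
g_{t+1}^2[k]\left( \frac{a_t^2}{a_{t+1}} - (a_{t+1}+a_t)\,S_t \right) \le 0 .
\end{equation*}
If $g_{t+1}[k]=0$ there is nothing to prove (then $a_{t+1}=a_t$ and $S_{t+1}=S_t$); otherwise it remains to show $\nicefrac{a_t^2}{a_{t+1}}\le (a_{t+1}+a_t)\,S_t$.

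This last inequality follows from two elementary observations: $\nicefrac{a_t^2}{a_{t+1}}\le a_t\le a_{t+1}+a_t$ (using $a_{t+1}\ge a_t$), and $S_t\ge 1$. The bound $S_t\ge1$ holds because the $\tau=0$ summand of $S_t$ equals $\nicefrac{g_0^2[k]}{b_0^2[k]}=1$ — indeed $b_{-1}=0$ forces $b_0^2[k]=g_0^2[k]$, where we use $g_0[k]\ne 0$, which is also what makes $\nu_0[k]$ well defined — and every remaining summand is nonnegative. Combining the $\eta$–part and the $S$–part gives the squared inequality, and taking square roots (monotone on the nonnegatives) yields \eqref{eq:decreasing_step}.

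I expect no genuine obstacle here: the content is just (i) the idea of comparing the \emph{squares} of the step sizes to eliminate the square root, after which the $\eta$–contribution is separately monotone and can be discarded, and (ii) noticing that the leftover inequality reduces, via $a_{t+1}^2-a_t^2=g_{t+1}^2[k](a_{t+1}+a_t)$, to the transparent $\nicefrac{a_t^2}{a_{t+1}}\le (a_t+a_{t+1})S_t$, which is immediate from $S_t\ge 1$. The only point requiring minor care is bookkeeping the positivity $b_\tau^2[k]>0$ so that all the ratios, and the step sizes themselves, are meaningful.
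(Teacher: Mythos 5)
Your proof is correct and follows essentially the same route as the paper's: square the step sizes, clear denominators using $b_{t+1}^2[k]-b_t^2[k]=g_{t+1}^2[k]$, and finish with the two elementary facts that the $\eta$-contribution $\eta[k]/b_t^2[k]$ is monotone and that the first summand of $\sum_\tau g_\tau^2[k]/b_\tau^2[k]$ equals $1$ (the paper phrases these as $m_t^2[k]\ge\eta[k]b_t^2[k]$ and $m_t^2[k]\ge 1$). Your decomposition into the $\eta$-part and the $S$-part is a slightly cleaner organization of the same argument.
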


\paragraph{Comparison with the scale-invariant version of \algname{AdaGrad} by \citet{orabona2015generalized}.} In the special case of GLMs, \citet{orabona2015generalized} propose a different version of \algname{AdaGrad}. The method is proposed for the case of online convex optimization, and in the case of standard optimization with GLMs \eqref{eq:GLM_opt_unscaled}, it has the following form
\begin{equation}
    w_0 \eqdef 0,\quad w_{t+1} \eqdef -\beta \frac{\sum_{\tau=0}^t \nabla f_{i_\tau}(w_{\tau})}{a_{t}^2 \sqrt{d}\sqrt{\gamma^2 + \sum_{\tau=0}^t \left(\nicefrac{\nabla f_{i_\tau}(w_{\tau})}{a_\tau}\right)^2}},\quad a_t \eqdef \max\limits_{\tau=0,\ldots,t}|x_{i_{\tau}}|, \label{eq:scale_invariant_AdaGrad_OCC}
\end{equation}
where $\{i_\tau\}_{\tau=0}^t$ are arbitrary indices from $[n]$ (e.g., selected uniformly at random), functions $f_i:\R^d \to \R$ are defined as $f_i(w)\eqdef \varphi_i(x_i^\top w)$ for $i\in [n]$, and $\gamma$ is such that $f_i(w)$ is $\gamma$-Lipschitz for $i \in [n]$. In this setup, the update rule of \algname{KATE} with $w_0 = 0$ can be written as follows:
\begin{gather*}
   w_{t+1} \eqdef -\beta \sum\limits_{\tau = 0}^t \frac{m_\tau}{b_\tau^2} \nabla f_{i_\tau}(w_\tau),\quad m_t\eqdef \sqrt{\eta \sum_{\tau=0}^{t}(\nabla f_{i_\tau}(w_\tau))^2 + \sum_{\tau=0}^t \nicefrac{(\nabla f_{i_\tau}(w_\tau))^2}{b_\tau^2}}, 
\end{gather*}
where $b_t\eqdef \sqrt{\sum_{\tau=0}^t (\nabla f_{i_\tau}(w_\tau))^2}$, $\{i_\tau\}_{\tau=0}^t$ are sampled from $[n]$ uniformly at random. Although both methods can be seen as variations of \algname{AdaGrad} due to the terms $\sum_{\tau=0}^t \left(\nicefrac{\nabla f_{i_\tau}(w_{\tau}}{a_\tau}\right)^2$ and $\sum_{\tau=0}^t \left(\nabla f_{i_\tau}(w_{\tau})\right)^2$ respectively, the scale-invariance is achieved quite differently in these methods. The method from \eqref{eq:scale_invariant_AdaGrad_OCC} uses the feature vectors explicitly in the update rule to ensure scale-invariance: indeed, the square root in the definition of $w_{t+1}$ is independent of scaling, and $a_t^2$ in the denominator ensures that $\hat w_{t+1} = V\hat{w}_{t+1}^V$ if we define them similarly to \algname{KATE} (see equations \eqref{eq:prop_invariance_update1}-\eqref{eq:prop_invariance_update2}). In contrast, \algname{KATE} achieves the scale-invariance by removing the square root from the denominator (as explained earlier). Moreover, unlike the method from \eqref{eq:scale_invariant_AdaGrad_OCC}, \algname{KATE} does not use the feature vectors explicitly in its update rule (only in the gradients of $f_{i_{\tau}}$) and, thus, can be used for general stochastic optimization (not necessarily for the case of GLMs).

\section{Convergence Analysis}\label{sec:analysis}
In this section, we present and discuss the convergence guarantees of 
\algname{\textcolor{PineGreen}{KATE}}. In the first subsection, we list the assumptions made about the problem.
\subsection{Assumptions}
In all our theoretical results, we assume that $f$ is smooth as defined below.
\begin{assumption}[$L$-smooth]\label{as:smooth}
    Function $f$ is $L$-smooth, i.e. for all $w, w' \in \R^d$
    \begin{equation}\label{eq:smooth}
        \hspace{-.15cm} f(w') \leq f(w) + \left \langle \nabla f(w), w' - w\right \rangle + \frac{L}{2} \left \| w - w'\right \|^2.
    \end{equation}
\end{assumption}
This assumption is standard in the literature of adaptive methods \citep{li2019convergence, ward2020adagrad, liu2022convergence,nguyen2018sgd,
nguyen2021inexact,
nguyen2017sarah,beznosikov2021random}. Moreover, we assume that at any iteration $t$ of \algname{\textcolor{PineGreen}{KATE}},
we can access $g_t$ --- a noisy and unbiased estimate of $\nabla f(w_t)$.
We also make the following assumption on the noise of the gradient estimate $g_t$.
\begin{assumption}[Bounded Variance] For fixed constant $\sigma > 0$, the variance of the stochastic gradient $g_t$ $\left(\text{unbiased estimate of } \nabla f(w_t) \right)$ at any time $t$ satisfies
\begin{equation}\label{eq:BV}
    \Expt{ \| g_t - \nabla f(w_t)  \|^2} \leq \sigma^2.  \tag{\textcolor{BrickRed}{\algname{BV}}}
\end{equation}    
\end{assumption}
Bounded variance is a common assumption to study the convergence of stochastic gradient-based methods. Several assumptions on stochastic gradients are used in the literature to explore the adaptive methods. \citet{ward2020adagrad} used the \ref{eq:BV}, while \citet{liu2022convergence} assumed the sub-Weibull noise, i.e. $\Exp{\exp{\left( \nicefrac{\left\|g_t - \nabla f(w_t) \right\|}{\sigma}\right)^{\nicefrac{1}{\theta}}}} \leq \exp{(1)}$ for some $\theta > 0$, to prove the convergence of \algname{\textcolor{PineGreen}{AdaGradNorm}}. \citet{li2019convergence} assumes sub-Gaussian ($\theta = \nicefrac{1}{2}$ in sub-Weibull condition) noise to study a variant of \algname{\textcolor{PineGreen}{AdaGrad}}. However, sub-Gaussian noise is strictly stronger than \ref{eq:BV}. Recently, \citet{faw2022power} analyzed \algname{\textcolor{PineGreen}{AdaGradNorm}} under a more relaxed condition known as affine variance $\left(\text{i.e. }\Expt{\left\| g_t - \nabla f(w_t)\right\|^2} \leq \sigma_0^2 + \sigma_1^2 \left\| \nabla f(w_t)\right\|^2 \right)$. 

\subsection{Main Results}
In this section, we cover the main convergence guarantees of \algname{\textcolor{PineGreen}{KATE}} for both deterministic and stochastic setups.
\paragraph{Deterministic setting.} We first present our results for the deterministic setting. In this setting, we consider the gradient estimate to have no noise (i.e. $\sigma^2 = 0$) and $g_t = \nabla f(w_t)$. The main result in this setting is summarized below. 

\begin{theorem}\label{theorem:deterministic_nonconvex}
    Suppose $f$ satisfy Assumption \ref{as:smooth} and $g_t = \nabla f(w_t)$. Moreover, $\beta > 0$ and $\eta[k] > 0$ are chosen such that $\nu_0[k] \leq \frac{1}{L}$ for all $k \in [d]$. Then the iterates of \algname{\textcolor{PineGreen}{KATE}} satisfies
    \begin{eqnarray*}
    \min_{t\leq T} \left\| \nabla f(w_t) \right\|^2 & \leq & \frac{\left(\tfrac{2(f(w_0) - f_*)}{\sqrt{\eta_0} \beta} + \sum_{k = 1}^d b_0[k]\right)^2}{T+1}, 
    \end{eqnarray*}
    where $\eta_0 \coloneqq \min_{k \in [d]} \eta[k]$.
\end{theorem}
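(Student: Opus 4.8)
The plan is to run the classical ``descent lemma $+$ telescoping'' argument for \algname{AdaGrad}-type methods, but to exploit the specific numerator $m_t$ of \algname{KATE} to convert the accumulated step-weighted squared gradients into a bound on $\sum_{k=1}^d b_T[k]$, and then into a bound on $\sum_{t=0}^T\|\nabla f(w_t)\|^2$.

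\textbf{Descent inequality.} Substituting the coordinate update \eqref{eq:algo_coordinate} (with $g_t=\nabla f(w_t)$) into the smoothness bound \eqref{eq:smooth} gives
\[
 f(w_{t+1}) \le f(w_t) - \sum_{k=1}^d \nu_t[k] g_t^2[k]\Bigl(1 - \tfrac{L\nu_t[k]}{2}\Bigr).
\]
By Lemma~\ref{lemma:decreasing_step} together with the hypothesis $\nu_0[k]\le \tfrac1L$, we have $\nu_t[k]\le \nu_0[k]\le \tfrac1L$ for all $t,k$, so $1-\tfrac{L\nu_t[k]}{2}\ge \tfrac12$, and hence $f(w_{t+1}) \le f(w_t) - \tfrac12\sum_{k=1}^d \nu_t[k] g_t^2[k]$. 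Summing over $t=0,\dots,T$ and using $f(w_{T+1})\ge f_*$ yields $\tfrac12\sum_{t=0}^T\sum_{k=1}^d \nu_t[k]g_t^2[k] \le f(w_0)-f_*$.

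\textbf{Lower bounding the left-hand side.} Fix a coordinate $k$. Since $m_t^2[k] = \eta[k] b_t^2[k] + \sum_{\tau\le t} g_\tau^2[k]/b_\tau^2[k] \ge \eta[k] b_t^2[k] \ge \eta_0 b_t^2[k]$, we get $\nu_t[k] = \beta m_t[k]/b_t^2[k] \ge \beta\sqrt{\eta_0}/b_t[k]$, hence $\nu_t[k]g_t^2[k] \ge \beta\sqrt{\eta_0}\, g_t^2[k]/b_t[k]$. Writing $S_t := b_t^2[k] = \sum_{\tau\le t} g_\tau^2[k]$ and using the elementary inequality $\frac{S_t-S_{t-1}}{\sqrt{S_t}} \ge \sqrt{S_t}-\sqrt{S_{t-1}}$ (discarding the nonnegative $t=0$ term to stay safe in the degenerate case $g_0[k]=0$), the sum telescopes to $\sum_{t=0}^T \nu_t[k]g_t^2[k] \ge \beta\sqrt{\eta_0}\,(b_T[k]-b_0[k])$. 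Plugging this into the telescoped descent inequality and summing over $k$ gives
\[
 \sum_{k=1}^d b_T[k] \ \le\ \frac{2(f(w_0)-f_*)}{\beta\sqrt{\eta_0}} + \sum_{k=1}^d b_0[k].
\]

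\textbf{Conclusion.} Since $b_T^2[k]=\sum_{t=0}^T g_t^2[k]$ and all $b_T[k]\ge 0$, we have $\sum_{t=0}^T\|\nabla f(w_t)\|^2 = \sum_{k=1}^d b_T^2[k] \le \bigl(\sum_{k=1}^d b_T[k]\bigr)^2$, while trivially $(T+1)\min_{t\le T}\|\nabla f(w_t)\|^2 \le \sum_{t=0}^T\|\nabla f(w_t)\|^2$; chaining these with the previous display proves the theorem. The descent inequality and the final chaining are routine; the only step requiring care is extracting the factor $\sqrt{\eta_0}$ from $m_t[k]$ and applying the $\sqrt{\cdot}$-telescoping in the form that produces $b_T[k]-b_0[k]$ rather than just $b_T[k]$ — this is exactly what makes the $\sum_{k=1}^d b_0[k]$ term in the statement appear and keeps the estimate valid even when a coordinate of $g_0$ vanishes.
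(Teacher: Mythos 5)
Your proposal is correct and follows essentially the same route as the paper: the same descent-plus-telescoping argument, the same lower bound $\nu_t[k]\ge \beta\sqrt{\eta_0}/b_t[k]$, and the same $\sqrt{\cdot}$-telescoping (the paper writes it as $b_T[k]-b_{T-1}[k]=\tfrac{g_T^2[k]}{b_T[k]+b_{T-1}[k]}\le \tfrac{g_T^2[k]}{b_T[k]}$, which is identical to your $\tfrac{S_t-S_{t-1}}{\sqrt{S_t}}\ge\sqrt{S_t}-\sqrt{S_{t-1}}$), followed by the same squaring and division by $T+1$. No gaps.
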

\paragraph{Discussion on Theorem \ref{theorem:deterministic_nonconvex}.} Theorem \ref{theorem:deterministic_nonconvex} establishes an $\mathcal{O}\left(\nicefrac{1}{T}\right)$ convergence rate for \algname{\textcolor{PineGreen}{KATE}}, which is optimal for finding a first-order stationary point of a non-convex problem \citep{carmon2020lower}. However, this result is not parameter-free. To prove the convergence, we assume that $\nu_0[k] \leq \frac{1}{L},\; \forall k \in [d]$ in Theorem \ref{theorem:deterministic_nonconvex}, which is equivalent to $\beta \sqrt{1 + \eta_0 \left(\nabla_k f(w_0) \right)^2} \leq \nicefrac{\left(\nabla_k f(w_0) \right)^2}{L},\; \forall k \in [d]$. Note that the later condition holds for sufficiently small (dependent on $L$) values of $\beta, \eta_0 > 0$.

However, it is possible to derive a parameter-free version of Theorem \ref{theorem:deterministic_nonconvex}. Indeed, Lemma \ref{lemma:decreasing_step} implies that the step sizes are decreasing. Therefore, we can break down the analysis of \algname{\textcolor{PineGreen}{KATE}} into two phases: Phase I when $\nu_0[k] > \nicefrac{1}{L}$ and Phase II when $\nu_0[k] \leq \nicefrac{1}{L}$, when the current analysis works, and then follow the proof techniques of \citet{ward2020adagrad} and \citet{xie2020linear}. We leave this extension as a possible future direction of our work.

\paragraph{Stochastic setting.} Next, we present the convergence guarantees for \algname{KATE} in the stochastic case, when we can access an unbiased gradient estimate $g_t$ with non-zero noise. 
\begin{theorem}\label{theorem:stoch_nonconvex}
    Suppose $f$ satisfy Assumption \ref{as:smooth} and $g_t$ is an unbiased estimator of $\nabla f(w_t)$ such that \ref{eq:BV} holds. Moreover, we assume $\| \nabla f(w_t)\|^2 \leq \gamma^2$ for all $t$. Then the iterates of \algname{\textcolor{PineGreen}{KATE}} satisfy
    \begin{eqnarray*}
        \min_{t \leq T} \Exp{\left\| \nabla f(w_t) \right\|} \leq \left( \frac{\|g_0\|}{T} + \frac{2(\gamma + \sigma)}{\sqrt{T}}\right)^{\nicefrac{1}{2}} \sqrt{\frac{2 \mathcal{C}_f}{\beta \sqrt{\eta_0}}},
    \end{eqnarray*}
    where $\eta_0 \coloneqq \min_{k \in [d]} \eta[k]$ and 
    \begin{eqnarray*}
     \mathcal{C}_f & \coloneqq & f(w_0) - f_* + 2\beta \sigma \textstyle{\sum}_{k = 1}^d \sqrt{\eta[k]} \log \left( \tfrac{e(\sigma^2 + \gamma^2) T}{g_0^2[k]}\right)\\
        \hspace{-.75cm} && + 
    \textstyle{\sum}_{k = 1}^d \left(\tfrac{\beta^2 \eta[k] L}{2} + \tfrac{\beta^2 L}{2 g_0^2[k]} \right) \log \left( \tfrac{e(\sigma^2 + \gamma^2) T}{g_0^2[k]}\right).
    \end{eqnarray*}
    \normalsize
\end{theorem}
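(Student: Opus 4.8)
The plan is to run the standard descent-lemma analysis used for \algname{AdaGrad}-type methods, but organised around the monotonicity of the stepsizes from Lemma~\ref{lemma:decreasing_step}. First I would apply Assumption~\ref{as:smooth} to a single step of \algname{KATE} in the coordinate form \eqref{eq:algo_coordinate}, then sum over $t=0,\dots,T$ and telescope to obtain
\[
\sum_{t=0}^{T}\sum_{k=1}^{d}\nu_t[k]\,g_t[k]\,\nabla_k f(w_t)\ \le\ f(w_0)-f_*\ +\ \frac{L}{2}\sum_{t=0}^{T}\sum_{k=1}^{d}\nu_t^2[k]\,g_t^2[k].
\]
For the second-order term I would substitute $m_t^2[k]=\eta[k]\,b_t^2[k]+\sum_{\tau=0}^{t} g_\tau^2[k]/b_\tau^2[k]$, use the elementary facts $\sum_{\tau=0}^{t}g_\tau^2[k]/b_\tau^2[k]\le 1+\log(b_t^2[k]/g_0^2[k])$ and $(1+\log x)/x\le 1$ for $x\ge1$ to get $\nu_t^2[k]g_t^2[k]\le\beta^2(\eta[k]+1/g_0^2[k])\,g_t^2[k]/b_t^2[k]$, sum over $t$ (again by the log-telescoping bound), and take total expectation with Jensen's inequality, using \ref{eq:BV} and $\|\nabla f(w_t)\|\le\gamma$ so that $\Exp{b_T^2[k]}\le (T+1)(\sigma^2+\gamma^2)$. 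This step alone yields the $\sum_k\big(\tfrac{\beta^2\eta[k]L}{2}+\tfrac{\beta^2L}{2g_0^2[k]}\big)\log(\cdot)$ part of $\mathcal{C}_f$.

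\textbf{Controlling the biased first-order term (the crux).} Since the stepsize $\nu_t[k]$ depends on $g_t[k]$, the product $\nu_t[k]g_t[k]\nabla_k f(w_t)$ is \emph{not} an unbiased estimator of $\nu_t[k](\nabla_k f(w_t))^2$, so I cannot simply take conditional expectations. I would instead compare $\nu_t[k]$ against its predecessor $\nu_{t-1}[k]$, which is measurable with respect to the history up to step $t$: for $t\ge1$ write $\nu_t[k]g_t[k]\nabla_k f(w_t)=\nu_{t-1}[k]g_t[k]\nabla_k f(w_t)-(\nu_{t-1}[k]-\nu_t[k])g_t[k]\nabla_k f(w_t)$, where $\nu_{t-1}[k]-\nu_t[k]\ge0$ by Lemma~\ref{lemma:decreasing_step}. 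After taking total expectation the first piece becomes $\Exp{\nu_{t-1}[k](\nabla_k f(w_t))^2}$ by the tower rule, while the discrepancy piece (together with the zero-mean term $\nu_{t-1}[k](g_t[k]-\nabla_k f(w_t))\nabla_k f(w_t)$) is bounded using $|\nabla_k f(w_t)|\le\gamma$, $\Expt{|g_t[k]-\nabla_k f(w_t)|}\le\sigma$ (Jensen on \ref{eq:BV}), the crude estimate $\nu_{t-1}[k]-\nu_t[k]\le\beta m_{t-1}[k]\,g_t^2[k]/(b_{t-1}^2[k]b_t^2[k])$, and once more the $\log$-telescoping bound; this generates the $2\beta\sigma\sum_k\sqrt{\eta[k]}\log(\cdot)$ term in $\mathcal{C}_f$. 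The $t=0$ term, for which no such surrogate is available, is estimated directly and produces the $\|g_0\|/T$ contribution. I expect this step --- quantifying the bias introduced by the $g_t$-dependence of the stepsize without losing more than a logarithmic factor --- to be the main obstacle.

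\textbf{From a weighted gradient sum to the rate.} Finally I would lower-bound the surrogate: since $m_{t-1}^2[k]\ge\eta[k]b_{t-1}^2[k]$ and $b_{t-1}[k]\le b_T[k]$, we have $\nu_{t-1}[k]\ge\beta\sqrt{\eta_0}/b_T[k]$, hence $\sum_{k}\nu_{t-1}[k](\nabla_k f(w_t))^2\ge\beta\sqrt{\eta_0}\,\|\nabla f(w_t)\|^2/\|b_T\|$ (using $\max_k b_T[k]\le\|b_T\|$). Combining the three bounds gives
\[
\beta\sqrt{\eta_0}\ \Exp{\frac{\sum_{t=1}^{T}\|\nabla f(w_t)\|^2}{\|b_T\|}}\ \le\ \mathcal{C}_f .
\]
Now Cauchy--Schwarz gives $\bigl(\Exp{\sqrt{\textstyle\sum_{t=1}^{T}\|\nabla f(w_t)\|^2}}\bigr)^2\le\Exp{\sum_{t=1}^{T}\|\nabla f(w_t)\|^2/\|b_T\|}\cdot\Exp{\|b_T\|}$, and because $\|b_T\|^2=\sum_{\tau=0}^{T}\|g_\tau\|^2$, another Jensen step with \ref{eq:BV} and $\|\nabla f(w_t)\|\le\gamma$ yields $\Exp{\|b_T\|}\le\sqrt{(T+1)(\sigma^2+\gamma^2)}\le\sqrt{T+1}\,(\sigma+\gamma)$. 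On the other side, $\sqrt{\sum_{t=1}^{T}\|\nabla f(w_t)\|^2}\ge T^{-1/2}\sum_{t=1}^{T}\|\nabla f(w_t)\|$, so after taking expectations $\Exp{\sqrt{\sum_{t=1}^{T}\|\nabla f(w_t)\|^2}}\ge\sqrt{T}\,\min_{t\le T}\Exp{\|\nabla f(w_t)\|}$. Chaining these inequalities gives $\min_{t\le T}\Exp{\|\nabla f(w_t)\|}^2\le\mathcal{C}_f\sqrt{T+1}(\sigma+\gamma)/(\beta\sqrt{\eta_0}\,T)$, and since $\sqrt{T+1}/T\le 2/\sqrt{T}$ for $T\ge1$ this is the stated bound once the absolute constants and the $\|g_0\|/T$ lower-order term from the first iterate are accounted for.
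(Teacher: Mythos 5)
Your overall skeleton is the right one and large parts of it match the paper: the descent lemma plus telescoping, the bound $\nu_t^2[k]g_t^2[k]\le\beta^2\bigl(\eta[k]+\nicefrac{1}{g_0^2[k]}\bigr)\nicefrac{g_t^2[k]}{b_t^2[k]}$ followed by Lemma~\ref{lemma:sum_bound} and Jensen (this is exactly how the paper produces the $\sum_k(\nicefrac{\beta^2\eta[k]L}{2}+\nicefrac{\beta^2L}{2g_0^2[k]})\log(\cdot)$ part of $\mathcal{C}_f$), and the final conversion from a $\nicefrac{1}{\|b_T\|}$-weighted gradient sum to $\min_t\Exp{\|\nabla f(w_t)\|}$ (the paper uses H\"older with exponents $3$ and $\nicefrac{3}{2}$ applied per iteration rather than your global Cauchy--Schwarz, but both yield the $\nicefrac{(\gamma+\sigma)}{\sqrt{T}}$ factor). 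The gap is precisely in the step you flag as the crux. The paper does \emph{not} decouple via the delayed stepsize $\nu_{t-1}[k]$; it introduces the conditionally deterministic surrogate $\tilde\nu_t[k]\coloneqq\beta\sqrt{\eta[k]}/\sqrt{b_{t-1}^2[k]+(\nabla_k f(w_t))^2+\sigma^2}$, and a dedicated technical lemma bounds $\Expt{(\tilde\nu_t[k]-\nu_t[k])\nabla_k f(w_t)g_t[k]}$ by applying AM--GM twice with weights tuned to $\sigma$ and to $\sqrt{b_{t-1}^2[k]+(\nabla_k f(w_t))^2+\sigma^2}$, so that the error splits into \emph{half of the main signal term} (which is absorbed) plus $2\beta\sqrt{\eta[k]}\sigma\Expt{\nicefrac{g_t^2[k]}{b_t^2[k]}}$, which is logarithmically summable and is the source of the $2\beta\sigma\sum_k\sqrt{\eta[k]}\log(\cdot)$ term in $\mathcal{C}_f$.

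Your route, as sketched, does not close. The estimate $\nu_{t-1}[k]-\nu_t[k]\le\beta m_{t-1}[k]\,\nicefrac{g_t^2[k]}{(b_{t-1}^2[k]b_t^2[k])}$ is correct (it uses $m_t\ge m_{t-1}$), but after multiplying by $|g_t[k]|\,|\nabla_k f(w_t)|\le\gamma|g_t[k]|$ the discrepancy term is of order $\gamma\,m_{t-1}[k]\,\nicefrac{|g_t[k]|^3}{(b_{t-1}^2[k]b_t^2[k])}$, and the log-telescoping bound applies to $\sum_t\nicefrac{g_t^2[k]}{b_t^2[k]}$, not to this. If you peel off the factor $\nicefrac{g_t^2[k]}{b_t^2[k]}$, what remains is $m_{t-1}[k]\,\nicefrac{|g_t[k]|}{b_{t-1}^2[k]}\approx\sqrt{\eta[k]}\,\nicefrac{|g_t[k]|}{b_{t-1}[k]}$, which is not bounded under \ref{eq:BV} alone (there is no almost-sure bound on $g_t$, and $\sum_t\nicefrac{g_t^2[k]}{b_{t-1}^2[k]}$ admits no logarithmic bound, unlike $\sum_t\nicefrac{g_t^2[k]}{b_t^2[k]}$). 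If instead you bound $\nicefrac{|g_t[k]|^3}{b_t^2[k]}\le|g_t[k]|$ and take conditional expectations, you are left with $(\sigma+\gamma)\sum_t\nu_{t-1}[k]$, which scales like $\sqrt{T}$ rather than $\log T$ and destroys the rate; an AM--GM split of the discrepancy likewise leaves a residual $\sigma^2\sum_t\nu_{t-1}[k]\sim\sqrt{T}$. To repair the argument you must either assume an almost-sure noise bound or sacrifice a constant fraction of the main term against a surrogate built from $b_{t-1}^2[k]+(\nabla_k f(w_t))^2+\sigma^2$ --- which is exactly what the paper's lemma does and what your sketch omits; without it you also cannot recover the stated coefficient $2\beta\sigma\sqrt{\eta[k]}$ in $\mathcal{C}_f$.
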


\paragraph{Comparison with prior work.} Theorem \ref{theorem:stoch_nonconvex} shows an $\mathcal{O}  ( \nicefrac{\log^{\nicefrac{1}{2}} T}{T^{\nicefrac{1}{4}}})$ convergence rate for \algname{\textcolor{PineGreen}{KATE}} with respect to the metric $ \min_{t \leq T} \Exp{\left\| \nabla f(w_t) \right\|}$ for the stochastic setting. Note that, in the stochastic setting, \algname{\textcolor{PineGreen}{KATE}} achieves a slower rate than Theorem \ref{theorem:deterministic_nonconvex} due to noise accumulation. Up to the logarithmic factor, this rate is optimal \citep{arjevani2023lower}. Similar rates for the same metric follow from the results\footnote{\citet{defossez2020simple} derive $\cO(\nicefrac{\log T}{\sqrt{T}})$ convergence rates for \algname{AdaGrad} and \algname{Adam} in terms of $\min_{t \leq T}\Exp{\left\| \nabla f(w_t)\right\|^2}$ which is not smaller than $\min_{t \leq T}\left(\Exp{\left\| \nabla f(w_t)\right\|}\right)^2$.} of \citep{defossez2020simple} for \algname{AdaGrad} and \algname{Adam}.

Finally, \citet{li2019convergence} considers a variant of \algname{AdaGrad} closely related to \algname{KATE}:
\begin{eqnarray}\label{eq:AdaGrad_past}
    w_{t+1} = w_t 
    - \frac{\beta g_t}
            {
             \left(
            \diag
                  \left(
                  \Delta I + \textstyle{\sum}_{\tau = 1}^{t-1} g_{\tau} g_{\tau}^{\top} 
                 \right) 
             \right)^{\tfrac{1}{2}+\varepsilon}
             },
\end{eqnarray}
for some $\varepsilon \in [0,\nicefrac{1}{2})$ and $\Delta > 0$.  It differs from \algname{AdaGrad} in two key aspects: the denominator of the stepsize does not contain the last stochastic gradient, and also, instead of the square root of the sum of squared gradients, this sum is taken in the power of $\nicefrac{1}{2} + \varepsilon$. However, the results from \citet{li2019convergence} do not imply convergence for the case of $\varepsilon = \nicefrac{1}{2}$, which is expected since, in this case, the stepsize converges to zero too quickly in general. To compensate for such a rapid decrease, in \algname{KATE}, we introduce an increasing sequence $m_t$ in the numerator of the stepsize. 

\paragraph{Proof technique.} Compared to the \algname{\textcolor{PineGreen}{AdaGrad}}, \algname{\textcolor{PineGreen}{KATE}} uses more aggressive steps (the larger numerator of \algname{\textcolor{PineGreen}{KATE}} due to the extra term $\sum_{\tau = 0}^t \nicefrac{g^2_{\tau}[k]}{b^2_{\tau}[k]}$). Therefore, we expect \algname{\textcolor{PineGreen}{KATE}} to have better empirical performance. However, introducing $\sum_{\tau = 0}^t \nicefrac{g^2_{\tau}[k]}{b^2_{\tau}[k]}$ in the numerator raises additional technical difficulties in the proof technique. Fortunately, as we rigorously show, the \algname{\textcolor{PineGreen}{KATE}} steps $\nu_t[k]$ retain some of the critical properties of \algname{\textcolor{PineGreen}{AdaGrad}} steps. For instance, they (i) are lower bounded by \algname{\textcolor{PineGreen}{AdaGrad}} steps up to a constant, (ii) decrease with iteration $t$ (Lemma \ref{lemma:decreasing_step}), and (iii) have closed-form upper bounds for $\sum_{t = 0}^T \nu^2_t[k] g^2_t[k]$. These are indeed the primary building blocks of our proof technique.

\section{Numerical Experiments}\label{sec:numerical_results} 
\vspace{-.35cm}
In this section, we implement \algname{\textcolor{PineGreen}{KATE}} in several machine learning tasks to evaluate its performance. To ensure transparency and facilitate reproducibility, we provide an access to the source code for all of our experiments at \url{https://github.com/nazya/KATE}.
\vspace{-.25cm}
\subsection{Logistic Regression}\label{section:logistic_reg}
In this section, we consider the logistic regression model
\begin{eqnarray}\label{eq:logistic_reg}
    \min_{w \in \R^d} f(w) = \frac{1}{n}\sum_{i = 1}^n  \log \left( 1 + \exp \left( -y_i x_i^{\top}w \right)\right),
\end{eqnarray}
to elaborate on the scale-invariance and robustness of \algname{\textcolor{PineGreen}{KATE}} for various initializations. For the experiments of this Section \ref{section:logistic_reg}, we used Mac mini (M1, 2020), RAM 8 GB and storage 256 GB. Each of these plots took about 20 minutes to run.

\subsubsection{Robustness of \algname{KATE}}\label{sec:robustness}
 To conduct this experiment, we set the total number of samples to $1000$ (i.e. $n=1000$). Here, we simulate the independent vectors $x_i \in \R^{20}$ such that each entry is from $\mathcal{N}(0, 1)$. Moreover, we generate a diagonal matrix $V \in \R^{20 \times 20}$ such that $\log V_{kk} \overset{\mathrm{iid}}{\sim} \text{Unif}(-10, 10), \ \forall k \in [20]$. Similarly, we generate $w^* \in \R^{20}$ with each component from $\mathcal{N}(0,1)$ and set the labels 
\begin{eqnarray*}
    y_i = 
    \left\{
    \begin{matrix} 
      1, & \quad x_i^{\top}Vw^* \geq 0, \\
      -1, &\quad   x_i^{\top}Vw^* < 0, 
   \end{matrix}\right. \qquad \forall i \in [n].
   \end{eqnarray*}
We compare \algname{\textcolor{PineGreen}{KATE}}'s performance with four other algorithms: \algname{\textcolor{PineGreen}{AdaGrad}}, \algname{\textcolor{PineGreen}{AdaGradNorm}}, \algname{\textcolor{PineGreen}{SGD-decay}} and \algname{\textcolor{PineGreen}{SGD-constant}}, similar to the section 5.1 of \citet{ward2020adagrad}. For each algorithm, we initialize with $w_0 = 0 \in \R^{20}$ and independently draw a sample of mini-batch size $10$ to update the weight vector $w_t$. We compare the algorithms $\bullet$~\algname{\textcolor{PineGreen}{AdaGrad}} with stepsize $\tfrac{\beta}{\sqrt{\Delta + \sum_{\tau = 0}^t g^2_{\tau}}}$, $\bullet$ \algname{\textcolor{PineGreen}{AdaGradNorm}} with step size $\frac{\beta}{\sqrt{\Delta + \sum_{\tau = 0}^t \left\|g_{\tau} \right\|^2}}$, $\bullet$~\algname{\textcolor{PineGreen}{SGD-decay}} with stepsize $\nicefrac{\beta}{\Delta \sqrt{t+1}}$, and $\bullet$~\algname{\textcolor{PineGreen}{SGD-constant}} with step size $\nicefrac{\beta}{\Delta}$. Similarly, for \algname{KATE} we use stepsize $\frac{\beta m_t}{b_t^2}$ where $m_t^2 = \eta b_t^2 + \sum_{\tau = 0}^t \nicefrac{g_{\tau}^2}{b_{\tau}^2}$ and $b_t^2 = \Delta + \sum_{\tau = 0}^t g_{\tau}^2$. Here, we choose $\beta = f(w_0) - f(w^*)$ and vary $\Delta$ in $\{10^{-8}, 10^{-6}, 10^{-4}, 10^{-2}, 1, 10^{2}, 10^{4}, 10^{6}, 10^{8} \}$. 

In Figures \ref{fig:initial_log10000}, \ref{fig:initial_log50000}, and \ref{fig:initial_log100000}, we plot the functional value $f(w_t)$ (on the $y$-axis) after $10^4, 5 \times 10^4$, and $10^5$ iterations, respectively. In theory, the convergence of \algname{\textcolor{PineGreen}{SGD}} requires the knowledge of smoothness constant $L$. Therefore, when the $\Delta$ is small (hence the stepsize is large), \algname{\textcolor{PineGreen}{SGD-decay}} and \algname{\textcolor{PineGreen}{SGD-constant}} diverge. However, the adaptive algorithms \algname{\textcolor{PineGreen}{KATE}}, \algname{\textcolor{PineGreen}{AdaGrad}}, and \algname{\textcolor{PineGreen}{AdaGradNorm}} can auto-tune themselves and converge for a wide range of $\Delta$s (even when the $\Delta$ is too small).  As we observe in Figure \ref{fig:initial_log},  when the $\Delta$ is small, \algname{\textcolor{PineGreen}{KATE}} outperforms all other algorithms. For instance, when $\Delta = 10^{-8}$, \algname{\textcolor{PineGreen}{KATE}} achieves a functional value of $10^{-3}$ after only $10^4$ iterations (see Figure \ref{fig:initial_log10000}), while other algorithms fail to achieve this even after $10^5$ iterations (see Figure \ref{fig:initial_log100000}). Furthermore, \algname{\textcolor{PineGreen}{KATE}} performs as well as \algname{\textcolor{PineGreen}{AdaGrad}} and better than other algorithms when the $\Delta$ is large. \textit{In particular, this experiment highlights that \algname{\textcolor{PineGreen}{KATE}} is robust to initialization $\Delta$.}

\begin{figure}[t]
\centering
\begin{subfigure}[b]{.32\textwidth}
    \centering
    \includegraphics[width=\textwidth]{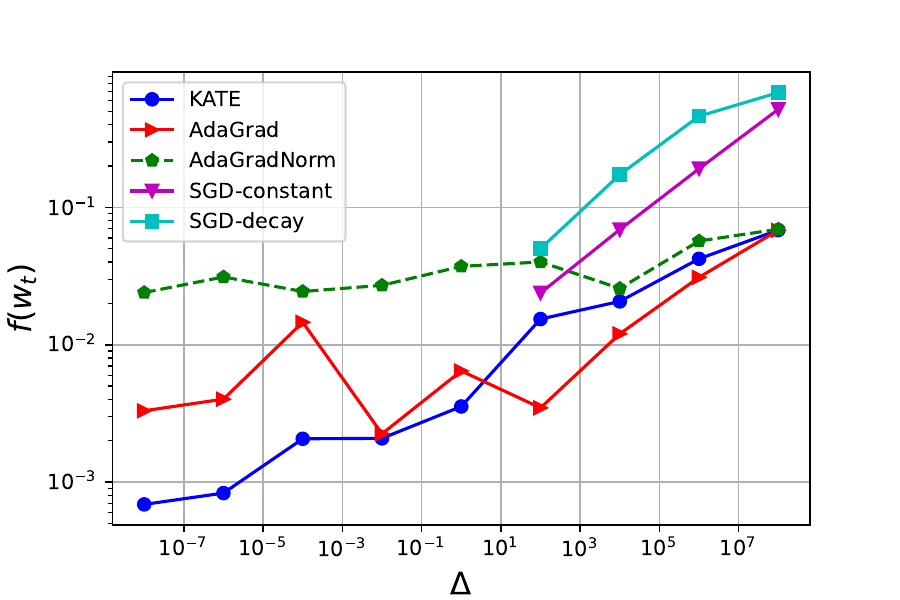}
    \caption{ Plot of $f(w_t)$}\label{fig:initial_log10000}
\end{subfigure}
\begin{subfigure}[b]{0.32\textwidth}
    \centering
    \includegraphics[width=\textwidth]{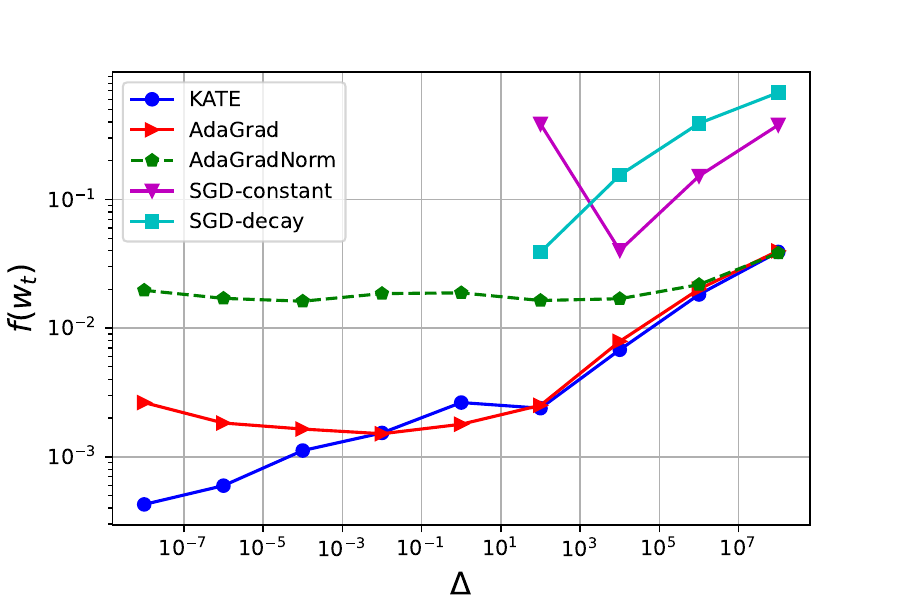}
    \caption{ Plot of Accuracy}\label{fig:initial_log50000}
\end{subfigure}
\begin{subfigure}[b]{0.32\textwidth}
    \centering
    \includegraphics[width=\textwidth]{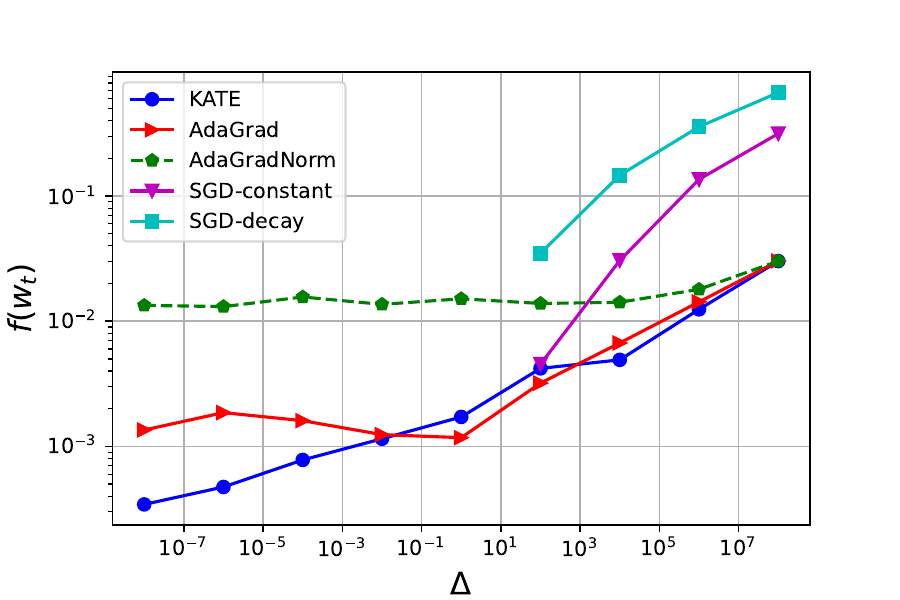}
    \caption{Plot of Gradient Norm}\label{fig:initial_log100000}
\end{subfigure}
    \caption{Comparison of \algname{\textcolor{PineGreen}{KATE}} with \algname{\textcolor{PineGreen}{AdaGrad}}, \algname{\textcolor{PineGreen}{AdaGradNorm}}, \algname{\textcolor{PineGreen}{SGD-decay}} and \algname{\textcolor{PineGreen}{SGD-constant}} for different values of $\Delta$ (on $x$-axis for logistic regression model. Figure \ref{fig:initial_log10000}, \ref{fig:initial_log50000} and \ref{fig:initial_log100000} plots the functional value $f(w_t)$ (on $y$-axis) after $10^4, 5 \times 10^4$, and $10^5$ iterations respectively.}\label{fig:initial_log}
\begin{subfigure}[b]{.32\textwidth}
    \centering
    \includegraphics[width=\textwidth]{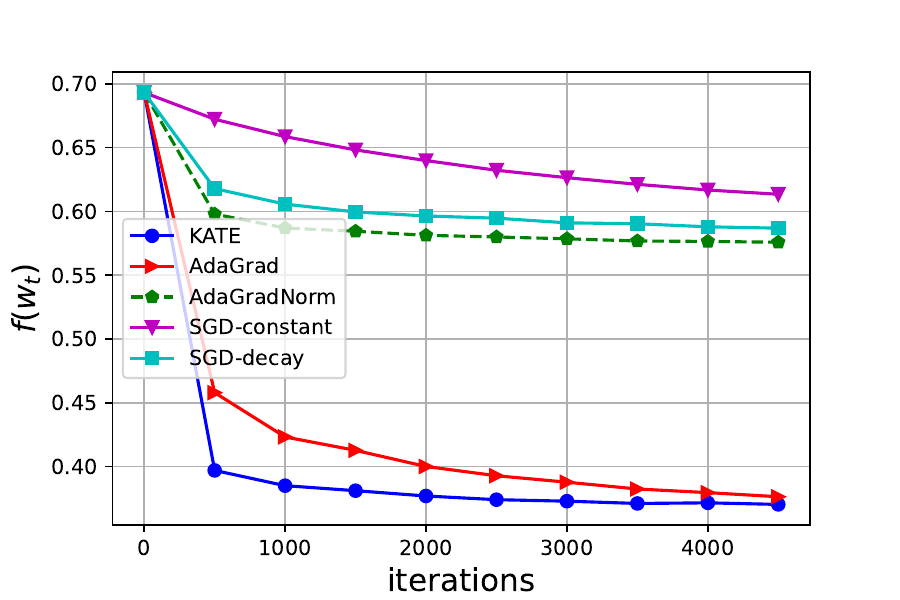}
    \caption{Dataset: heart}\label{fig:heart_fval}
\end{subfigure}
\begin{subfigure}[b]{0.32\textwidth}
    \centering
    \includegraphics[width=\textwidth]{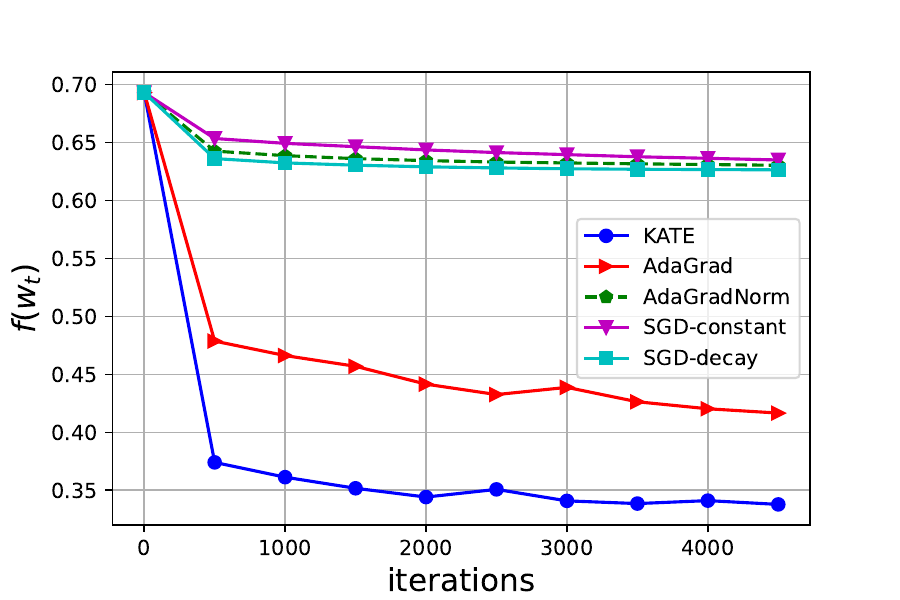}
    \caption{Dataset: australian}\label{fig:australian_fval}
\end{subfigure}
\begin{subfigure}[b]{0.32\textwidth}
    \centering
    \includegraphics[width=\textwidth]{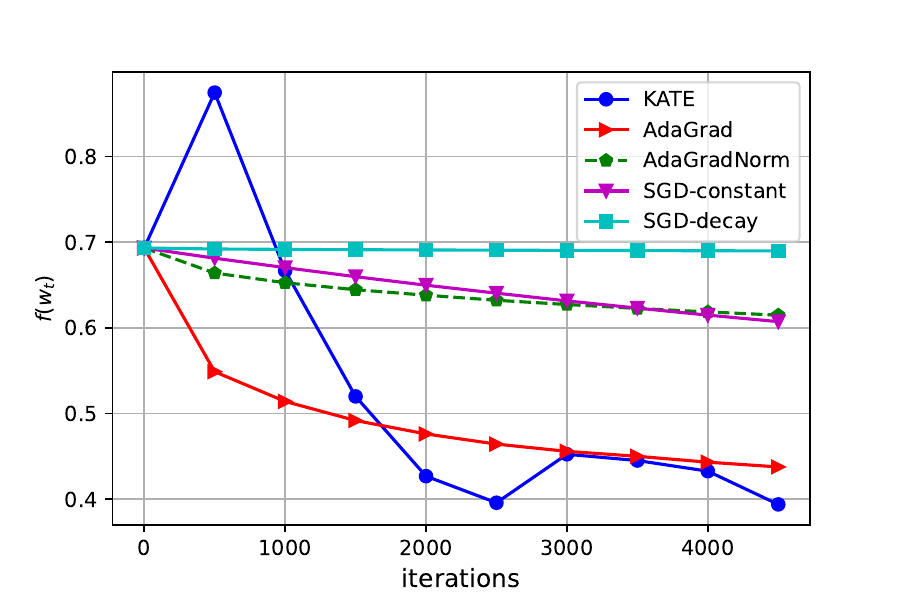}
    \caption{Dataset: splice}\label{fig:splice_fval}
\end{subfigure}
\begin{subfigure}[b]{.32\textwidth}
    \centering
    \includegraphics[width=\textwidth]{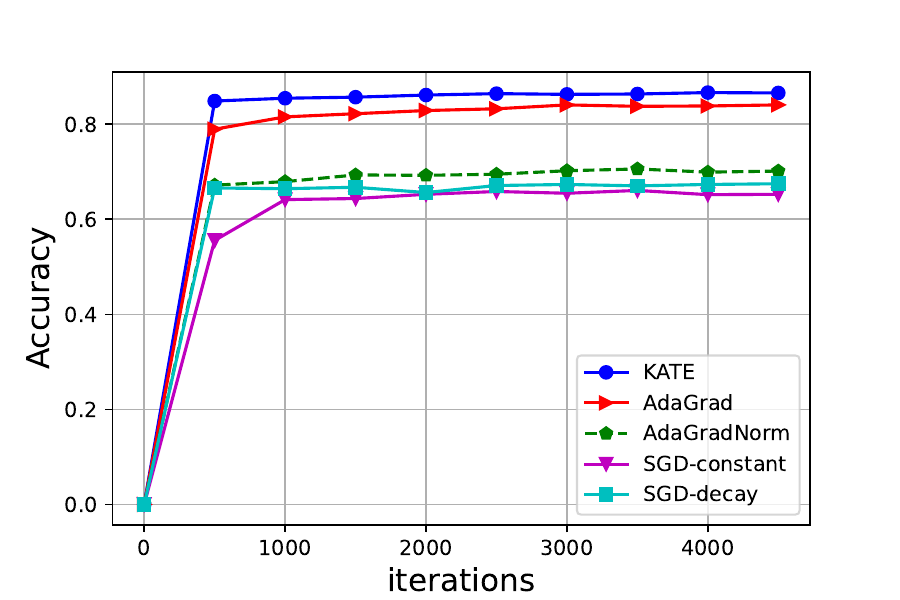}
    \caption{Dataset: heart}\label{fig:heart_accuracy}
\end{subfigure}
\begin{subfigure}[b]{0.32\textwidth}
    \centering
    \includegraphics[width=\textwidth]{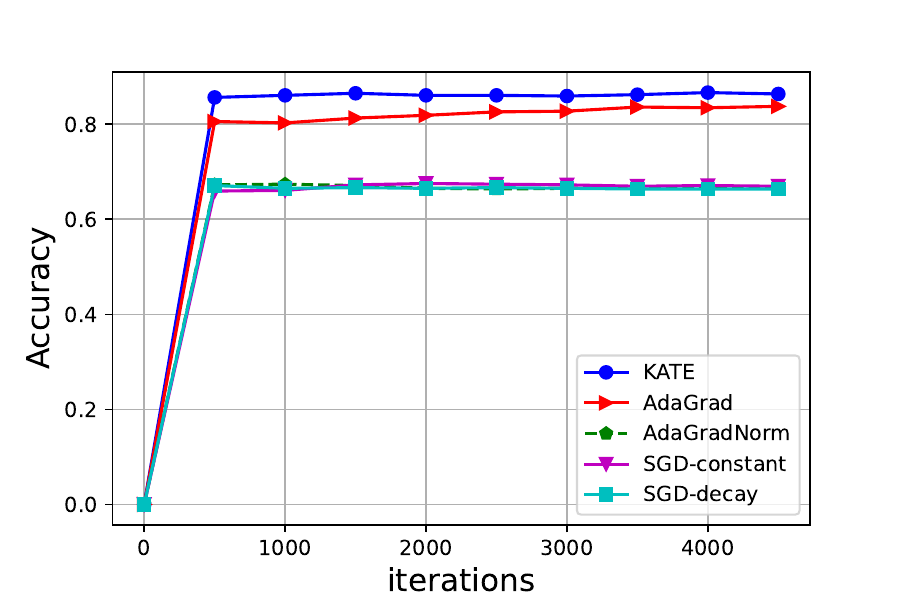}
    \caption{Dataset: australian}\label{fig:australian_accuracy}
\end{subfigure}
\begin{subfigure}[b]{0.32\textwidth}
    \centering
    \includegraphics[width=\textwidth]{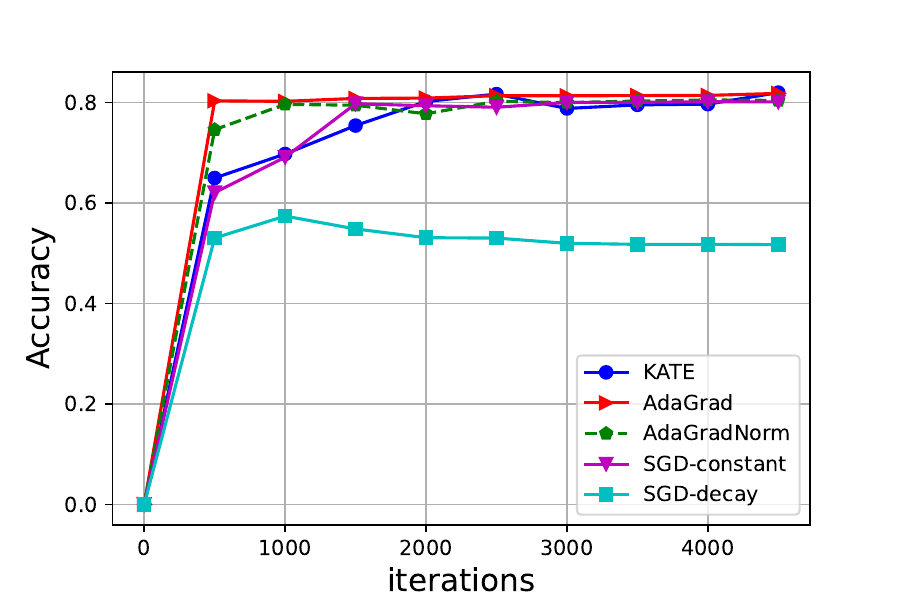}
    \caption{Dataset: splice}\label{fig:splice_accuracy}
\end{subfigure}
      \caption{Comparison of \algname{\textcolor{PineGreen}{KATE}} with \algname{\textcolor{PineGreen}{AdaGrad}}, \algname{\textcolor{PineGreen}{AdaGradNorm}}, \algname{\textcolor{PineGreen}{SGD-decay}} and \algname{\textcolor{PineGreen}{SGD-constant}} on datasets heart, australian, and splice from \href{https://www.csie.ntu.edu.tw/~cjlin/libsvmtools/datasets/binary.html}{LIBSVM}. Figures \ref{fig:heart_fval}, \ref{fig:australian_fval} and \ref{fig:splice_fval} plot the functional value $f(w_t)$, while \ref{fig:heart_accuracy}, \ref{fig:australian_accuracy} and \ref{fig:splice_accuracy} plot the accuracy on $y$-axis for $5,000$ iterations.}\label{fig:real_data}
      \vspace{-0.5cm}
\end{figure}

\subsubsection{Peformance of \algname{\textcolor{PineGreen}{KATE}} on Real Data} 
In this section, we examine \algname{KATE}'s performance on real data. We test \algname{KATE} on three datasets: heart, australian, and splice from the \href{https://www.csie.ntu.edu.tw/~cjlin/libsvmtools/datasets/binary.html}{LIBSVM} library \citep{chang2011libsvm}. The response variables $y_i$ of each of these datasets contain two classes, and we use them for binary classification tasks using a logistic regression model~\eqref{eq:logistic_reg}. We take $\eta = \nicefrac{1}{\left( \nabla f(w_0)\right)^2}$ for \algname{KATE} and tune $\beta$ in all the experiments. For tuning $\beta$, we do a grid search on the list $\{10^{-10}, 10^{-8}, 10^{-6}, 10^{-4}, 10^{-2}, 1 \}$. Similarly, we tune stepsizes for other algorithms. We take $5$ trials for each of these algorithms and plot the mean of their trajectories. 

We plot the functional value $f(w_t)$ (i.e. loss function) in Figures \ref{fig:heart_fval}, \ref{fig:australian_fval} and \ref{fig:splice_fval}, whereas Figures \ref{fig:heart_accuracy}, \ref{fig:australian_accuracy} and \ref{fig:splice_accuracy} plot the corresponding accuracy of the weight vector $w_t$ on the $y$-axis for $5,000$ iterations. We observe that \algname{KATE} performs superior to all other algorithms, even on real datasets.

\begin{figure*}[h]
   \centering
    \begin{minipage}[htp]{0.28\textwidth}
        \centering
        \includegraphics[width=1\linewidth]{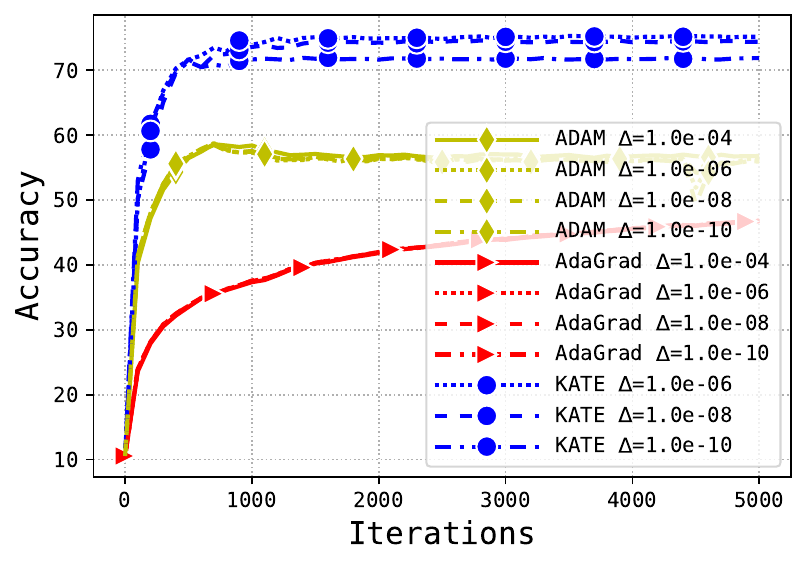}
        \includegraphics[width=1\linewidth]{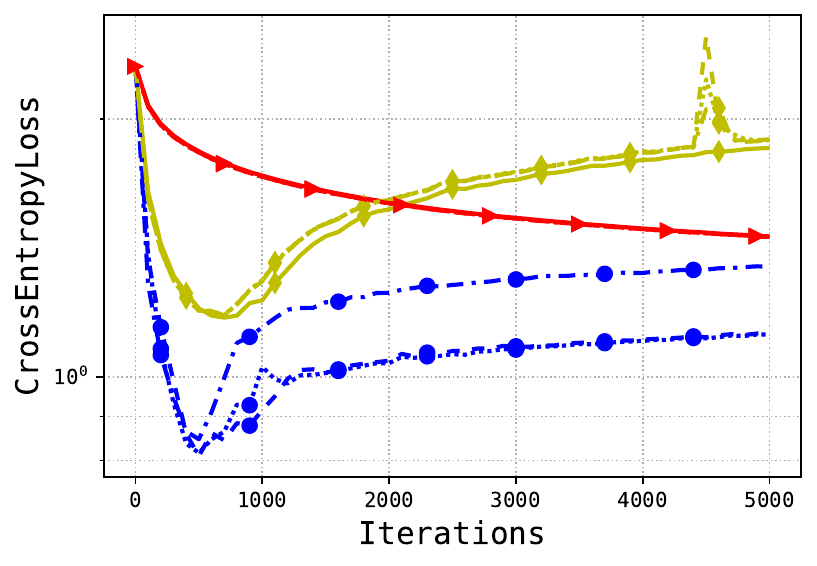}
        \caption{CIFAR10: $\eta = 0$}
        \label{fig:resnet-0}
    \end{minipage}
\hfill
    \begin{minipage}[htp]{0.28\textwidth}
        \centering
        \includegraphics[width=1\linewidth]{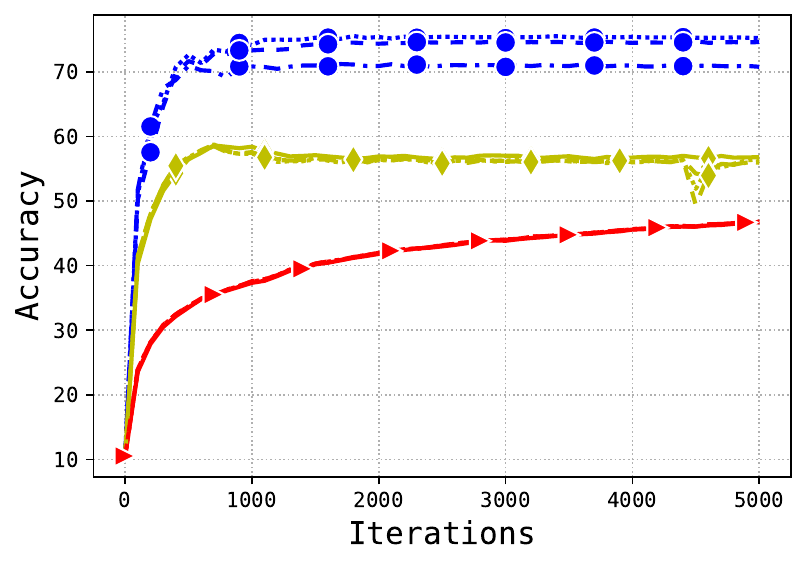}
        \includegraphics[width=1\linewidth]{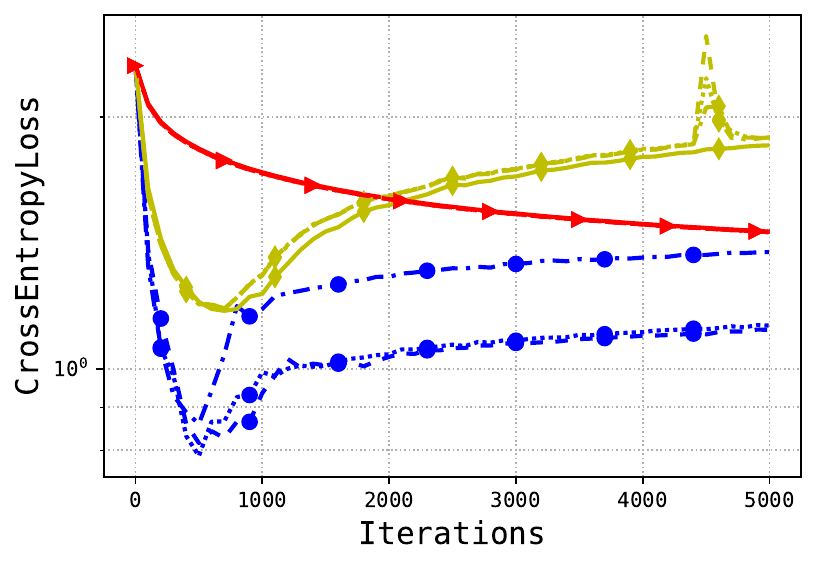}
        \caption{CIFAR10: $\eta = 0.001$}
        \label{fig:resnet-0.001}
    \end{minipage}
\hfill
    \begin{minipage}[htp]{0.28\textwidth}
        \centering
        \includegraphics[width=1\linewidth]{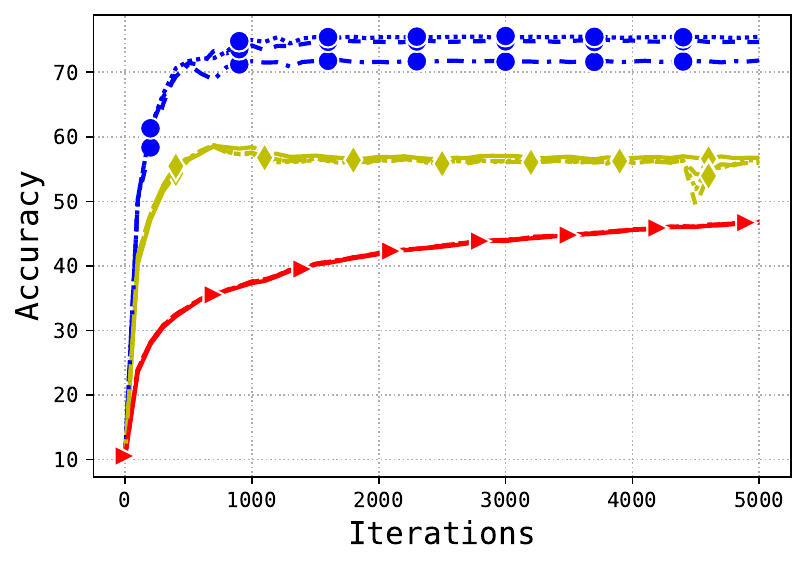}
        \includegraphics[width=1\linewidth]{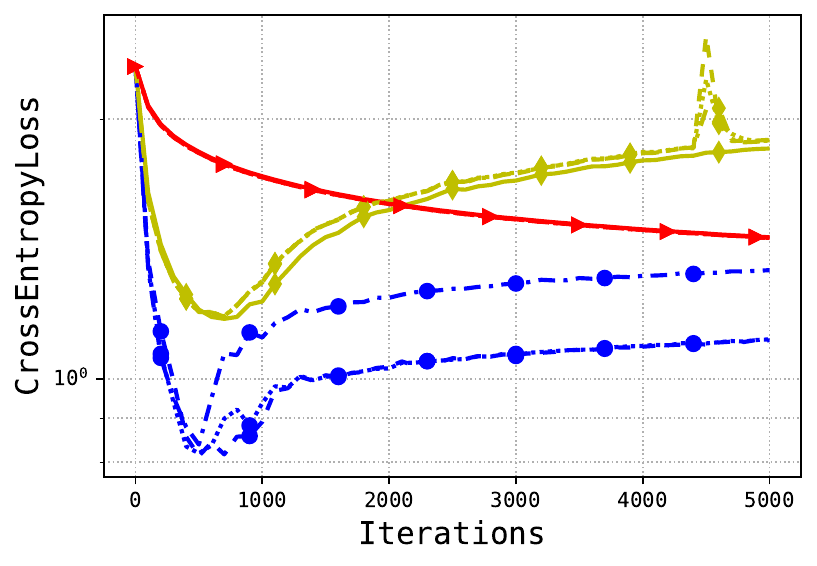}
        \caption{CIFAR10: $\eta = 0.1$}
        \label{fig:resnet-0.1}
    \end{minipage}
    \vspace{-0.3cm}
\end{figure*}

\begin{figure*}[htp]
   \centering
    \begin{minipage}[htp]{0.28\textwidth}
        \centering
        \includegraphics[width=1\linewidth]{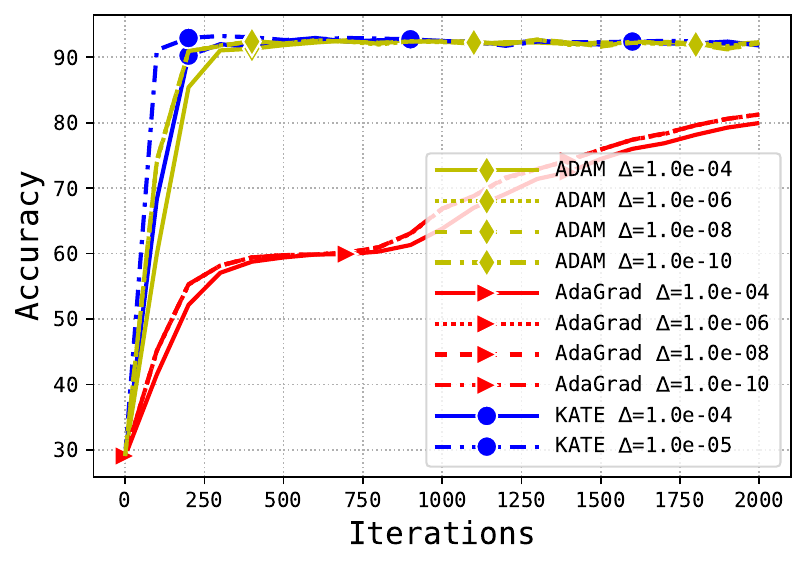}
        \includegraphics[width=1\linewidth]{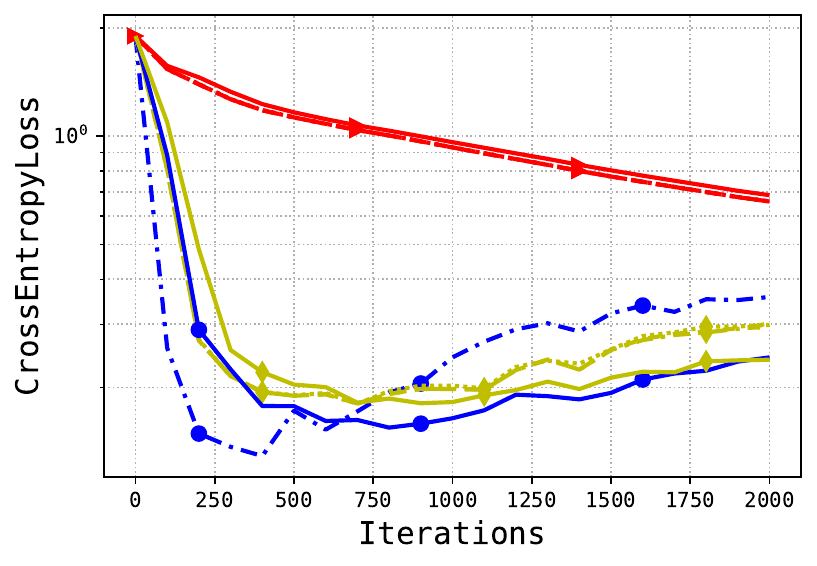}
        \caption{Emotion: $\eta=0$}
        \label{fig:bert-0}
    \end{minipage}
\hfill
    \begin{minipage}[htp]{0.28\textwidth}
        \centering
        \includegraphics[width=1\linewidth]{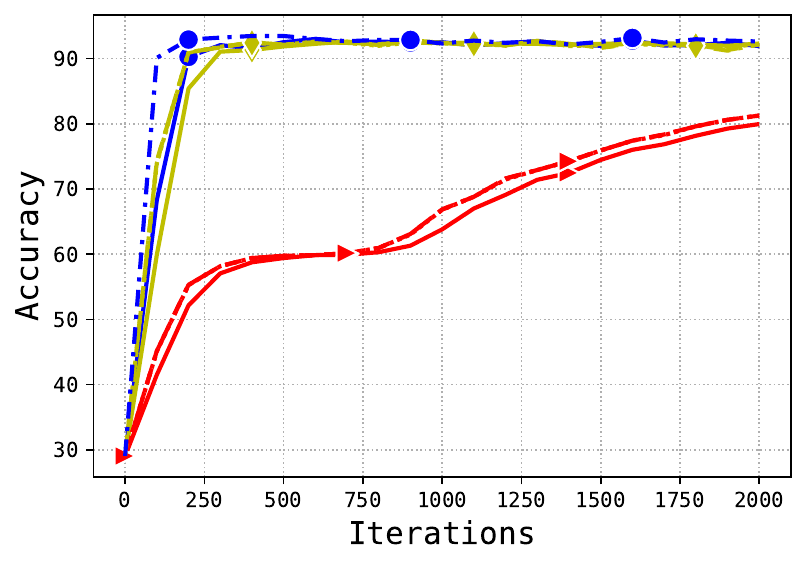}
        \includegraphics[width=1\linewidth]{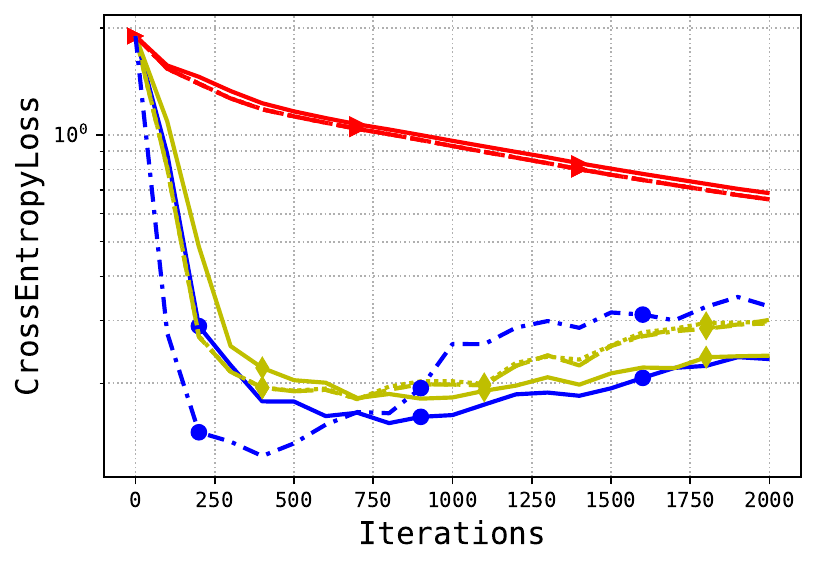}
        \caption{Emotion: $\eta = 0.001$}
        \label{fig:bert-0.001}
    \end{minipage}
\hfill
    \begin{minipage}[htp]{0.28\textwidth}
        \centering
        \includegraphics[width=1\linewidth]{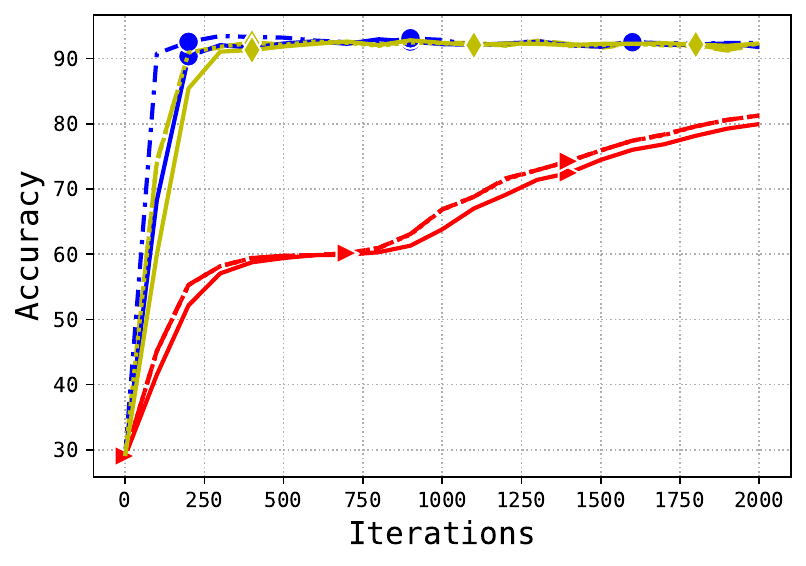}
        \includegraphics[width=1\linewidth]{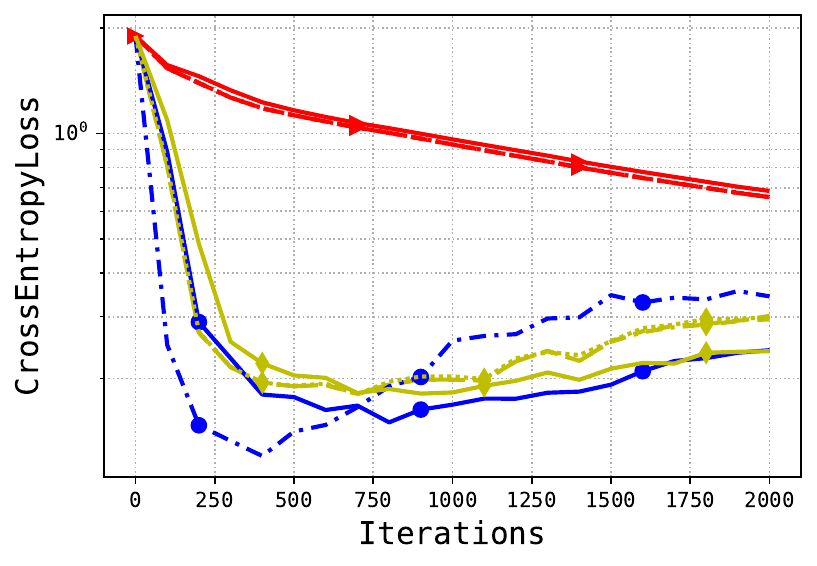}
        \caption{Emotion: $\eta = 0.1$}
        \label{fig:bert-0.1}
    \end{minipage}
    \vspace{-.7cm}
\end{figure*}
\vspace{-.25cm}
\subsection{Training of Neural Networks}\label{section:neuralnet}

In this section, we compare the performance of \algname{KATE}, \algname{AdaGrad} and \algname{Adam} on two tasks, i.e. training ResNet18~\citep{he2016deep} on the CIFAR10 dataset~\citep{krizhevsky2009learning} and BERT~\citep{devlin2018bert} fine-tuning on the emotions dataset~\citep{saravia-etal-2018-carer} from the Hugging Face Hub.
We use internal cluster with the following hardware: AMD EPYC 7552 48-Core Processor, 512GiB RAM, NVIDIA A100 40GB GPU, 200gb user storage space.
\vspace{-.35cm}
\paragraph{General comparison.} We choose standard parameters for \algname{Adam} ($\beta_1 = 0.9$ and $\beta_2 = 0.999$) that are default values in PyTorch and select the learning rate of $10^{-5}$ for all considered methods. We run \algname{KATE} with different values of $\eta \in \{0, 10^{-1}, 10^{-2}\}$. For the image classification task, we normalize the images (similar to \citet{horvath2020better}) and use a mini-batch size of 500. For the BERT fine-tuning, we use a mini-batch size 160 for all methods.

Figures~\ref{fig:resnet-0}-\ref{fig:bert-0.1} report the evolution of top-1 accuracy and cross-entropy loss (on the $y$-axis) calculated on the test data. For the image classification task, we observe that \algname{KATE} with different choices of $\eta$ outperforms \algname{Adam} and \algname{AdaGrad}. Finally, we also observe that \algname{KATE} performs comparably to \algname{Adam} on the BERT fine-tuning task and is better than \algname{AdaGrad}. These preliminary results highlight the potential of \algname{KATE} to be applied for training neural networks for different tasks. For BERT each run takes about 35 minutes, and 25 minutes for ResNet.

\begin{wrapfigure}{r}{0.3\textwidth}
    \centering
    \vspace{-6mm}
    \includegraphics[width=\linewidth]{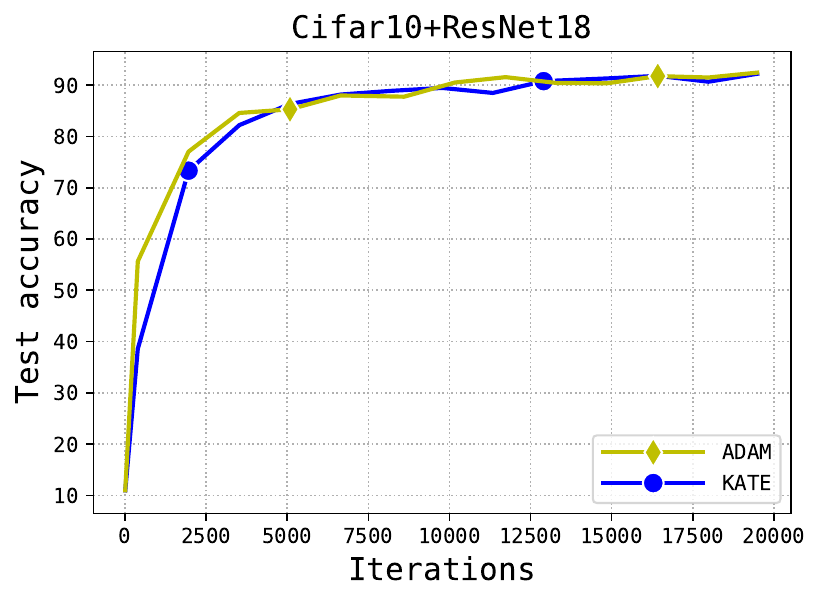}
    \caption{\small Cifar10: $\eta = 0.001$}
    \label{fig:bert-tuned}
    \includegraphics[width=\linewidth]{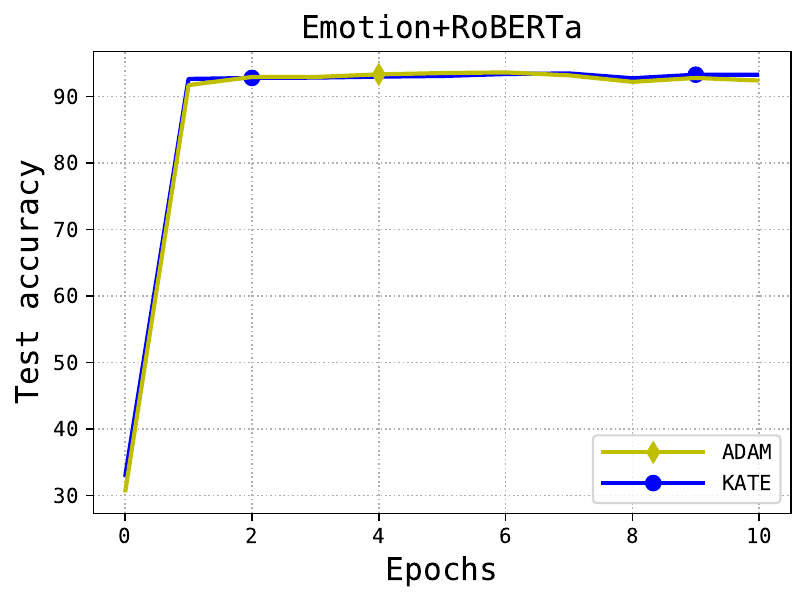}
    \caption{\small Emotion: $\eta = 0.001$}
    \label{fig:resnet-tuned}
    \vspace{-5mm}
\end{wrapfigure}

\paragraph{Hyper-parameters tuning.} Next, we compare baselines presented in~\cite{saravia-etal-2018-carer} for emotions classification and \cite{zhang2019lookahead} for image classification. These papers provide efficient setups for learning rates and learning rate schedulers that are reasonable to compare with. \cite{saravia-etal-2018-carer} performs a search of efficient learning rate and uses a linear learning rate scheduler with warmup for \algname{Adam} optimizer. A different learning rate (1e-5), $\Delta$=1e-5 and the same scheduler applied for \algname{KATE} lead to the same performance, see Figure \ref{fig:bert-tuned}. We would like to point out that it is challenging to find a reference for hyper-parameters for a certain setup. Thus, to fairly compare with~\cite{saravia-etal-2018-carer} we use distilroberta-base model. \cite{zhang2019lookahead} did a grid search for an efficient learning rate and used a multi-step scheduler for \algname{Adam} optimizer, decaying the learning rate by a factor of 5 at the 60th, 120th, and 160th epochs.~\cite{zhang2019lookahead} refers to~\cite{devries2017improved} for the code implementing special techniques, namely data augmentation and cutout to achieve higher accuracy. A different learning rate (1e-3), the same scheduler and $\Delta$=1e-3 applied for \algname{KATE} demonstrates comparable performance, see Figure \ref{fig:resnet-tuned}. For BERT each run takes about 20 minutes, while 100 minutes for ResNet.


\begin{ack}
We thank Francesco Orabona and Dmitry Kamzolov for the pointers to the related works that we missed while preparing the first version of this paper. We also thank anonymous reviewers for their useful feedback and suggestions.
\end{ack}

\small{\bibliography{ref}}


\newpage
\appendix
\part*{Supplementary Material}


\tableofcontents

\newpage
\section{Technical Lemmas}\label{section:technical_lemma}
\vspace{.5cm}
\begin{lemma}[AM-GM]\label{lemma:AM-GM} For $\lambda > 0$ we have
    \begin{eqnarray}\label{eq:AM-GM}
        ab \leq \frac{\lambda}{2}a^2 + \frac{1}{2 \lambda} b^2.
    \end{eqnarray}
\end{lemma}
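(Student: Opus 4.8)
The plan is to obtain the bound directly from the nonnegativity of a perfect square. First I would write down the elementary inequality $\left(\sqrt{\lambda}\,a - \tfrac{1}{\sqrt{\lambda}}\,b\right)^2 \geq 0$, which makes sense precisely because $\lambda > 0$ allows taking the real square root $\sqrt{\lambda}$. Expanding the square gives $\lambda a^2 - 2ab + \tfrac{1}{\lambda}b^2 \geq 0$.

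Then the remaining step is pure rearrangement: transferring $2ab$ to the other side yields $2ab \leq \lambda a^2 + \tfrac{1}{\lambda}b^2$, and dividing by $2$ produces the claimed inequality $ab \leq \tfrac{\lambda}{2}a^2 + \tfrac{1}{2\lambda}b^2$. There is essentially no obstacle here; the only point worth noting is that the statement places no sign restriction on $a$ and $b$, so if a two-sided (absolute value) version were needed one would additionally start from $\left(\sqrt{\lambda}\,a + \tfrac{1}{\sqrt{\lambda}}\,b\right)^2 \geq 0$ to get $-ab \leq \tfrac{\lambda}{2}a^2 + \tfrac{1}{2\lambda}b^2$, but for the one-sided bound as stated the single square suffices.
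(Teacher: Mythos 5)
Your proof is correct: the perfect-square argument $\left(\sqrt{\lambda}\,a - \tfrac{1}{\sqrt{\lambda}}\,b\right)^2 \geq 0$ followed by expansion and rearrangement is exactly the standard derivation of this weighted AM--GM (Young-type) inequality, and your use of $\lambda > 0$ to justify the square root is the right observation. The paper itself states this lemma without proof, treating it as a known elementary fact, so there is no alternative argument to compare against; your write-up would serve as the omitted proof.
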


\begin{lemma}[Cauchy-Schwarz Inequality]\label{lemma:CS}
    For $a_1, \cdots, a_n, b_1, \cdots, b_n \in \R$ we have
    \begin{eqnarray}\label{eq:CS}
        \left( \sum_{i = 1}^n a_i^2\right) \left( \sum_{i = 1}^n b_i^2\right) & \geq & \left( \sum_{i = 1}^n a_i b_i\right)^2.
    \end{eqnarray}
\end{lemma}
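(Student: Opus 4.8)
\textbf{Proof proposal for Lemma~\ref{lemma:CS} (Cauchy--Schwarz inequality).}

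The plan is to give the standard ``discriminant'' argument, which is the shortest self-contained route and does not require any of the analytic machinery developed elsewhere in the paper. First I would dispose of the degenerate case: if $b_i = 0$ for all $i$, then both sides of \eqref{eq:CS} equal $0$ and there is nothing to prove. So assume $\sum_{i=1}^n b_i^2 > 0$. Then I would introduce the real quadratic polynomial
\begin{equation*}
    P(\lambda) \;=\; \sum_{i=1}^n (a_i - \lambda b_i)^2 \;=\; \left(\sum_{i=1}^n b_i^2\right)\lambda^2 \;-\; 2\left(\sum_{i=1}^n a_i b_i\right)\lambda \;+\; \sum_{i=1}^n a_i^2,
\end{equation*}
where the second equality is just expansion. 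Since $P(\lambda)$ is a sum of squares, $P(\lambda) \geq 0$ for every $\lambda \in \R$, and its leading coefficient $\sum b_i^2$ is strictly positive, so $P$ is a genuine upward-opening parabola that is nowhere negative.

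The key step is then to read off the conclusion from the discriminant: a quadratic $A\lambda^2 - 2B\lambda + C$ with $A > 0$ is nonnegative on all of $\R$ if and only if its discriminant $4B^2 - 4AC$ is $\leq 0$, i.e. $B^2 \leq AC$. Applying this with $A = \sum b_i^2$, $B = \sum a_i b_i$, $C = \sum a_i^2$ gives exactly $\left(\sum_{i=1}^n a_i b_i\right)^2 \leq \left(\sum_{i=1}^n a_i^2\right)\left(\sum_{i=1}^n b_i^2\right)$, which is \eqref{eq:CS}. (Alternatively, one can avoid invoking the discriminant criterion and instead plug in the minimizing value $\lambda^\star = \left(\sum a_i b_i\right)/\left(\sum b_i^2\right)$ into $P(\lambda) \geq 0$ and simplify; this yields the same inequality after clearing denominators, using positivity of $\sum b_i^2$.)

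There is essentially no obstacle here --- the lemma is entirely elementary and this proof is a few lines. The only point requiring the tiniest care is the degenerate case $\sum b_i^2 = 0$, which must be handled separately because the discriminant argument (and the choice of $\lambda^\star$) implicitly uses $\sum b_i^2 > 0$; but as noted, in that case the inequality is trivial. If one prefers, the whole statement can also be obtained from the AM--GM inequality of Lemma~\ref{lemma:AM-GM} by a normalization trick, but the quadratic-form argument above is cleaner and I would present that.
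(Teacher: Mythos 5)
Your proof is correct: the degenerate case is handled, the expansion of $P(\lambda)=\sum_{i=1}^n(a_i-\lambda b_i)^2$ is right, and the discriminant criterion for a nonnegative upward-opening quadratic yields exactly \eqref{eq:CS}. The paper itself states Lemma~\ref{lemma:CS} without proof, treating it as a standard fact, so there is no in-paper argument to compare against; your self-contained quadratic-form argument is the standard one and is perfectly adequate.
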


\begin{lemma}[Holder's Inequality]
    Suppose $X, Y$ are two random variables and $p, q > 1$ satisfy $\frac{1}{p} + \frac{1}{q} = 1$. Then 
    \begin{equation}\label{eq:holder}
        \E \left( |XY| \right) \leq \left( \E \left(|X|^p \right)\right)^{\frac{1}{p}} \left( \E \left( |Y|^q \right)\right)^{\frac{1}{q}}.
    \end{equation}
\end{lemma}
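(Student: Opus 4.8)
The plan is to reduce the inequality to the scalar \emph{Young's inequality} and then take expectations. First I would dispose of the degenerate cases: if $\E(|X|^p) = 0$ then $X = 0$ almost surely, so $\E(|XY|) = 0$ and the bound is trivial (similarly if $\E(|Y|^q) = 0$); and if either $\E(|X|^p)$ or $\E(|Y|^q)$ is $+\infty$, the right-hand side of \eqref{eq:holder} is $+\infty$ and there is nothing to prove. Hence I may assume $0 < \E(|X|^p) < \infty$ and $0 < \E(|Y|^q) < \infty$.

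Next I would establish Young's inequality: for all $u, v \geq 0$ and conjugate exponents $p, q > 1$ with $\tfrac1p + \tfrac1q = 1$,
\[
  uv \leq \frac{u^p}{p} + \frac{v^q}{q}.
\]
For $u, v > 0$ this follows from the concavity of $\log$ on $(0,\infty)$: $\log\!\left(\tfrac1p u^p + \tfrac1q v^q\right) \geq \tfrac1p \log(u^p) + \tfrac1q \log(v^q) = \log(uv)$, and exponentiating yields the claim; the cases $u = 0$ or $v = 0$ are immediate.

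Then I would normalize. Set $a \coloneqq \left(\E(|X|^p)\right)^{1/p}$ and $b \coloneqq \left(\E(|Y|^q)\right)^{1/q}$, both finite and positive, and apply Young's inequality pointwise with $u = |X|/a$ and $v = |Y|/b$:
\[
  \frac{|XY|}{ab} \leq \frac{1}{p}\cdot\frac{|X|^p}{a^p} + \frac{1}{q}\cdot\frac{|Y|^q}{b^q}.
\]
Taking expectations and using $a^p = \E(|X|^p)$, $b^q = \E(|Y|^q)$, the right-hand side collapses to $\tfrac1p + \tfrac1q = 1$, so $\E(|XY|) \leq ab = \left(\E(|X|^p)\right)^{1/p}\left(\E(|Y|^q)\right)^{1/q}$, which is exactly \eqref{eq:holder}.

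As for difficulty: there is essentially no real obstacle here --- the analytic heart is the one-line convexity argument for Young's inequality. The only points requiring a little care are the bookkeeping around the degenerate cases (zero or infinite moments) and checking that the pointwise bound is a legitimate inequality between (measurable, nonnegative) random variables before integrating.
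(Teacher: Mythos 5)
Your proof is correct: the reduction to Young's inequality via the concavity of $\log$, followed by the normalization $u = |X|/a$, $v = |Y|/b$ and integration, is the canonical argument, and you handle the degenerate cases (zero or infinite moments) in the right order. Note that the paper states this lemma as a standard textbook fact and provides no proof of its own, so there is nothing to compare against; your write-up simply supplies the standard derivation in full.
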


\begin{lemma}[Jensen's Inequality]\label{lemma:Jensen}
    For a convex function $g: \R^d \to \R$ and a random variable $X$ such that $\E{\left( \Psi(X)\right)}$ and $\Psi \left( \E{\left(X \right)}\right)$ are finite, we have 
    \begin{equation}\label{eq:Jensen}
        \Psi \left( \E{\left(X \right)}\right) \leq \E{\left( \Psi(X)\right)}.
    \end{equation}
\end{lemma}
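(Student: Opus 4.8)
The plan is to prove Jensen's inequality via the supporting hyperplane (subgradient) characterization of convexity. First I would recall that since $\Psi : \R^d \to \R$ is convex and finite-valued on all of $\R^d$, it is continuous, and at every point $\mu \in \R^d$ it admits a subgradient: there exists $v \in \R^d$ with
\begin{equation*}
    \Psi(x) \;\geq\; \Psi(\mu) + \la v, x - \mu \ra \qquad \text{for all } x \in \R^d.
\end{equation*}
I would apply this at the point $\mu \eqdef \E(X)$, which is well-defined and finite by hypothesis.

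Next I would substitute the random vector $X$ in place of $x$ in the supporting-hyperplane inequality, obtaining the (almost sure) bound $\Psi(X) \geq \Psi(\mu) + \la v, X - \mu \ra$. Taking expectations of both sides is legitimate: $\E(\Psi(X))$ is finite by assumption, and $\E(\la v, X - \mu\ra) = \la v, \E(X) - \mu\ra = 0$ since $\E(X)$ is finite. This yields
\begin{equation*}
    \E\bigl(\Psi(X)\bigr) \;\geq\; \Psi(\mu) + \la v, \E(X) - \mu \ra \;=\; \Psi\bigl(\E(X)\bigr),
\end{equation*}
which is exactly \eqref{eq:Jensen}.

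The only point requiring care is the existence of the subgradient $v$ at $\mu$, and this is precisely where the hypothesis that $\Psi$ is everywhere real-valued is used: the effective domain of $\Psi$ is all of $\R^d$, so $\mu$ lies in its interior, where convex functions are guaranteed to have nonempty subdifferentials. One can establish this either by applying a separating hyperplane theorem to the epigraph of $\Psi$ and its boundary point $(\mu, \Psi(\mu))$, or by a one-dimensional argument using that the one-sided directional derivatives of a convex function exist and bound a valid slope. A minor bookkeeping point is that $\Psi(X)$ is measurable, which follows from continuity of $\Psi$ and measurability of $X$. Since the result is classical, I expect no genuine obstacle; the write-up mainly consists of organizing these standard facts in the order above.
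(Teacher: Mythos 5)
Your subgradient (supporting-hyperplane) argument is correct and complete: existence of a subgradient at $\mu=\E(X)$ follows because $\Psi$ is convex and finite on all of $\R^d$, and taking expectations of the pointwise bound $\Psi(X)\ge \Psi(\mu)+\la v, X-\mu\ra$ is justified exactly as you say. The paper states this lemma as a classical technical fact and gives no proof of its own, so there is nothing to compare against; the only remark worth making is that the lemma's statement has a typo (the function is introduced as $g$ but the inequality is written for $\Psi$), which you have implicitly and correctly resolved by proving the claim for $\Psi$.
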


\begin{lemma}\label{lemma:inequality}
For $a_1, a_2, \cdots, a_n \geq 0$ and $b_1, b_2, \cdots, b_n > 0$ we have
\begin{eqnarray}
    \sum_{i = 1}^n \frac{a_i}{\sqrt{b_i}} \geq \frac{\sum_{i =1}^n a_i}{\sqrt{\sum_{i = 1}^n b_i}}. \label{eq:inequality}
\end{eqnarray}
\end{lemma}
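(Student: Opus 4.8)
The plan is to prove the inequality termwise, comparing each summand on the left to the corresponding summand on the right with a common denominator. Since every $b_j$ is strictly positive, for each fixed $i$ we have $b_i \leq \sum_{j=1}^n b_j$, and taking square roots gives $\sqrt{b_i} \leq \sqrt{\sum_{j=1}^n b_j}$. Because $a_i \geq 0$, dividing by the larger (and positive) denominator can only decrease the quotient, so $\frac{a_i}{\sqrt{b_i}} \geq \frac{a_i}{\sqrt{\sum_{j=1}^n b_j}}$ for every $i \in [n]$.

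Summing these $n$ inequalities over $i$ and pulling the constant $\bigl(\sum_{j=1}^n b_j\bigr)^{-1/2}$ out of the sum on the right-hand side yields $\sum_{i=1}^n \frac{a_i}{\sqrt{b_i}} \geq \frac{\sum_{i=1}^n a_i}{\sqrt{\sum_{j=1}^n b_j}}$, which is exactly \eqref{eq:inequality}. There is essentially no obstacle here; the only points that require (and use) the stated hypotheses are that $b_j > 0$ for all $j$ guarantees the common denominator dominates each individual $\sqrt{b_i}$, and $a_i \geq 0$ guarantees the termwise comparison points in the desired direction. One could alternatively route the argument through the Cauchy–Schwarz inequality (Lemma~\ref{lemma:CS}), but this would not be shorter or cleaner than the direct termwise bound, so I would present the latter.
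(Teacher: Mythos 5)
Your proof is correct, and it is genuinely simpler than the one in the paper. You bound each term directly: since all $b_j>0$, each $b_i$ is at most $\sum_j b_j$, hence $\sqrt{b_i}\le\sqrt{\sum_j b_j}$, and nonnegativity of $a_i$ lets you replace each denominator $\sqrt{b_i}$ by the larger common denominator before summing. The paper instead squares the left-hand side, discards the nonnegative cross terms to get $\bigl(\sum_i a_i/\sqrt{b_i}\bigr)^2\ge\sum_i a_i^2/b_i$, and then invokes the Cauchy--Schwarz inequality (Lemma~\ref{lemma:CS}) in the form $\bigl(\sum_i a_i^2/b_i\bigr)\bigl(\sum_i b_i\bigr)\ge\bigl(\sum_i a_i\bigr)^2$ before dividing and taking square roots. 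Both arguments use exactly the stated hypotheses and prove exactly the stated inequality; yours avoids the squaring-and-Cauchy--Schwarz detour entirely and is the cleaner route for this particular statement, while the paper's intermediate bound $\sum_i a_i/\sqrt{b_i}\ge\bigl(\sum_i a_i^2/b_i\bigr)^{1/2}$ is not needed elsewhere, so nothing is lost by your shortcut.
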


\begin{proof}
Expanding the LHS of \eqref{eq:inequality} we get
\begin{eqnarray}
    \left( \sum_{i = 1}^n \frac{a_i}{\sqrt{b_i}}\right)^2 & = & \sum_{i = 1}^n \frac{a_i^2}{b_i} + 2 \sum_{i \neq j} \frac{a_i a_j}{\sqrt{b_i b_j}} \notag\\
    & \geq & \sum_{i = 1}^n \frac{a_i^2}{b_i}. \label{eq:inequality_eq1}
\end{eqnarray}
The last inequality follows from $\frac{a_i}{\sqrt{b_i}} \geq 0$ for all $i \in [n]$. Now, using Cauchy-Schwarz Inequality \eqref{eq:CS}, we have
\begin{eqnarray}
    \left(\sum_{i = 1}^n \frac{a_i^2}{b_i} \right) \left( \sum_{i = 1}^n b_i \right) & \geq & \left( \sum_{i = 1}^n a_i \right)^2. \label{eq:inequality_eq2}
\end{eqnarray}
Then combining \eqref{eq:inequality_eq1} and \eqref{eq:inequality_eq2}, we get
\begin{eqnarray*}
    \left( \sum_{i = 1}^n \frac{a_i}{\sqrt{b_i}}\right)^2 \left( \sum_{i = 1}^n b_i \right) & \geq & \left( \sum_{i = 1}^n a_i \right)^2.
\end{eqnarray*}
Finally dividing both sides by $\sum_{i = 1}^n b_i$ and taking square root we get the desired result.
\end{proof}

\newpage

\begin{lemma}
For $k \in [d]$ and $t \geq 1$ we have
\begin{eqnarray}\label{eq:bound_surrogate}
    \Expt{\left( \frac{\beta \sqrt{\eta[k]}}{\sqrt{b_{t-1}^2[k] + \left(\nabla_k f(w_t) \right)^2 + \sigma^2}} - \nu_t[k] \right) \nabla_k f(w_t) g_t[k] }
    &\leq &  \frac{ \beta \sqrt{\eta[k]} \left( \nabla_k f(w_t) \right)^2}{2 \sqrt{b^2_{t-1}[k] + \left( \nabla f(w_t)\right)^2 + \sigma^2}} \notag \\
    && + 2\beta \sqrt{\eta[k]} \sigma \Expt{\frac{g^2_t[k]}{b_t^2[k]}}
\end{eqnarray}
\end{lemma}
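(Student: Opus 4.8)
The plan is to split the difference $\tfrac{\beta\sqrt{\eta[k]}}{\sqrt{b_{t-1}^2[k]+(\nabla_k f(w_t))^2+\sigma^2}}-\nu_t[k]$ into a ``deterministic surrogate'' part and a ``noise'' part, and to control each separately using the fact that $m_t[k]\le \sqrt{\eta[k]}\,b_t[k] + (\text{something small})$. Concretely, first I would recall from \eqref{eq:KATEstep} that $\nu_t[k]=\beta m_t[k]/b_t^2[k]$ with $m_t^2[k]=\eta[k]b_t^2[k]+\sum_{\tau=0}^t g_\tau^2[k]/b_\tau^2[k]$; dropping the second (nonnegative) sum gives the lower bound $m_t[k]\ge \sqrt{\eta[k]}\,b_t[k]$, hence $\nu_t[k]\ge \beta\sqrt{\eta[k]}/b_t[k]$. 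The quantity $\beta\sqrt{\eta[k]}/\sqrt{b_{t-1}^2[k]+(\nabla_k f(w_t))^2+\sigma^2}$ is exactly $\beta\sqrt{\eta[k]}/b_t[k]$ with the \emph{conditional expectation} of $g_t^2[k]$ substituted in place of the realized $g_t^2[k]$, since $\Expt{g_t^2[k]}=(\nabla_k f(w_t))^2 + \Expt{(g_t[k]-\nabla_k f(w_t))^2}\le (\nabla_k f(w_t))^2+\sigma^2$ by \ref{eq:BV}. So the left-hand side has the structure $\Expt{(\hat\nu_t[k]-\nu_t[k])\,\nabla_k f(w_t)\,g_t[k]}$ where $\hat\nu_t[k]$ uses a ``conditionally deterministic'' denominator.

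The key step is then a standard trick for handling the correlation between the adaptive stepsize and the current stochastic gradient. I would write $\hat\nu_t[k]-\nu_t[k] = \beta\sqrt{\eta[k]}\big(\tfrac{1}{b_t[k]} - \tfrac{1}{\sqrt{b_{t-1}^2[k]+\Expt{g_t^2[k]}}}\big) + \beta\big(\tfrac{\sqrt{\eta[k]}}{\sqrt{b_{t-1}^2[k]+\Expt{g_t^2[k]}}} - \tfrac{m_t[k]}{b_t^2[k]}\big)$ — wait, more cleanly: I would keep $\hat\nu_t[k] = \beta\sqrt{\eta[k]}/\sqrt{b_{t-1}^2[k]+\Expt{g_t^2[k]}}$ and bound $\hat\nu_t[k]-\nu_t[k] \le \hat\nu_t[k] - \beta\sqrt{\eta[k]}/b_t[k]$ (using $\nu_t[k]\ge\beta\sqrt{\eta[k]}/b_t[k]$). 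Then $\hat\nu_t[k]-\beta\sqrt{\eta[k]}/b_t[k] = \beta\sqrt{\eta[k]}\cdot\frac{b_t^2[k] - (b_{t-1}^2[k]+\Expt{g_t^2[k]})}{\sqrt{b_{t-1}^2[k]+\Expt{g_t^2[k]}}\,b_t[k]\,(b_t[k]+\sqrt{b_{t-1}^2[k]+\Expt{g_t^2[k]}})} = \beta\sqrt{\eta[k]}\cdot\frac{g_t^2[k]-\Expt{g_t^2[k]}}{(\cdots)}$. Multiplying by $\nabla_k f(w_t)g_t[k]$ and taking $\Expt{\cdot}$, the leading term involves $\Expt{(g_t^2[k]-\Expt{g_t^2[k]})\,g_t[k]}\nabla_k f(w_t)$, a third-moment-type object; I would bound $|\nabla_k f(w_t)g_t[k]|$ by AM--GM (Lemma~\ref{lemma:AM-GM}) as $\tfrac12(\nabla_k f(w_t))^2 + \tfrac12 g_t^2[k]$ combined with crude bounds $\sqrt{b_{t-1}^2[k]+\Expt{g_t^2[k]}}\ge \sqrt{\sigma^2}=\sigma$ on one denominator factor to produce the $\tfrac{\beta\sqrt{\eta[k]}(\nabla_k f(w_t))^2}{2\sqrt{b_{t-1}^2[k]+(\nabla f(w_t))^2+\sigma^2}}$ term, and $b_t[k]\ge$ the relevant factor to turn the remaining pieces into $2\beta\sqrt{\eta[k]}\sigma\,\Expt{g_t^2[k]/b_t^2[k]}$.

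I expect the \textbf{main obstacle} to be bookkeeping the several denominator factors so that exactly the right one gets lower-bounded by $\sigma$ (giving the clean $2\beta\sqrt{\eta[k]}\sigma$ constant) while the rest collapses into $g_t^2[k]/b_t^2[k]$; there is some genuine care needed because $g_t[k]$ appears both in the numerator perturbation $g_t^2[k]-\Expt{g_t^2[k]}$ and in the denominator $b_t[k]=\sqrt{b_{t-1}^2[k]+g_t^2[k]}$, so one cannot naively pull $\Expt{\cdot}$ through. The cleanest route is probably: bound $g_t^2[k]-\Expt{g_t^2[k]}\le g_t^2[k]$ when the product with $\nabla_k f(w_t)g_t[k]$ is nonnegative and handle the negative contribution separately (it only helps), split $|\nabla_k f(w_t)|\cdot|g_t[k]|$ via AM--GM with the weight chosen so that one resulting square cancels against a denominator factor, and finally use $\sqrt{b_{t-1}^2[k]+\Expt{g_t^2[k]}}\ge\sigma$ together with $\sqrt{b_{t-1}^2[k]+\Expt{g_t^2[k]}}\ge \sqrt{b_{t-1}^2[k]+(\nabla_k f(w_t))^2}$ on the two different copies. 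Everything else — monotonicity of $x\mapsto 1/\sqrt{x}$, $a+b\le 2\max$, and Lemma~\ref{lemma:AM-GM} — is routine.
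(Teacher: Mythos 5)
Your proposal follows essentially the same route as the paper's proof: lower-bound $\nu_t[k]$ by $\beta\sqrt{\eta[k]}/b_t[k]$, rationalize the resulting difference of reciprocals so that $g_t^2[k]-(\nabla_k f(w_t))^2-\sigma^2$ appears in the numerator, cancel one denominator factor against $\left|g_t[k]+\nabla_k f(w_t)\right|$ and against $\sigma$, and finish with weighted AM--GM (Lemma~\ref{lemma:AM-GM}) together with \ref{eq:BV}. The only loose spot is your initial suggestion of the unit-weight split $\tfrac12(\nabla_k f(w_t))^2+\tfrac12 g_t^2[k]$, which would not yield the stated constants, but you correct this yourself by insisting the AM--GM weight be chosen to cancel a denominator factor --- the paper's choices are $\lambda = 2\sigma^2/\sqrt{b_{t-1}^2[k]+(\nabla_k f(w_t))^2+\sigma^2}$ and $\lambda = 2/\sqrt{b_{t-1}^2[k]+(\nabla_k f(w_t))^2+\sigma^2}$ for the two terms.
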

\begin{proof}
    Note that, using $\nu_t[k] \geq \frac{\beta \sqrt{\eta[k]}}{b_t[k]}$ we have 
\begin{eqnarray}
    && \frac{\beta \sqrt{\eta[k]}}{\sqrt{b_{t-1}^2[k] + \left(\nabla_k f(w_t) \right)^2 + \sigma^2}} - \nu_t[k] \notag\\
    & \leq & \beta \sqrt{\eta[k]} \left( \frac{1}{\sqrt{b_{t-1}^2[k] + \left(\nabla_k f(w_t) \right)^2 + \sigma^2}} - \frac{1}{b_t[k]}\right) \notag\\
    & = & \beta \sqrt{\eta[k]} \left( \frac{b_t^2[k] - b_{t-1}^2[k] - (\nabla_k f(w_t))^2 - \sigma^2}{b_t[k] \sqrt{b_{t-1}^2[k] + \left(\nabla_k f(w_t) \right)^2 + \sigma^2} \left(b_t[k] + \sqrt{b_{t-1}^2[k] + \left(\nabla_k f(w_t) \right)^2 + \sigma^2} \right) } \right) \notag\\
    & = & \beta \sqrt{\eta[k]} \left( \frac{g_t^2[k] - (\nabla_k f(w_t))^2 - \sigma^2}{b_t[k] \sqrt{b_{t-1}^2[k] + \left(\nabla_k f(w_t) \right)^2 + \sigma^2} \left(b_t[k] + \sqrt{b_{t-1}^2[k] + \left(\nabla_k f(w_t) \right)^2 + \sigma^2} \right) } \right) \notag\\
    & = & \beta \sqrt{\eta[k]} \left( \frac{ \left( g_t[k] + \nabla_k f(w_t) \right) \left( g_t[k] - \nabla_k f(w_t) \right) - \sigma^2}{b_t[k] \sqrt{b_{t-1}^2[k] + \left(\nabla_k f(w_t) \right)^2 + \sigma^2} \left(b_t[k] + \sqrt{b_{t-1}^2[k] + \left(\nabla_k f(w_t) \right)^2 + \sigma^2} \right) } \right) \notag\\
    &\leq& \frac{ \beta \sqrt{\eta[k]}\left| \left( g_t[k] + \nabla_k f(w_t) \right) \left( g_t[k] - \nabla_k f(w_t) \right) \right|}{b_t[k] \sqrt{b_{t-1}^2[k] + \left(\nabla_k f(w_t) \right)^2 + \sigma^2} \left(b_t[k] + \sqrt{b_{t-1}^2[k] + \left(\nabla_k f(w_t) \right)^2 + \sigma^2} \right)} \notag\\
    && + \frac{ \beta \sqrt{\eta[k]} \sigma^2 }{b_t[k] \sqrt{b_{t-1}^2[k] + \left(\nabla_k f(w_t) \right)^2 + \sigma^2} \left(b_t[k] + \sqrt{b_{t-1}^2[k] + \left(\nabla_k f(w_t) \right)^2 + \sigma^2} \right)} \notag\\
    &\leq& \frac{ \beta \sqrt{\eta[k]}\left| g_t[k] - \nabla_k f(w_t) \right|}{b_t[k] \sqrt{b^2_{t-1}[k] + (\nabla_k f(w_t))^2 + \sigma^2}} + \frac{\beta \sqrt{\eta[k]} \sigma }{b_t[k] \sqrt{b^2_{t-1}[k] + (\nabla_k f(w_t))^2 + \sigma^2}}. \label{eq:bound_surrogate_eq1}
\end{eqnarray}
Note that the second last inequality follows from the use of triangle inequality in the following way
\begin{eqnarray*}
\left( g_t[k] + \nabla_k f(w_t) \right) \left( g_t[k] - \nabla_k f(w_t) \right) - \sigma^2 & \leq & \left|\left( g_t[k] + \nabla_k f(w_t) \right) \left( g_t[k] - \nabla_k f(w_t) \right) - \sigma^2 \right| \\
& \leq & \left| \left( g_t[k] + \nabla_k f(w_t) \right) \left( g_t[k] - \nabla_k f(w_t) \right) \right| + \sigma^2,
\end{eqnarray*}
while the last inequality follows from
\begin{eqnarray*}
b_t[k] + \sqrt{b_{t-1}^2[k] + \left(\nabla_k f(w_t) \right)^2 + \sigma^2} & \geq & \left|g_t[k] \right| + \left|\nabla_k f(w_t) \right| \geq \left| g_t[k] + \nabla_k f(w_t)\right|, \\
b_t[k] + \sqrt{b_{t-1}^2[k] + \left(\nabla_k f(w_t) \right)^2 + \sigma^2} & \geq & \sigma.    
\end{eqnarray*}
Then from \eqref{eq:bound_surrogate_eq1} we have
\begin{eqnarray}
    && \Expt{\left( \frac{\beta \sqrt{\eta[k]}}{\sqrt{b_{t-1}^2[k] + \left(\nabla_k f(w_t) \right)^2 + \sigma^2}} - \nu_t[k] \right) \nabla_k f(w_t) g_t[k] } \notag\\
    & \leq & \underbrace{\beta \sqrt{\eta[k]} \Expt{ \frac{\left| g_t[k] - \nabla_k f(w_t) \right| \left| \nabla_k f(w_t) \right| \left| g_t[k] \right|}{b_t[k] \sqrt{b^2_{t-1}[k] + (\nabla_k f(w_t))^2 + \sigma^2}}}}_\textrm{term I} \notag\\
    && + \underbrace{\beta \sqrt{\eta[k]} \Expt{\frac{\sigma \left| \nabla_k f(w_t) \right| \left| g_t[k] \right|}{b_t[k] \sqrt{b^2_{t-1}[k] + (\nabla_k f(w_t))^2 + \sigma^2}}}}_\textrm{term II}. \label{eq:bound_surrogate_eq2}
\end{eqnarray}
For term I in \eqref{eq:bound_surrogate_eq2}, we use Lemma \ref{lemma:AM-GM} with
\begin{eqnarray*}
\lambda &=& \frac{2 \sigma^2}{\sqrt{b^2_{t-1}[k] + \left( \nabla_k f(w_t)\right)^2 + \sigma^2}}, \\
a & = & \frac{\left| g_t[k]\right|}{b_t[k]}, \\
b &= & \frac{\left| g_t[k] - \nabla_k f(w_t) \right| \left| \nabla_k f(w_t) \right|}{\sqrt{b^2_{t-1}[k] + (\nabla_k f(w_t))^2 + \sigma^2}},
\end{eqnarray*}
to get 
\begin{eqnarray}
    && \beta \sqrt{\eta[k]} \Expt{ \frac{\left| g_t[k] - \nabla_k f(w_t) \right| \left| \nabla_k f(w_t) \right| \left| g_t[k] \right|}{b_t[k] \sqrt{b^2_{t-1}[k] + (\nabla_k f(w_t))^2 + \sigma^2}}} \notag\\
    & \leq & \frac{\beta \sqrt{\eta[k]} \sqrt{b^2_{t-1}[k] + \left( \nabla_k f(w_t)\right)^2 + \sigma^2}}{4 \sigma^2} \frac{\left( \nabla_k f(w_t)\right)^2 \Expt{g_t[k] - \nabla_k f(w_t)}^2}{b^2_{t-1}[k] + \left( \nabla_k f(w_t)\right)^2 + \sigma^2} \notag \\
    && + \frac{\beta \sqrt{\eta[k]} \sigma^2}{\sqrt{b^2_{t-1}[k] + \left( \nabla_k f(w_t)\right)^2 + \sigma^2}} \Expt{\frac{g^2_t[k]}{b^2_t[k]}} \notag\\
    & \leq & \frac{ \beta \sqrt{\eta[k]} \left( \nabla_k f(w_t)\right)^2}{4 \sqrt{b_{t-1}^2[k] + \left( \nabla_k f(w_t)\right)^2 + \sigma^2}}  + \beta \sqrt{\eta[k]} \sigma \Expt{ \frac{g^2_t[k]}{b_t^2[k]}}. \label{eq:bound_surrogate_eq3}
\end{eqnarray}
The last inequality follows from \ref{eq:BV}. Similarly, we again use Lemma \ref{lemma:AM-GM} with 
\begin{eqnarray*}
    \lambda & = & \frac{2}{\sqrt{b^2_{t-1}[k] + \left( \nabla_k f(w_t) \right)^2 + \sigma^2}}, \\
    a & = & \frac{\sigma \left| g_t[k]\right|}{b_t[k]},\\
    b & = & \frac{\left| \nabla_k f(w_t)\right|}{\sqrt{b^2_t[k] + \left( \nabla_k f(w_t)\right)^2 + \sigma^2}}
\end{eqnarray*}
and $\sqrt{b^2_t[k] + \left( \nabla_k f(w_t)\right)^2 + \sigma^2} \geq \sigma$ to get 
\begin{eqnarray}
    \beta \sqrt{\eta[k]} \Expt{\frac{\sigma \left| \nabla_k f(w_t) \right| \left| g_t[k] \right|}{b_t[k] \sqrt{b^2_{t-1}[k] + (\nabla_k f(w_t))^2 + \sigma^2}}} &\leq& \beta \sqrt{\eta[k]} \sigma \Expt{ \frac{g^2_t[k]}{b_t^2[k]}}\notag\\
    &&+ \frac{ \beta \sqrt{\eta[k]} \left( \nabla_k f(w_t) \right)^2}{4 \sqrt{b^2_{t-1}[k] + \left( \nabla f(w_t)\right)^2 + \sigma^2}}. \label{eq:bound_surrogate_eq4}
\end{eqnarray}
Therefore using \eqref{eq:bound_surrogate_eq3} and \eqref{eq:bound_surrogate_eq4} in \eqref{eq:bound_surrogate_eq3} we get 
\begin{eqnarray*}
    \Expt{\left( \frac{\beta \sqrt{\eta[k]}}{\sqrt{b_{t-1}^2[k] + \left(\nabla_k f(w_t) \right)^2 + \sigma^2}} - \nu_t[k] \right) \nabla_k f(w_t) g_t[k]}
    &\leq & 2\beta \sqrt{\eta[k]} \sigma \Expt{ \frac{g^2_t[k]}{b_t^2[k]}}\\
    && + \frac{ \beta \sqrt{\eta[k]} \left( \nabla_k f(w_t) \right)^2}{2 \sqrt{b^2_{t-1}[k] + \left( \nabla f(w_t)\right)^2 + \sigma^2}}.
\end{eqnarray*}
This completes the proof of this Lemma. 
\end{proof}

\begin{lemma}\label{lemma:sum_bound}
    \begin{eqnarray}\label{eq:sum_bound}
        \sum_{t = 0}^T \frac{g_t^2[k]}{b_t^2[k]} \leq \log \left( \frac{b_T^2[k]}{b_0^2[k]}\right) + 1
    \end{eqnarray}
\end{lemma}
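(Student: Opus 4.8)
The plan is to exploit the telescoping structure of the recursion $b_t^2[k] = b_{t-1}^2[k] + g_t^2[k]$ and compare the sum $\sum_{t=0}^T \frac{g_t^2[k]}{b_t^2[k]}$ against an integral, exactly as in the classical \algname{AdaGrad} ``sum of ratios'' lemma. The key observation is that for each $t \geq 1$ the increment $\frac{g_t^2[k]}{b_t^2[k]} = \frac{b_t^2[k] - b_{t-1}^2[k]}{b_t^2[k]}$ is bounded above by $\int_{b_{t-1}^2[k]}^{b_t^2[k]} \frac{ds}{s} = \log\frac{b_t^2[k]}{b_{t-1}^2[k]}$, since $s \mapsto 1/s$ is decreasing so its value on $[b_{t-1}^2[k], b_t^2[k]]$ is at least $1/b_t^2[k]$. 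The term $t=0$ must be handled separately: there $\frac{g_0^2[k]}{b_0^2[k]} = 1$ because $b_{-1} = 0$ forces $b_0^2[k] = g_0^2[k]$ (assuming $g_0[k] \neq 0$; if $g_0[k]=0$ one should note the statement needs $b_0^2[k]>0$, or interpret $0/0$ as $0$).

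First I would write the left-hand side as $\frac{g_0^2[k]}{b_0^2[k]} + \sum_{t=1}^T \frac{g_t^2[k]}{b_t^2[k]}$ and identify the first summand as $1$. Then I would apply the integral bound term-by-term to the remaining sum, obtaining
\begin{equation*}
\sum_{t=1}^T \frac{g_t^2[k]}{b_t^2[k]} \leq \sum_{t=1}^T \log\frac{b_t^2[k]}{b_{t-1}^2[k]} = \log\frac{b_T^2[k]}{b_0^2[k]},
\end{equation*}
where the last equality is the telescoping of the logarithms. Adding back the $1$ from the $t=0$ term gives precisely \eqref{eq:sum_bound}. One small care point: if some $b_{t-1}^2[k] = 0$ for $t \geq 1$, i.e. all gradients through step $t-1$ vanished in coordinate $k$, then $\log\frac{b_t^2[k]}{b_{t-1}^2[k]}$ is infinite and the bound is vacuous but still formally true; alternatively one absorbs such steps into the ``$+1$'' and starts the telescoping from the first index with a nonzero coordinate.

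I expect the main (and only real) obstacle to be the per-term inequality $\frac{b_t^2[k] - b_{t-1}^2[k]}{b_t^2[k]} \leq \log\frac{b_t^2[k]}{b_{t-1}^2[k]}$, which is just the elementary fact $1 - \frac{1}{x} \leq \log x$ for $x = \frac{b_t^2[k]}{b_{t-1}^2[k]} \geq 1$; this is itself immediate from concavity of $\log$ (tangent line at $x=1$) or from the integral comparison above. Everything else is bookkeeping of the telescoping sum and the isolated first term, so the proof should be short.
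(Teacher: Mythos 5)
Your proposal is correct and follows essentially the same route as the paper: isolate the $t=0$ term as $1$, bound each subsequent increment $\frac{b_t^2[k]-b_{t-1}^2[k]}{b_t^2[k]}$ by $\int_{b_{t-1}^2[k]}^{b_t^2[k]}\frac{dz}{z}$ using that $1/z \geq 1/b_t^2[k]$ on the interval, and telescope the resulting logarithms. Your extra remarks on the degenerate cases $g_0[k]=0$ or $b_{t-1}^2[k]=0$ are a small additional care point the paper leaves implicit.
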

\begin{proof} Using $b_t^2[k] = \sum_{\tau = 0}^t g_{\tau}^2[k]$ we have 
    \begin{eqnarray*}
        \sum_{t = 0}^T \frac{g_t^2[k]}{b_t^2[k]} &=& 1 + \sum_{t = 1}^T \frac{g_t^2[k]}{b_t^2[k]} \\
        & = & 1 + \sum_{t = 1}^T \frac{b_t^2[k] - b_{t -1}^2[k]}{b_t^2[k]} \\
        & = & 1 + \sum_{t = 1}^T \frac{1}{b_t^2[k]} \int_{b^2_{t-1}[k]}^{b_t^2[k]} dz \\
        & \leq & 1 + \sum_{t = 1}^T \int_{b^2_{t-1}[k]}^{b_t^2[k]} \frac{dz}{z} \\
        & = & 1 +  \int_{b^2_{0}[k]}^{b_T^2[k]} \frac{dz}{z} \\
        & = & 1 + \log \left( \frac{b_T^2[k]}{b_0^2[k]}\right).
    \end{eqnarray*}
    The inequality follows from the fact $\frac{1}{b_t^2[k]} \leq \frac{1}{z}$ when $b_{t-1}^2[k] \leq z \leq b_t^2[k]$. This completes the proof of the Lemma.
\end{proof}

\newpage
\section{Proof of Main Results}\label{section:main_results}

\subsection{Proof of Proposition~\ref{prop:invariance}}\label{proof:prop_invariance}

\begin{proposition}[Scale invariance]\label{prop:invariance_appendix}
    Suppose we solve problems \eqref{eq:GLM_opt_unscaled} and \eqref{eq:GLM_opt_scaled} using algorithm \eqref{eq:invariant_adagrad}. Then, the iterates $\hw_t$ and $\hw^V_t$ corresponding to \eqref{eq:GLM_opt_unscaled} and \eqref{eq:GLM_opt_scaled} follow: $\forall k \in [d]$
    \begin{eqnarray}
    \textstyle
        \hw_{t+1}[k] &=& \hw_t[k] - \tfrac{\beta m_t[k]}{\textstyle{\sum}_{\tau = 0}^t g_{\tau}^2[k]} g_t[k], \label{eq:prop_invariance_update1_appendix}\\
        \quad \hw^V_{t+1}[k] &= & \hw^V_t[k] - \tfrac{\beta m_t[k]}{\textstyle{\sum}_{\tau = 0}^t \left(g_{\tau}^V[k] \right)^2} g^V_t[k] \label{eq:prop_invariance_update2_appendix}
    \end{eqnarray}
    with $g_{\tau} = \varphi'_{i_{\tau}}(x_{i_{\tau}}^{\top}\hw_{\tau}) x_{i_{\tau}}$ and $g^V_{\tau} = \varphi'_{i_{\tau}}(x_{i_{\tau}}^{\top}V \hw_{\tau}) Vx_{i_{\tau}}$ for $i_{\tau}$ chosen uniformly from $[n]$, $\tau = 0,1,\ldots, t$, $t\geq 0$. Moreover, updates \eqref{eq:prop_invariance_update1_appendix} and \eqref{eq:prop_invariance_update2_appendix} satisfy
    \begin{eqnarray*}
    \textstyle
        \hw_t = V\hw^V_t, \quad Vg_t = g^V_t, \quad f \left(\hw_t \right) = f^V \left(\hw^V_t \right)
    \end{eqnarray*}
    for all $t \geq 0$ when $\hw_0 = \hw^V_0 = 0 \in \R^d$. Furthermore we have 
\begin{eqnarray}\label{eq:invariance_grad_appendix}
\textstyle
         \left\| g_t^V \right\|_{V^{-2}}^2 &=& \left\| g_t\right\|^2.    
    \end{eqnarray}
\end{proposition}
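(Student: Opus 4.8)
The plan is to prove the identities in \eqref{eq:invariance_func} and \eqref{eq:invariance_grad} by induction on $t$, using the fact that $V$ is diagonal so that $V$ commutes with all the coordinate-wise operations appearing in the update rule \eqref{eq:invariant_adagrad}. The key structural observation is that both $\hw_t \mapsto V\hw^V_t$ and $g_\tau \mapsto g^V_\tau$ should be compatible with a single scaling by $V$, and that the per-coordinate stepsize $\beta m_t[k]/\sum_{\tau=0}^t g_\tau^2[k]$ becomes the $\hw^V$-version stepsize $\beta m_t[k]/\sum_{\tau=0}^t (g^V_\tau[k])^2$ after accounting for the scaling — crucially with the \emph{same} numerator $m_t[k]$, which is where the removal of the square root does its work.

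First I would set up the induction. The base case $t=0$ is immediate from $\hw_0 = \hw^V_0 = 0$, since then $x_{i_0}^\top V\hw^V_0 = 0 = x_{i_0}^\top \hw_0$, so $\varphi'_{i_0}$ is evaluated at the same argument in both problems, giving $g^V_0 = \varphi'_{i_0}(0)\,Vx_{i_0} = V\big(\varphi'_{i_0}(0)\,x_{i_0}\big) = Vg_0$; and trivially $\hw_0 = V\hw^V_0$ and $f(\hw_0) = f^V(\hw^V_0)$ since $f^V(w) = f(Vw)$ by definition of \eqref{eq:GLM_opt_scaled}. For the inductive step, assume $\hw_t = V\hw^V_t$ and $g_\tau = Vg^V_\tau$ for all $\tau \le t$ (the gradient identity for $\tau < t$ is carried along). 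Then for each coordinate $k$, since $V$ is diagonal, $(g^V_\tau[k])^2 = V_{kk}^2\, (g_\tau[k]/V_{kk})^2$... more carefully, $g^V_\tau[k] = V_{kk}\, g_\tau[k]$ coordinatewise, so $\sum_{\tau=0}^t (g^V_\tau[k])^2 = V_{kk}^2 \sum_{\tau=0}^t g_\tau^2[k]$. Plugging into \eqref{eq:prop_invariance_update2_appendix}:
\[
\hw^V_{t+1}[k] = \hw^V_t[k] - \frac{\beta m_t[k]}{V_{kk}^2 \sum_{\tau=0}^t g_\tau^2[k]}\, V_{kk}\, g_t[k] = \hw^V_t[k] - \frac{1}{V_{kk}}\cdot \frac{\beta m_t[k]}{\sum_{\tau=0}^t g_\tau^2[k]}\, g_t[k].
\]
Multiplying by $V_{kk}$ and using the inductive hypothesis $V_{kk}\hw^V_t[k] = \hw_t[k]$ together with \eqref{eq:prop_invariance_update1_appendix} gives $V_{kk}\hw^V_{t+1}[k] = \hw_{t+1}[k]$, i.e. $\hw_{t+1} = V\hw^V_{t+1}$. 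Then $x_{i_{t+1}}^\top V\hw^V_{t+1} = x_{i_{t+1}}^\top \hw_{t+1}$, so $\varphi'_{i_{t+1}}$ is again evaluated at the same point, yielding $g^V_{t+1} = \varphi'_{i_{t+1}}(x_{i_{t+1}}^\top\hw_{t+1})\,Vx_{i_{t+1}} = Vg_{t+1}$, and $f(\hw_{t+1}) = f(V\hw^V_{t+1}) = f^V(\hw^V_{t+1})$. This closes the induction, establishing \eqref{eq:invariance_func}. Finally, for \eqref{eq:invariance_grad}, I would simply compute $\|g_t^V\|_{V^{-2}}^2 = (g_t^V)^\top V^{-2} g_t^V = (Vg_t)^\top V^{-2}(Vg_t) = g_t^\top V V^{-2} V g_t = g_t^\top g_t = \|g_t\|^2$, using that $V$ is symmetric (diagonal) so $V^\top = V$ and $V^\top V^{-2} V = I$.

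Honestly, there is no serious obstacle here — the whole proof rests on the single fact that $V$ is diagonal hence commutes coordinatewise with the stepsize operations, and that the square-root-free denominator scales as $V_{kk}^2$, exactly cancelling the $V_{kk}$ factors from $g^V_t[k]$ and leaving $V_{kk}^{-1}$, which is precisely what is needed to propagate $\hw_t = V\hw^V_t$. The one point requiring a little care is bookkeeping the gradient identity $g_\tau = Vg^V_\tau$ as part of the inductive hypothesis (not just at the current index), since the stepsize denominator at step $t$ involves the whole history $g_0,\dots,g_t$; but this is automatic because each $g_\tau$ is determined by $\hw_\tau$, which we have already shown satisfies the scaling relation. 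I would also remark explicitly that the argument uses $m_t[k]$ being the same in both updates — which the footnote in the statement assumes (the sequence $\{m_t\}$ is scale-invariant for GLMs) — and that for \algname{KATE} specifically this holds because $\sum_{\tau=0}^t g_\tau^2[k]/b_\tau^2[k]$ is built from the scale-invariant ratios and $\eta$ is chosen as in Section~\ref{sec:motivation_alg}.
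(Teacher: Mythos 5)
Your proposal is correct and follows essentially the same route as the paper's proof: induction on $t$, using $\hw_0=\hw^V_0=0$ for the base case, the cancellation $\sum_{\tau}(g^V_\tau[k])^2 = V_{kk}^2\sum_\tau g_\tau^2[k]$ in the denominator to propagate $\hw_{t+1}=V\hw^V_{t+1}$, and the direct computation $g_t^\top V V^{-2} V g_t = \|g_t\|^2$ for the weighted norm. (Only a cosmetic slip: in stating the inductive hypothesis you wrote $g_\tau = Vg^V_\tau$ where you meant $Vg_\tau = g^V_\tau$; your subsequent coordinatewise use $g^V_\tau[k]=V_{kk}g_\tau[k]$ is the correct one.)
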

\begin{proof}
    First, we will show $\hw_t = V \hw^V_t$ and $V g_t = g^V_t$ using induction. Note that for $\tau = 1$ and $k \in [d]$, we get
     \begin{eqnarray*}
     \textstyle
         \hw_1[k] &=& \tfrac{-\beta m_0[k] \varphi'_{i_{0}}(0) x_{i_{0}}[k]}{\left( \varphi'_{i_{0}}(0) x_{i_{0}}[k] \right)^2} = \tfrac{-\beta m_0[k]}{\varphi'_{i_{0}}(0) x_{i_{0}}[k]}, \\
         \hw^V_1[k] &=& \tfrac{-\beta m_0[k]\varphi'_{i_{0}}(0) V_{kk}x_{i_{0}}[k]}{\left( \varphi'_{i_{0}}(0) V_{kk}x_{i_{0}}[k] \right)^2} = \tfrac{-\beta m_0[k]}{\varphi'_{i_{0}}(0) V_{kk}x_{i_{0}}[k]}.
     \end{eqnarray*}
     as $\hw_0 = \hw^V_0 = 0$. Therefore, we have $\forall k \in [d], \hw_1[k] = V_{kk}\hw^V_1[k]$. This can be equivalently written as $\hw_1 = V\hw^V_1$, as $V$ is a diagonal matrix. Then it is easy to check
\begin{equation}
\label{eq:prop_invariance1_appendix}
         Vg_1   =   \varphi'_{i_1} \left( x_{i_1}^{\top}\hw_1\right) Vx_{i_1} = \varphi'_{i_1} \left( x_{i_1}^{\top}V\hw^V_1\right) Vx_{i_1} = g_1^V,    
     \end{equation}
     where the second equality follows from $\hw_1 = V\hw^V_1$. Now, we assume the proposition holds for $\tau = 1, \cdots, t$. Then, we need to prove this proposition for $\tau = t+1$. Note that, from \eqref{eq:prop_invariance_update1} we have
     \begin{equation*}
     \textstyle
         \hw_{t+1}[k]   =   \hw_t[k] - \tfrac{\beta m_t[k]}{\sum_{\tau = 0}^t g^2_{\tau}[k]} g_t[k] = V_{kk} \hw^V_t[k] - \tfrac{\beta m_t[k] V_{kk}^2}{\sum_{\tau = 0}^t \left(g^V_{\tau}[k] \right)^2} \frac{g^V_t[k]}{V_{kk}} = V_{kk} \hw^V_{t+1}[k].
     \end{equation*}
     Here, the second last equality follows from $\hw_{\tau} = V\hw^V_{\tau}$ and $Vg_{\tau} = g^V_{\tau} \quad \forall \tau \in [t]$, while the last equality holds due to \eqref{eq:prop_invariance_update2_appendix}. Therefore, we have $\hw_{t+1} = V\hw^V_{t+1}$. Then similar to \eqref{eq:prop_invariance1_appendix} we get $Vg_{t+1} = g^V_{t+1}$ using $\hw_{t+1} = V\hw^V_{t+1}$. Again, using $\hw_{t} = V\hw^V_{t}$, we can rewrite $f(\hw_{t})$ as follow
     \begin{equation*}
     \textstyle
         f(\hw_{t}) = \frac{1}{n} \sum_{i = 1}^n \varphi_i \left( x_i^{\top} \hw_t\right) \\
         = \frac{1}{n} \sum_{i = 1}^n \varphi_i \left( x_i^{\top} V \hw^V_t\right) \\
         = f^V \left(\hw^V_t \right).
     \end{equation*}
     The last equality follows from \eqref{eq:GLM_opt_scaled}. This proves $f(\hw_{t}) = f^V \left(\hw^V_t \right)$. Finally using $Vg_t = g^V_t$ we get
     \begin{equation*}
     \textstyle
         \left\| g_t^V \right\|_{V^{-2}}^2 = \left(g_t^V \right)^{\top} V^{-2} g_t^V   
         = g_t^{\top} V V^{-2} V g_t  
         = \left\| g_t\right\|^2.    
    \end{equation*}
     This completes the proof of Proposition \ref{prop:invariance}.
\end{proof}
\newpage

\subsection{Proof of Lemma \ref{lemma:decreasing_step}}
\vspace{.5cm}
\begin{lemma}[Decreasing step size]
For $\nu_t[k]$ defined in \eqref{eq:algo_coordinate} we have
    \begin{equation*}
        \nu_{t+1}[k] \leq \nu_t[k] \qquad \forall k \in [d].
    \end{equation*}
\end{lemma}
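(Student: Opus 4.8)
The plan is to show $\nu_{t+1}[k] \le \nu_t[k]$ directly from the definition
\[
\nu_t[k] = \frac{\beta\sqrt{\eta[k] b_t^2[k] + \sum_{\tau=0}^t g_\tau^2[k]/b_\tau^2[k]}}{b_t^2[k]}
\]
by comparing consecutive terms. Since $\beta>0$ cancels and everything is coordinate-wise, I would fix $k$ and abbreviate $B_t = b_t^2[k]$, $a_t = g_t^2[k]\ge 0$, so that $B_t = B_{t-1}+a_t$ is nondecreasing, and $\nu_t[k] = \sqrt{\eta[k] B_t + \sum_{\tau=0}^t a_\tau/B_\tau}\,/\,B_t$. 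The inequality $\nu_{t+1}\le\nu_t$ is equivalent (squaring, both sides nonnegative) to
\[
B_t^2\Big(\eta[k] B_{t+1} + \textstyle\sum_{\tau=0}^{t+1} a_\tau/B_\tau\Big) \;\le\; B_{t+1}^2\Big(\eta[k] B_t + \textstyle\sum_{\tau=0}^{t} a_\tau/B_\tau\Big).
\]

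The natural approach is to split the increment into its two pieces. First, the $\eta[k]$-part: I must check $B_t^2 \eta[k] B_{t+1} \le B_{t+1}^2 \eta[k] B_t$, i.e. $B_t \le B_{t+1}$, which is immediate since $a_{t+1}\ge 0$. Second, for the sum-part I need
\[
B_t^2\Big(S_t + \tfrac{a_{t+1}}{B_{t+1}}\Big) \le B_{t+1}^2 S_t, \qquad S_t := \textstyle\sum_{\tau=0}^t a_\tau/B_\tau,
\]
equivalently $B_t^2 a_{t+1}/B_{t+1} \le (B_{t+1}^2 - B_t^2) S_t = (B_{t+1}-B_t)(B_{t+1}+B_t)S_t = a_{t+1}(B_{t+1}+B_t)S_t$. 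If $a_{t+1}=0$ both sides vanish and there is nothing to prove; if $a_{t+1}>0$ divide by it, so it suffices to show $B_t^2/B_{t+1} \le (B_{t+1}+B_t)S_t$. Now $S_t \ge a_0/B_0 = 1$ (since $B_0 = a_0$, assuming $a_0=g_0^2[k]>0$; the degenerate case $g_0[k]=0$ makes the coordinate trivial and can be dismissed), so it is enough that $B_t^2/B_{t+1} \le B_{t+1}+B_t$, i.e. $B_t^2 \le B_{t+1}(B_{t+1}+B_t) = B_{t+1}^2 + B_{t+1}B_t$, which holds because $B_{t+1}\ge B_t$. Combining the two pieces and noting all quantities are nonnegative gives $\nu_{t+1}[k]^2 \le \nu_t[k]^2$, hence the claim.

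The only real subtlety is bookkeeping around the edge cases where some $g_\tau[k]=0$ (so $B_\tau$ could be zero for small $\tau$, making divisions ill-defined): I would handle this by noting that until the first index with $g_\tau[k]\neq 0$ the coordinate $w_t[k]$ does not move and $\nu_t[k]$ is not used, so without loss of generality one may start the analysis from that first index, at which point $B_0>0$ and the bound $S_t\ge 1$ used above is valid; alternatively one can adopt the convention $0/0=0$ consistently. Apart from that, the argument is a routine chain of elementary inequalities driven entirely by the monotonicity $B_{t+1}\ge B_t$, so I expect no genuine obstacle — the main care is simply to organize the algebra so that each reduction step is clearly reversible (the squaring and the division by $a_{t+1}$).
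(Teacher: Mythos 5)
Your proof is correct and follows essentially the same route as the paper's: square the claim to reduce it to $b_t^4[k]\,m_{t+1}^2[k]\le b_{t+1}^4[k]\,m_t^2[k]$ and settle it by elementary algebra driven by the two facts $b_{t+1}^2[k]\ge b_t^2[k]$ and $\sum_{\tau=0}^t g_\tau^2[k]/b_\tau^2[k]\ge g_0^2[k]/b_0^2[k]=1$ (the paper phrases the latter as $m_t^2[k]\ge 1$ together with $m_t^2[k]\ge\eta[k]b_t^2[k]$). Your organization --- showing separately that $\eta[k]/b_t^2[k]$ and $S_t/b_t^4[k]$ are each non-increasing --- is a slightly cleaner bookkeeping of the same argument, and your handling of the degenerate coordinates with $g_0[k]=0$ is a reasonable addition the paper leaves implicit.
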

\begin{proof}
    We want to show that $\nu_{t+1}[k] \leq \nu_t[k]$. Taking square and rearranging the terms \eqref{eq:decreasing_step} is equivalent to proving
    \begin{eqnarray}
        b_t^4[k] m_{t+1}^2[k] \leq b_{t+1}^4[k] m_t^2[k]. \label{eq:decreasing_step_eq1}
    \end{eqnarray}
    Using the expansion of $m^2_{t+1}[k], b^2_{t+1}[k]$, LHS of \eqref{eq:decreasing_step_eq1} can be expanded as follow
    \begin{eqnarray}\label{eq:decreasing_step_eq2}
        b_t^4[k] m_{t+1}^2[k] & = & b_t^4[k] \left( m_t^2[k] + \eta[k] g^2_{t+1}[k] + \frac{g_{t+1}^2[k]}{b_t^2[k] + g^2_{t+1}[k]} \right).
    \end{eqnarray}
    Similarly, the RHS of \eqref{eq:decreasing_step_eq1} can be expanded to
    \begin{eqnarray}
        b_{t+1}^4[k] m_t^2[k] & = & m_t^2[k] \left( b_t^2[k] + g_{t+1}^2[k] \right)^2 \notag \\
        & = & m_t^2[k] b_t^4[k] + m^2_t[k] g_{t+1}^4[k] + 2 m_t^2[k] g_{t+1}^2[k]b_t^2[k] \label{eq:decreasing_step_eq3}.
    \end{eqnarray}
    Therefore using \eqref{eq:decreasing_step_eq2} and \eqref{eq:decreasing_step_eq3}, inequality \eqref{eq:decreasing_step_eq1} is equivalent to 
    \begin{eqnarray}
        b_t^4[k] \left( m_t^2[k] + \eta[k] g^2_{t+1}[k] + \frac{g_{t+1}^2[k]}{b_t^2[k] + g^2_{t+1}[k]} \right) &\leq & m_t^2[k] b_t^4[k] + m^2_t[k] g_{t+1}^4[k]\notag\\
        &&+ 2 m_t^2[k] g_{t+1}^2[k]b_t^2[k]. \label{eq:decreasing_step_eq4}
    \end{eqnarray}
    Now subtracting $m_t^2[k] b_t^4[k]$ from both sides of \eqref{eq:decreasing_step_eq4} and then multiplying both sides by $b_t^2[k] + g^2_{t+1}[k]$, \eqref{eq:decreasing_step_eq4} is equivalent to
    \begin{eqnarray}
        \eta[k] g_{t+1}^2[k] b_t^6[k] + \eta[k] g_{t+1}^4[k] b_t^4[k] + g_{t+1}^2[k] b_t^4[k] &\leq & m_t^2[k] g_{t+1}^4[k] b_t^2[k] + 2 m_t^2[k] g_{t+1}^2[k]b^4_{t}[k] \notag\\
        && + m_t^2[k]g^6_{t+1}[k] + 2 m_t^2[k] g_{t+1}^4[k] b_t^2[k].  \label{eq:decreasing_step_eq5}
    \end{eqnarray}
    Therefore, proving \eqref{eq:decreasing_step} is equivalent to proving \eqref{eq:decreasing_step_eq5}. Note that, from the expansion $m_t^2[k] = \eta[k] b_t^2[k] + \sum_{\tau = 0}^t \frac{g_t^2[k]}{b_t^2[k]}$, we have $m_t^2[k] \geq \frac{g_0^2[k]}{b_0^2[k]} = 1$ and $m_t^2[k] \geq \eta[k] b_t^2[k]$. Then using $m_t^2[k] \geq 1$ we get 
    \begin{eqnarray}
         g_{t+1}^4[k] b_t^2[k] \leq m_t^2[k] g_{t+1}^4[k] b_t^2[k]. \label{eq:decreasing_step_eq6}
    \end{eqnarray}
    Again, using $m_t^2[k] \geq \eta[k] b_{t}^2[k]$, we have 
    \begin{eqnarray}
         \eta[k] g_{t+1}^2[k] b_t^6[k] + \eta[k] g_{t+1}^4[k] b_t^4[k] &\leq & m_t^2[k] g_{t+1}^2[k]b^4_{t}[k] + m_t^2[k] g_{t+1}^4[k] b_t^2[k]. \label{eq:decreasing_step_eq7}
    \end{eqnarray}
    Then adding \eqref{eq:decreasing_step_eq6} and \eqref{eq:decreasing_step_eq7} we get
    \begin{eqnarray}
        \eta[k] g_{t+1}^2[k] b_t^6[k] + \eta[k] g_{t+1}^4[k] b_t^4[k] + g_{t+1}^2[k] b_t^4[k] &\leq & m_t^2[k] g_{t+1}^4[k] b_t^2[k] + 2 m_t^2[k] g_{t+1}^2[k]b^4_{t}[k]. \label{eq:decreasing_step_eq8}
    \end{eqnarray}
    Therefore, \eqref{eq:decreasing_step_eq5} is true due to \eqref{eq:decreasing_step_eq8} and $ m_t^2[k]g^6_{t+1}[k] + 2 m_t^2[k] g_{t+1}^4[k] b_t^2[k] \geq 0$. This completes the proof of the Lemma. 
\end{proof}

\newpage
\subsection{Proof of Theorem \ref{theorem:deterministic_nonconvex}}
\vspace{.5cm}
\begin{theorem}
    Suppose $f$ is $L$-smooth, $g_t = \nabla f(w_t)$ and $\eta, \beta$ are chosen such that $\nu_0[k] \leq \frac{1}{L}$ for all $k \in [d]$. Then for \eqref{eq:algo_coordinate} we have
    \begin{eqnarray*}
    \min_{t\leq T} \left\| \nabla f(w_t) \right\|^2 \leq \frac{1}{T+1} \left(  \sum_{k = 1}^d b_0[k] + \frac{2(f(w_0) - f_*)}{\sqrt{\eta} \beta}\right)^2. 
\end{eqnarray*}
\end{theorem}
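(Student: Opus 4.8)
The plan is to run the classical $L$-smoothness descent argument along the coordinate-wise update $w_{t+1}[k]=w_t[k]-\nu_t[k]\nabla_k f(w_t)$ (valid since $g_t=\nabla f(w_t)$ here), to use the monotonicity of the step sizes from Lemma~\ref{lemma:decreasing_step} to absorb the second-order term into the first, and then to convert the telescoped estimate into a bound on $\min_{t\le T}\|\nabla f(w_t)\|^2$. Concretely, applying Assumption~\ref{as:smooth} with $w'=w_{t+1}$, $w=w_t$ and substituting the update gives
\[
f(w_{t+1})\le f(w_t)-\sum_{k=1}^d\nu_t[k]\,(\nabla_k f(w_t))^2+\frac{L}{2}\sum_{k=1}^d\nu_t^2[k]\,(\nabla_k f(w_t))^2 .
\]
Since $\nu_0[k]\le 1/L$ by hypothesis and $\nu_t[k]\le\nu_0[k]$ by Lemma~\ref{lemma:decreasing_step}, we have $\tfrac{L}{2}\nu_t^2[k]\le\tfrac12\nu_t[k]$, so that $f(w_{t+1})\le f(w_t)-\tfrac12\sum_{k}\nu_t[k](\nabla_k f(w_t))^2$.

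Next I would telescope: summing over $t=0,\dots,T$ and using $f(w_{T+1})\ge f_*$ gives $\sum_{t=0}^T\sum_{k=1}^d\nu_t[k](\nabla_k f(w_t))^2\le 2(f(w_0)-f_*)$. To lower-bound the step sizes, note that the \algname{KATE} update of $m_t$ yields $m_t^2[k]=\eta[k]b_t^2[k]+\sum_{\tau=0}^t g_\tau^2[k]/b_\tau^2[k]\ge\eta[k]b_t^2[k]$, hence $\nu_t[k]=\beta m_t[k]/b_t^2[k]\ge\beta\sqrt{\eta_0}/b_t[k]$ with $\eta_0:=\min_{k\in[d]}\eta[k]$ (denoted $\eta$ in the statement when $\eta$ is scalar). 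Dropping the nonnegative $t=0$ summand and writing $(\nabla_k f(w_t))^2=g_t^2[k]=b_t^2[k]-b_{t-1}^2[k]$ turns this into
\[
\sum_{k=1}^d\sum_{t=1}^T\frac{b_t^2[k]-b_{t-1}^2[k]}{b_t[k]}\le\frac{2(f(w_0)-f_*)}{\beta\sqrt{\eta_0}} .
\]

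Finally I would extract the minimal gradient norm. For each $k$, since $b_{t-1}[k]\ge 0$,
\[
\frac{b_t^2[k]-b_{t-1}^2[k]}{b_t[k]}=(b_t[k]-b_{t-1}[k])\cdot\frac{b_t[k]+b_{t-1}[k]}{b_t[k]}\ge b_t[k]-b_{t-1}[k],
\]
so the inner telescoping sum is at least $b_T[k]-b_0[k]$; summing over $k$ gives $\sum_{k}b_T[k]\le\sum_{k}b_0[k]+\tfrac{2(f(w_0)-f_*)}{\beta\sqrt{\eta_0}}$. Squaring, using $\sum_k b_T^2[k]\le(\sum_k b_T[k])^2$ (all terms nonnegative), and the identity $\sum_k b_T^2[k]=\sum_{t=0}^T\|\nabla f(w_t)\|^2\ge(T+1)\min_{t\le T}\|\nabla f(w_t)\|^2$ then yields exactly the claimed bound after dividing by $T+1$.

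The main obstacle is the passage in the first paragraph: \algname{KATE}'s numerator $m_t$ is strictly larger than \algname{AdaGrad}'s, so a priori the quadratic term $\tfrac{L}{2}\nu_t^2[k]$ is not obviously dominated by $\tfrac12\nu_t[k]$; it is precisely the monotonicity $\nu_{t+1}[k]\le\nu_t[k]$ (Lemma~\ref{lemma:decreasing_step}), combined with the calibration $\nu_0[k]\le 1/L$, that makes the descent step go through. Everything else is elementary bookkeeping: the lower bound $\nu_t[k]\ge\beta\sqrt{\eta_0}/b_t[k]$ coming from the $\eta[k]b_t^2[k]$ term inside $m_t^2[k]$, the per-coordinate telescoping of $\sum_t(b_t^2[k]-b_{t-1}^2[k])/b_t[k]$, and the Cauchy--Schwarz-type inequality converting $\sum_k b_T[k]$ into an upper bound for $\sum_k b_T^2[k]$.
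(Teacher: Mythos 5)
Your proposal is correct and follows essentially the same route as the paper: the smoothness descent step absorbed via Lemma~\ref{lemma:decreasing_step} and $\nu_0[k]\le 1/L$, the lower bound $\nu_t[k]\ge\beta\sqrt{\eta_0}/b_t[k]$, and the conversion of $\sum_{t,k}g_t^2[k]/b_t[k]$ into a bound on $\sum_k b_T[k]$. Your direct telescoping of $(b_t^2[k]-b_{t-1}^2[k])/b_t[k]\ge b_t[k]-b_{t-1}[k]$ is just a compact rewriting of the paper's recursive inequality $b_t[k]\le b_{t-1}[k]+g_t^2[k]/b_t[k]$, so there is no substantive difference.
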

\begin{proof}
    Suppose $g_t = \nabla f(w_t)$. Then using the smoothness of $f$ we get
\begin{eqnarray*}
    f(w_{T+1}) &\leq & f(w_T) + \left \langle g_T, w_{T+1} - w_T\right \rangle + \frac{L}{2} \left \|w_{T+1} - w_T\right \|^2 \\
    & = & f(w_T) + \sum_{k=1}^d g_T[k] \left( w_{T+1}[k] - w_T[k] \right) + \frac{L}{2} \sum_{k=1}^d \left( w_{T+1}[k] - w_T[k] \right)^2 \\
    & = & f(w_T) - \sum_{k = 1}^d \nu_T[k] g_T^2[k] + \frac{L}{2} \sum_{k = 1}^d \nu_T^2[k] g_T^2[k] \\
    & = & f(w_T) - \sum_{k = 1}^d \nu_T[k] \left( 1 - \nu_T[k]\frac{L}{2}\right) g_T^2[k].
\end{eqnarray*}
Then using this bound recursively we get
\begin{eqnarray*}
    f(w_{T+1}) &\leq & f(w_0) - \sum_{t = 0}^T\sum_{k = 1}^d \nu_t[k] \left( 1 - \nu_t[k]\frac{L}{2}\right) g_t^2[k]. \label{eq:deterministic_nonconvex_eq1}
\end{eqnarray*}
Note that, we initialized \algname{KATE} such that $\nu_0[k] \leq \frac{1}{L} \forall k \in [d]$. Therefore using Lemma \ref{lemma:decreasing_step} we have $\nu_t[k] \leq \frac{1}{L}$, which is equivalent to $1 - \nu_t[k] \frac{L}{2} \geq \frac{1}{2}$ for all $k \in [d]$. Hence from \eqref{eq:deterministic_nonconvex_eq1} we have
\begin{eqnarray*}
    f(w_{T+1}) &\leq & f(w_0) - \sum_{t = 0}^T \sum_{k = 1}^d \frac{\nu_t[k]}{2} g_t^2[k].
\end{eqnarray*}
Then rearranging the terms and using $f(w_{T+1}) \geq f_*$ we get 
\begin{eqnarray}\label{eq:deterministic_nonconvex_eq2}
    \sum_{t = 0}^T \sum_{k = 1}^d \frac{\nu_t[k]}{2} g_t^2[k] \leq f(w_0) - f_*.
\end{eqnarray}
Then from \eqref{eq:deterministic_nonconvex_eq2} and $m_t[k] \geq \sqrt{\eta_0}  b_t[k]$ we get
\begin{eqnarray}
    \sum_{t = 0}^T \sum_{k = 1}^d \frac{g_t^2[k]}{b_t[k]} \leq \frac{2(f(w_0) - f_*)}{\sqrt{\eta_0} \beta}. \label{eq:deterministic_nonconvex_eq3}
\end{eqnarray}
Now from the definition of $b_t^2[k]$, we have $b^2_t[k] = b_{t-1}^2[k] + g_t^2[k]$. This can be rearranged to get
\begin{eqnarray}
    b_T[k] &=& b_{T-1}[k] + \frac{g_T^2[k]}{b_T[k] + b_{T-1}[k]} \notag\\
    &\leq & b_{T-1}[k] + \frac{g_T^2[k]}{b_T[k]}  \label{eq:deterministic_nonconvex_eq4}\\
    &\leq & b_{0}[k] + \sum_{t = 0}^T \frac{g_t^2[k]}{b_t[k]}.  \label{eq:deterministic_nonconvex_eq5} 
\end{eqnarray}
Here the last inequality  \eqref{eq:deterministic_nonconvex_eq5} follows from recursive use of \eqref{eq:deterministic_nonconvex_eq4}. Then, taking squares on both sides and summing over $k \in [d]$ we get
\begin{eqnarray}
    \sum_{k=1}^d b_T^2[k] &\leq & \sum_{k = 1}^d \left( b_0[k] + \sum_{t = 0}^T \frac{g_t^2[k]}{b_t[k]} \right)^2 \notag\\
    & \leq & \left(  \sum_{k = 1}^d b_0[k] + \sum_{t = 0}^T  \sum_{k = 1}^d \frac{g_t^2[k]}{b_t[k]} \right)^2 \notag\\
    & \leq & \left(  \sum_{k = 1}^d b_0[k] + \frac{2(f(w_0) - f_*)}{\sqrt{\eta_0} \beta}\right)^2. \label{eq:deterministic_nonconvex_eq6}
\end{eqnarray}
The second inequality follows from $b_0[k] + \sum_{t = 0}^T \frac{g_t^2[k]}{b_t[k]} \geq 0$ for all $k \in [d]$ and the last inequality from \eqref{eq:deterministic_nonconvex_eq3}. Now note that $\sum_{t = 0}^T \left\| g_t\right\|^2 = \sum_{t = 0}^T \sum_{k = 1}^d g_t^2[k] = \sum_{k = 1}^d b^2_t[k]$. Therefore dividing both sides of \eqref{eq:deterministic_nonconvex_eq6} by $T+1$, we get
\begin{eqnarray*}
    \min_{t \leq T} \left\| \nabla f(w_t)\right\|^2 \leq \frac{1}{T+1} \left(  \sum_{k = 1}^d b_0[k] + \frac{2(f(w_0) - f_*)}{\sqrt{\eta_0} \beta}\right)^2. 
\end{eqnarray*}
This completes the proof of the theorem.
\end{proof}

\newpage
\subsection{Proof of Theorem \ref{theorem:stoch_nonconvex}}
\vspace{.5cm}
\begin{theorem}
    Suppose $f$ is a $L$-smooth function and $g_t$ is an unbiased estimator of $\nabla f(w_t)$ such that \ref{eq:BV} holds. Moreover, we assume $\| \nabla f(w_t)\|^2 \leq \gamma^2$ for all $t$. Then \textcolor{red}{\algname{KATE}} satisfies
    \begin{eqnarray*}
        \min_{t \leq T} \E{\left\| \nabla f(w_t) \right\|} &\leq& \left( \frac{\|g_0\|}{T} + \frac{2(\gamma + \sigma)}{\sqrt{T}}\right)^{\nicefrac{1}{2}} \sqrt{\frac{2 \mathcal{C}_f}{\beta \sqrt{\eta_0}}}
    \end{eqnarray*}
    where
    \begin{eqnarray*}
        \mathcal{C}_f = f(w_0) - f_* + \sum_{k = 1}^d \left(2\beta \sqrt{\eta[k]} \sigma + \frac{\beta^2 \eta[k] L}{2} + \frac{\beta^2 L}{2 g_0^2[k]} \right) \left( \log \left( \frac{(\sigma^2 + \gamma^2) T}{g_0^2[k]}\right) + 1  \right).
    \end{eqnarray*}
\end{theorem}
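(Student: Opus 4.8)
The starting point is the descent lemma from Assumption~\ref{as:smooth} applied to the coordinate-wise update \eqref{eq:algo_coordinate}: $f(w_{t+1})\le f(w_t)-\sum_{k}\nu_t[k]\nabla_k f(w_t)g_t[k]+\tfrac{L}{2}\sum_k\nu_t^2[k]g_t^2[k]$. The obstacle — and the reason one cannot directly take $\Expt{\cdot}$ — is that $\nu_t[k]$ from \eqref{eq:KATEstep} depends on $g_t$, so $\Expt{\nu_t[k]\nabla_k f(w_t)g_t[k]}\neq\nu_t[k](\nabla_k f(w_t))^2$. I would resolve this with the \emph{surrogate step size} $\tilde\nu_t[k]\eqdef\beta\sqrt{\eta[k]}\big/\sqrt{b_{t-1}^2[k]+(\nabla_k f(w_t))^2+\sigma^2}$, which is measurable with respect to $w_0,\dots,w_t$: write $\nu_t[k]\nabla_k f(w_t)g_t[k]=\tilde\nu_t[k]\nabla_k f(w_t)g_t[k]+(\nu_t[k]-\tilde\nu_t[k])\nabla_k f(w_t)g_t[k]$, take $\Expt{\cdot}$ so the first term collapses to $\tilde\nu_t[k](\nabla_k f(w_t))^2$, and bound the second term by the technical lemma establishing \eqref{eq:bound_surrogate} (whose statement is exactly this; it extends to $t=0$ with $b_{-1}=0$ since $\sigma>0$). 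Since the right-hand side of \eqref{eq:bound_surrogate} is at most $\tfrac12\tilde\nu_t[k](\nabla_k f(w_t))^2+2\beta\sqrt{\eta[k]}\sigma\,\Expt{g_t^2[k]/b_t^2[k]}$, half of the descent term survives, giving $\Expt{f(w_{t+1})}\le f(w_t)-\tfrac12\sum_k\tilde\nu_t[k](\nabla_k f(w_t))^2+2\beta\sigma\sum_k\sqrt{\eta[k]}\,\Expt{g_t^2[k]/b_t^2[k]}+\tfrac{L}{2}\sum_k\Expt{\nu_t^2[k]g_t^2[k]}$.

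Next I would sum over $t=0,\dots,T$, take total expectation, telescope, and use $\Exp{f(w_{T+1})}\ge f_*$, obtaining $\tfrac12\sum_{t=0}^T\sum_k\Exp{\tilde\nu_t[k](\nabla_k f(w_t))^2}\le f(w_0)-f_*+(\text{error sums})$. The error sums are controlled by the closed-form estimates flagged after the theorem. The noise term uses Lemma~\ref{lemma:sum_bound}, $\sum_t g_t^2[k]/b_t^2[k]\le\log(b_T^2[k]/b_0^2[k])+1$. For the curvature term, split $\nu_t^2[k]g_t^2[k]=\beta^2\eta[k]\,g_t^2[k]/b_t^2[k]+\beta^2 S_t[k]\,g_t^2[k]/b_t^4[k]$ with $S_t[k]\eqdef\sum_{\tau\le t}g_\tau^2[k]/b_\tau^2[k]$; the first piece again sums via Lemma~\ref{lemma:sum_bound}, while for the second use $S_t[k]\le S_T[k]\le\log(b_T^2[k]/b_0^2[k])+1$ together with the telescoping bound $\sum_t g_t^2[k]/b_t^4[k]=\sum_t(b_t^2[k]-b_{t-1}^2[k])/b_t^4[k]\le g_0^2[k]/b_0^4[k]+\sum_{t\ge1}(1/b_{t-1}^2[k]-1/b_t^2[k])=\mathcal{O}(1/g_0^2[k])$, using $b_0^2[k]=g_0^2[k]$. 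Taking expectation, applying Jensen's inequality (Lemma~\ref{lemma:Jensen}) so $\Exp{\log(b_T^2[k]/g_0^2[k])}\le\log(\Exp{b_T^2[k]}/g_0^2[k])$, and using $\Exp{b_T^2[k]}=\sum_\tau\Exp{g_\tau^2[k]}\le(\sigma^2+\gamma^2)(T+1)$ (since $\Exp{g_\tau^2[k]}=\Exp{(\nabla_k f(w_\tau))^2}+\Exp{(g_\tau[k]-\nabla_k f(w_\tau))^2}\le\gamma^2+\sigma^2$ by \eqref{eq:BV} and $\|\nabla f\|\le\gamma$) turns the error sums precisely into the logarithmic terms defining $\mathcal{C}_f$, so $\tfrac12\sum_{t=0}^T\sum_k\Exp{\tilde\nu_t[k](\nabla_k f(w_t))^2}\le\mathcal{C}_f$.

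It remains to extract $\min_{t\le T}\Exp{\|\nabla f(w_t)\|}$ from the left-hand side. For each fixed $t$, lower-bound $\sum_k\tilde\nu_t[k](\nabla_k f(w_t))^2\ge\beta\sqrt{\eta_0}\sum_k(\nabla_k f(w_t))^2/\sqrt{b_{t-1}^2[k]+(\nabla_k f(w_t))^2+\sigma^2}$ and then bound each denominator uniformly in $k$ by $\sqrt{D_T}$, where $D_T\eqdef\sum_{\tau=0}^T\|g_\tau\|^2+\gamma^2+\sigma^2$, via $b_{t-1}^2[k]\le\|b_{t-1}\|^2=\sum_{\tau<t}\|g_\tau\|^2$ and $(\nabla_k f(w_t))^2\le\|\nabla f(w_t)\|^2\le\gamma^2$; this gives $\sum_k\tilde\nu_t[k](\nabla_k f(w_t))^2\ge\beta\sqrt{\eta_0}\,\|\nabla f(w_t)\|^2/\sqrt{D_T}$, hence $\tfrac{\beta\sqrt{\eta_0}}{2}\sum_{t=0}^T\Exp{\|\nabla f(w_t)\|^2/\sqrt{D_T}}\le\mathcal{C}_f$. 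Applying the Cauchy--Schwarz inequality (Lemma~\ref{lemma:CS}) to expectations, $\Exp{\|\nabla f(w_t)\|^2/\sqrt{D_T}}\,\Exp{\sqrt{D_T}}\ge(\Exp{\|\nabla f(w_t)\|})^2$, then summing and using $\sum_{t=0}^T a_t\ge(T+1)\min_t a_t$ for the nonnegative reals $a_t=\Exp{\|\nabla f(w_t)\|}$ gives $\tfrac{\beta\sqrt{\eta_0}}{2}\cdot\tfrac{(T+1)(\min_{t\le T}\Exp{\|\nabla f(w_t)\|})^2}{\Exp{\sqrt{D_T}}}\le\mathcal{C}_f$. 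Finally $\Exp{\sqrt{D_T}}\le\|g_0\|+\sqrt{\Exp{\sum_{\tau=1}^T\|g_\tau\|^2}+\gamma^2+\sigma^2}\le\|g_0\|+\sqrt{T+1}\,(\gamma+\sigma)$ (using $\sqrt{a+b}\le\sqrt a+\sqrt b$, $\sqrt{\gamma^2+\sigma^2}\le\gamma+\sigma$ and the variance bound), so $\Exp{\sqrt{D_T}}/(T+1)\le\|g_0\|/T+2(\gamma+\sigma)/\sqrt T$, and rearranging yields the claimed bound. \textbf{The main obstacle} is the first step — decoupling $\nu_t$ from $g_t$ through the surrogate step size and absorbing the resulting error terms — which is exactly what the technical lemma behind \eqref{eq:bound_surrogate} is designed to do; a secondary subtlety is ordering the Cauchy--Schwarz and averaging steps so that one genuinely bounds $\min_t\Exp{\|\nabla f(w_t)\|}$ rather than the weaker $\Exp{\min_t\|\nabla f(w_t)\|}$, and keeping careful track of the $t=0$ contribution and of the Jensen step for $\Exp{\log(\cdot)}$.
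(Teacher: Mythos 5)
Your proposal is correct and reproduces the paper's proof almost step for step in its main machinery: the surrogate step size $\beta\sqrt{\eta[k]}/\sqrt{b_{t-1}^2[k]+(\nabla_k f(w_t))^2+\sigma^2}$ decoupled via the lemma behind \eqref{eq:bound_surrogate}, the telescoping descent bound, Lemma~\ref{lemma:sum_bound} for the noise and curvature sums, Jensen's inequality to pass the expectation inside the logarithm, and $\Exp{b_T^2[k]}\leq(\sigma^2+\gamma^2)T$ — all exactly as in the paper. You diverge in two places. First, for the curvature term the paper bounds $\nu_t^2[k]\leq\nicefrac{\beta^2\eta[k]}{b_t^2[k]}+\nicefrac{\beta^2}{b_t^2[k]g_0^2[k]}$ pointwise (using $S_t[k]\leq b_t^2[k]/g_0^2[k]$) and then applies Lemma~\ref{lemma:sum_bound} once; your route ($S_t[k]\leq S_T[k]$ combined with the telescoping of $\sum_t g_t^2[k]/b_t^4[k]\leq 2/g_0^2[k]$) is valid but loses a factor of $2$ on the $\nicefrac{\beta^2 L}{2g_0^2[k]}$ term of $\mathcal{C}_f$, so it proves the statement only with a marginally larger constant. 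Second, and more substantively, your extraction of $\min_{t\leq T}\Exp{\|\nabla f(w_t)\|}$ is genuinely different: the paper first applies Lemma~\ref{lemma:inequality} to aggregate coordinates (picking up a $d\sigma^2$ in the denominator), then Hölder's inequality with exponents $\nicefrac{3}{2}$ and $3$ to produce $\Exp{\|\nabla f(w_t)\|^{\nicefrac{4}{3}}}^{\nicefrac{3}{2}}$, and finally Jensen with $z^{\nicefrac{4}{3}}$; you instead bound every coordinate denominator uniformly by the same random quantity $\sqrt{D_T}$ and apply plain Cauchy--Schwarz, $\left(\Exp{\|\nabla f(w_t)\|}\right)^2\leq\Exp{\nicefrac{\|\nabla f(w_t)\|^2}{\sqrt{D_T}}}\,\Exp{\sqrt{D_T}}$. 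Your version is cleaner (no fractional powers, no extra Jensen step) and even avoids the $\sqrt{d}\,\sigma$ that appears in the paper's intermediate bound before being silently dropped; both yield the stated $\left(\nicefrac{\|g_0\|}{T}+\nicefrac{2(\gamma+\sigma)}{\sqrt{T}}\right)$ prefactor.
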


\begin{proof}
    Using smoothness, we have
    \begin{eqnarray*}
    f(w_{t+1}) &\leq & f(w_t) + \left \langle \nabla f(w_t), w_{t+1} - w_t\right \rangle + \frac{L}{2} \left \|w_{t+1} - w_t \right \|^2 \\
    & = & f(w_t) + \sum_{k=1}^d \nabla_k f(w_t) \left( w_{t+1}[k] - w_t[k] \right) + \frac{L}{2} \sum_{k=1}^d \left( w_{t+1}[k] - w_t[k] \right)^2 \\
    & = & f(w_t) - \sum_{k = 1}^d \nu_t[k] \nabla_k f(w_t)g_t[k] + \frac{L}{2} \sum_{k = 1}^d \nu_t^2[k] g_t^2[k].
\end{eqnarray*}
Then, taking the expectation conditioned on $w_t$, we have
\begin{eqnarray*}
    \Expt{f(w_{t+1})} & \leq & f(w_t) - \sum_{k = 1}^d \Expt{ \nu_t[k] \nabla_k f(w_t) g_t[k]} + \frac{L}{2} \sum_{k = 1}^d \Expt{\nu_t^2[k] g_t^2[k]} \\
    & = & f(w_t) - \sum_{k = 1}^d \Expt{\nu_t[k] \nabla_k f(w_t) g_t[k]} + \frac{L}{2} \sum_{k = 1}^d \Expt{\nu_t^2[k] g_t^2[k]} \\
    && - \sum_{k = 1}^d \frac{\beta \sqrt{\eta[k]}}{\sqrt{b^2_{t-1}[k] + \left(\nabla_k f(w_t) \right)^2 + \sigma^2}} \Expt{\nabla_k f(w_t) \left( \nabla_k f(w_t) - g_t[k] \right)} \\
    & = & f(w_t) + \sum_{k = 1}^d \Expt{\left( \frac{\beta \sqrt{\eta[k]}}{\sqrt{b_{t-1}^2[k] + \left(\nabla_k f(w_t) \right)^2 + \sigma^2}} - \nu_t[k] \right) \nabla_k f(w_t) g_t[k]} \\
    && + \frac{L}{2} \sum_{k = 1}^d \Expt{\nu_t^2[k] g_t^2[k] } - \sum_{k = 1}^d \frac{\beta \sqrt{\eta[k]} \left(\nabla_k f(w_t) \right)^2}{\sqrt{b^2_{t-1}[k] + \left(\nabla_k f(w_t) \right)^2 + \sigma^2}}. 
\end{eqnarray*}
The second last equality follows from $\Expt{\nabla_k f(w_t) \left( \nabla_k f(w_t) - g_t[k] \right)} = \nabla_k f(w_t) \left( \nabla_k f(w_t) - \Expt{g_t[k]} \right) = 0$. Now we use \eqref{eq:bound_surrogate} to get

\begin{eqnarray*}
    \Expt{f(w_{t+1})} & \leq & f(w_t) + \sum_{k = 1}^d 2\beta \sqrt{\eta[k]} \sigma \Expt{\frac{g^2_t[k]}{b_t^2[k]}} + \frac{L}{2} \sum_{k = 1}^d \Expt{\nu_t^2[k] g_t^2[k]}\\
    &&- \sum_{k = 1}^d \frac{\beta \sqrt{\eta[k]} \left(\nabla_k f(w_t) \right)^2}{2 \sqrt{b^2_{t-1}[k] + \left(\nabla_k f(w_t) \right)^2 + \sigma^2}}. 
\end{eqnarray*}
Then rearranging the terms we have
\begin{eqnarray*}
    \sum_{k = 1}^d \frac{\beta \sqrt{\eta[k]} \left(\nabla_k f(w_t) \right)^2}{2 \sqrt{b^2_{t-1}[k] + \left(\nabla_k f(w_t) \right)^2 + \sigma^2}} & \leq & f(w_t) - \Expt{f(w_{t+1})} + \sum_{k = 1}^d 2\beta \sqrt{\eta[k]} \sigma \Expt{ \frac{g^2_t[k]}{b_t^2[k]}}\\
    &&+ \frac{L}{2} \sum_{k = 1}^d \Expt{ \nu_t^2[k] g_t^2[k]}.
\end{eqnarray*}
Now we take the total expectations to derive
\begin{eqnarray*}
    \sum_{k = 1}^d \Exp{\frac{\beta \sqrt{\eta[k]} \left(\nabla_k f(w_t) \right)^2}{2 \sqrt{b^2_{t-1}[k] + \left(\nabla_k f(w_t) \right)^2 + \sigma^2}} } & \leq & \Exp{f(w_t)} - \Exp{f(w_{t+1})} + \sum_{k = 1}^d 2\beta \sqrt{\eta[k]} \sigma \Exp{\frac{g^2_t[k]}{b_t^2[k]}}\\
    &&+ \frac{L}{2} \sum_{k = 1}^d \Exp{\nu_t^2[k] g_t^2[k]}.
\end{eqnarray*}
The above inequality holds for any $t$. Therefore summing up from $t = 0$ to $t = T$ and using $f(w_{T+1}) \geq f_*$ we get
\begin{eqnarray}\label{eq:stoch_nonconvex1}
    \sum_{t= 0}^T \sum_{k = 1}^d \Exp{\frac{\beta \sqrt{\eta[k]} \left(\nabla_k f(w_t) \right)^2}{2 \sqrt{b^2_{t-1}[k] + \left(\nabla_k f(w_t) \right)^2 + \sigma^2}} } & \leq & f(w_0) - f_* + \sum_{t = 0}^T \sum_{k = 1}^d 2\beta \sqrt{\eta[k]} \sigma \Exp{ \frac{g^2_t[k]}{b_t^2[k]}} \notag \\
    && + \frac{L}{2} \sum_{t = 0}^T \sum_{k = 1}^d \Exp{ \nu_t^2[k] g_t^2[k]}.
\end{eqnarray}
Note that, using the expansion of $\nu_t^2[k]$ we have
\begin{eqnarray}
\nu_t^2[k] & = &\frac{\beta^2\eta[k] b_t^2[k] + \beta^2 \sum_{j = 0}^t \frac{g_j^2[k]}{b_j^2[k]}}{b_t^4[k]} \notag\\
&= & \frac{\beta^2 \eta[k]}{b_t^2[k]} + \frac{\beta^2}{b_t^4[k]} \sum_{j = 0}^t \frac{g_j^2[k]}{b_j^2[k]} \notag\\
&\leq & \frac{\beta^2 \eta[k]}{b_t^2[k]} + \frac{\beta^2}{b_t^4[k] b_0^2[k]} \sum_{j = 0}^t g_j^2[k] \label{eq:stoch_nonconvex2}\\
&= & \frac{\beta^2 \eta[k]}{b_t^2[k]} + \frac{\beta^2}{b_t^2[k] g_0^2[k]}. \label{eq:stoch_nonconvex3}   
\end{eqnarray}
Here \eqref{eq:stoch_nonconvex2} follows from $b_j^2[k] \geq b_0^2[k]$ and \eqref{eq:stoch_nonconvex3} from $b_t^2[k] = \sum_{j = 0}^t g_j^2[k]$.
Then using \eqref{eq:stoch_nonconvex3} in \eqref{eq:stoch_nonconvex1} we derive

\begin{align*}
    \sum_{t= 0}^T \sum_{k = 1}^d \Exp{\frac{\beta \sqrt{\eta[k]} \left(\nabla_k f(w_t) \right)^2}{2 \sqrt{b^2_{t-1}[k] + \left(\nabla_k f(w_t) \right)^2 + \sigma^2}}} & \leq  f(w_0) - f_* + \sum_{t = 0}^T \sum_{k = 1}^d \left(2\beta \sqrt{\eta[k]} \sigma + \frac{\beta^2 \eta[k] L}{2} + \frac{\beta^2 L}{2 g_0^2[k]} \right) \Exp{ \frac{g^2_t[k]}{b_t^2[k]}} \\
    &\leq  f(w_0) - f_* \\
    & + \sum_{k = 1}^d \left(2\beta \sqrt{\eta[k]} \sigma + \frac{\beta^2 \eta[k] L}{2} + \frac{\beta^2 L}{2 g_0^2[k]} \right) \Exp{\log \left( \frac{b^2_T[k]}{b_0^2[k]}\right) + 1}.
\end{align*}
Here the last inequality follows from \eqref{eq:sum_bound}. Now using Jensen's Inequality \eqref{eq:Jensen} with $\Psi(z) = \log(z)$ we have
\begin{align*}
    \sum_{t= 0}^T \sum_{k = 1}^d \Exp{\frac{\beta \sqrt{\eta[k]} \left(\nabla_k f(w_t) \right)^2}{2 \sqrt{b^2_{t-1}[k] + \left(\nabla_k f(w_t) \right)^2 + \sigma^2}}} &\leq  f(w_0) - f_* \\
    & + \sum_{k = 1}^d \left(2\beta \sqrt{\eta[k]} \sigma + \frac{\beta^2 \eta[k] L}{2} + \frac{\beta^2 L}{2 g_0^2[k]} \right) \left( \log \left( \frac{\Exp{b^2_T[k]}}{b_0^2[k]}\right) + 1  \right).
\end{align*}
Now note that $\Exp{b_T^2[k]} = \sum_{t = 0}^T \Exp{g_t^2[k]} = \sum_{t = 0}^T \Exp{g_t[k] - \nabla_k f(w_t)}^2 + \left( \nabla_k f(w_t)\right)^2 \leq (\sigma^2 + \gamma^2) T$. Therefore, we have the bound 
\begin{align}
    \sum_{t= 0}^T \sum_{k = 1}^d \Exp{\frac{\beta \sqrt{\eta[k]} \left(\nabla_k f(w_t) \right)^2}{2 \sqrt{b^2_{t-1}[k] + \left(\nabla_k f(w_t) \right)^2 + \sigma^2}}} &\leq & f(w_0) - f_* + 2\beta \sigma \sum_{k = 1}^d \sqrt{\eta[k]} \log \left( \frac{e(\sigma^2 + \gamma^2) T}{b_0^2[k]}\right) \notag \\
     && + \sum_{k = 1}^d \left(\frac{\beta^2 \eta[k] L}{2} + \frac{\beta^2 L}{2 g_0^2[k]} \right) \log \left( \frac{e(\sigma^2 + \gamma^2) T}{b_0^2[k]}\right).  \label{eq:stoch_nonconvex4}
\end{align}
Here the RHS is exactly $\mathcal{C}_f$. Using \eqref{eq:inequality} we have
\begin{eqnarray}
    \sum_{k=1}^d \frac{\left(\nabla_k f(w_t) \right)^2}{\sqrt{b^2_{t-1}[k] + \left(\nabla_k f(w_t) \right)^2 + \sigma^2}} & \geq & \frac{\sum_{k = 1}^d \left( \nabla_k f(w_t)\right)^2}{\sqrt{\sum_{k =1}^d b^2_{t-1}[k] + \left( \nabla_k f(w_t)\right)^2 + \sigma^2}} \notag\\
    & = & \frac{\left\| \nabla f(w_t) \right\|^2}{\sqrt{ \left\| b_{t-1}\right\|^2 +  \left\| \nabla f(w_t)\right\|^2 + d \sigma^2}}. \label{eq:stoch_nonconvex5} 
\end{eqnarray}
Therefore using \eqref{eq:stoch_nonconvex5} in \eqref{eq:stoch_nonconvex4} we arrive at
\begin{eqnarray}
    \sum_{t = 0}^T \Exp{\frac{\left\| \nabla f(w_t) \right\|^2}{\sqrt{ \left\| b_{t-1}\right\|^2 +  \left\| \nabla f(w_t)\right\|^2 + d \sigma^2}}} & \leq & \frac{2 \mathcal{C}_f}{\beta \sqrt{\eta_0}}. \label{eq:stoch_nonconvex6}
\end{eqnarray}
Now we use Holder's Inequality~\eqref{eq:holder} $\frac{\E(XY)}{\left(\E{\left| Y \right|^3} \right)^{\frac{1}{3}}} \leq \left( \E|X|^{\frac{3}{2}} \right)^{\frac{2}{3}}$ with
\begin{eqnarray*}
    X = \left( \frac{\left\| \nabla f(w_t) \right\|^2}{\sqrt{ \left\| b_{t-1}\right\|^2 +  \left\| \nabla f(w_t)\right\|^2 + d \sigma^2}} \right)^{\frac{2}{3}} \quad \text{ and } \quad Y = \left( \sqrt{ \left\| b_{t-1}\right\|^2 +  \left\| \nabla f(w_t)\right\|^2 + d \sigma^2} \right)^{\frac{2}{3}}
\end{eqnarray*}
to get a lower bound on LHS of \eqref{eq:stoch_nonconvex6}: 
\begin{eqnarray}
    \Exp{\frac{\left\| \nabla f(w_t) \right\|^2}{\sqrt{ \left\| b_{t-1}\right\|^2 +  \left\| \nabla f(w_t)\right\|^2 + d \sigma^2}}} & \geq & \frac{\Exp{\| \nabla f(w_t)\|^{\frac{4}{3}}}^{\frac{3}{2}}}{\sqrt{ \E \left(\left\| b_{t-1}\right\|^2 +  \left\| \nabla f(w_t)\right\|^2 + d \sigma^2 \right)}} \notag \\
    & \geq & \frac{\Exp{\| \nabla f(w_t)\|^{\frac{4}{3}}}^{\frac{3}{2}}}{\sqrt{\left\| b_{0}\right\|^2 + 2t(\gamma^2 + d\sigma^2) }}. \label{eq:stoch_nonconvex7}
\end{eqnarray}
Therefore from \eqref{eq:stoch_nonconvex6} and \eqref{eq:stoch_nonconvex7} we get
\begin{eqnarray*}
    \frac{T}{\sqrt{\|b_0\|^2 + 2T(\gamma^2 + d\sigma^2)}} \min_{t \leq T} \Exp{\| \nabla f(w_t)\|^{\frac{4}{3}}}^{\frac{3}{2}} & \leq & \frac{2 \mathcal{C}_f}{\beta \sqrt{\eta_0}}.
\end{eqnarray*}
Then multiplying both sides by $\frac{\|b_0\| + \sqrt{2T} (\gamma + \sqrt{d} \sigma)}{T}$ we have
\begin{eqnarray*}
    \min_{t \leq T} \Exp{\| \nabla f(w_t)\|^{\frac{4}{3}}}^{\frac{3}{2}} & \leq & \left( \frac{\|b_0\|}{T} + \frac{2(\gamma + \sigma)}{\sqrt{T}}\right) \frac{2 \mathcal{C}_f}{\beta \sqrt{\eta_0}}.
\end{eqnarray*}
Here we use  $\Exp{\left\| \nabla f(w_t) \right\|}^{\frac{4}{3}} \leq \Exp{\left\| \nabla f(w_t)\right\|^{\frac{4}{3}}}$ (follows from Jensen's Inequality \eqref{eq:Jensen} with $\Psi(z) = z^{\nicefrac{4}{3}}$) in the above equation to get
\begin{eqnarray*}
    \min_{t \leq T} \Exp{\left\| \nabla f(w_t)\right\|}^2 & \leq & \left( \frac{\|b_0\|}{T} + \frac{2(\gamma + \sigma)}{\sqrt{T}}\right) \frac{2 \mathcal{C}_f}{\beta \sqrt{\eta_0}}.
\end{eqnarray*}
This completes the proof of the Theorem.
\end{proof}

\newpage

\section{Additional Experiments: Scale-Invariance Verification}\label{sec:scale_invariance}

In this experiment, we implement $\algname{\textcolor{PineGreen}{KATE}}$ on problems \eqref{eq:GLM_opt_unscaled} (for unscaled data) and \eqref{eq:GLM_opt_scaled} (for scaled data) with $$\varphi_i(z) = \log \left(1 + \exp{\left( -y_i z\right)} \right).$$ 
We generate the data similar to Section \ref{sec:robustness}. We run $\algname{\textcolor{PineGreen}{KATE}}$ for $10,000$ iterations with mini-batch size $10$, $\eta = \nicefrac{1}{\left( \nabla f(w_0)\right)^2}$ and plot functional value $f(w_t)$ and accuracy in Figures \ref{fig:scale_fval} and \ref{fig:scale_accuracy}. We use the proportion of correctly classified samples to compute accuracy, i.e. $\frac{1}{n} \sum_{i = 1}^n \mathbf{1}_{\left\{y_i x_i^{\top}w_t \geq 0\right\}}$. 

In plots \ref{fig:scale_fval} and \ref{fig:scale_accuracy}, the functional value and accuracy of $\algname{\textcolor{PineGreen}{KATE}}$ coincide, which aligns with our theoretical findings (Proposition \ref{prop:invariance}). Figure \ref{fig:scale_grad} plots $\left\| \nabla f(w_t) \right\|^2$ and $\left\| \nabla f(w_t) \right\|_{V^{-2}}^2$ for unscaled and scaled data respectively. Here, \eqref{eq:invariance_grad} explains the identical values taken by the corresponding gradient norms of \algname{KATE} iterates for the scaled and unscaled data. Similarly, in Figure \ref{fig:scale_plots_adagrad}, we compare the performance of \algname{AdaGrad} on scaled and un-scaled data. This figure illustrates the lack of the scale-invariance for \algname{AdaGrad}.

\begin{figure}[H]
\centering
\begin{subfigure}[b]{.32\textwidth}
    \centering
    \includegraphics[width=\textwidth]{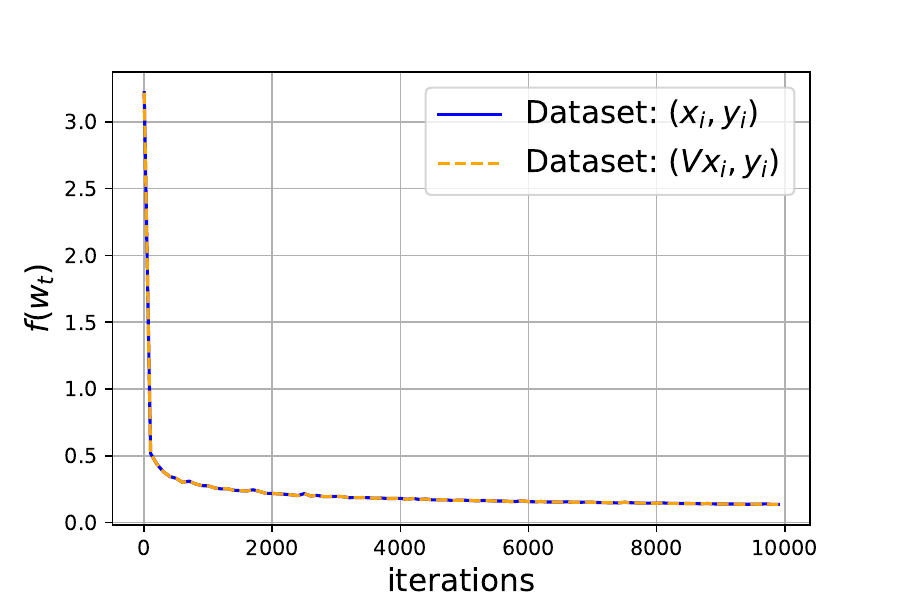}
    \caption{ Plot of $f(w_t)$}\label{fig:scale_fval}
\end{subfigure}
\begin{subfigure}[b]{0.32\textwidth}
    \centering
    \includegraphics[width=\textwidth]{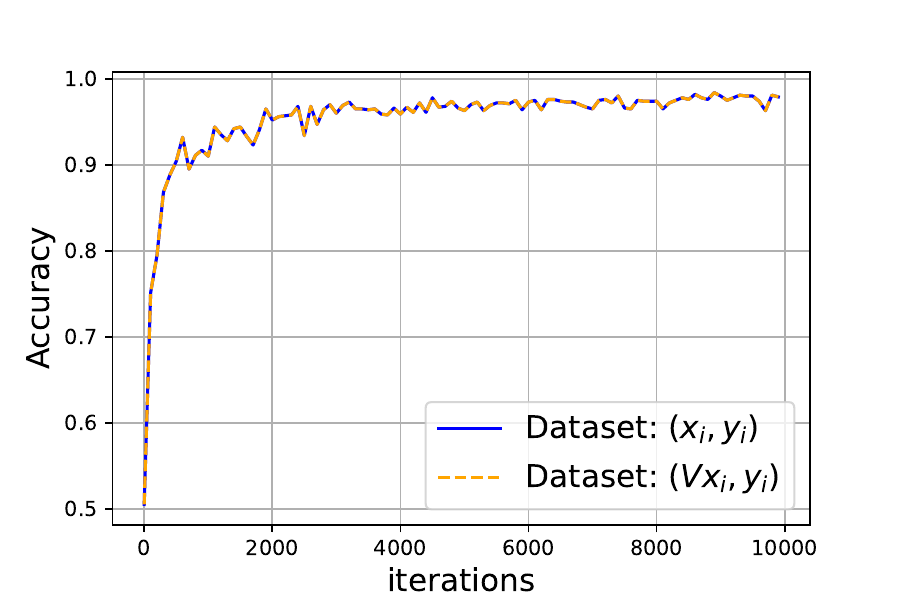}
    \caption{ Plot of Accuracy}\label{fig:scale_accuracy}
\end{subfigure}
\begin{subfigure}[b]{0.32\textwidth}
    \centering
    \includegraphics[width=\textwidth]{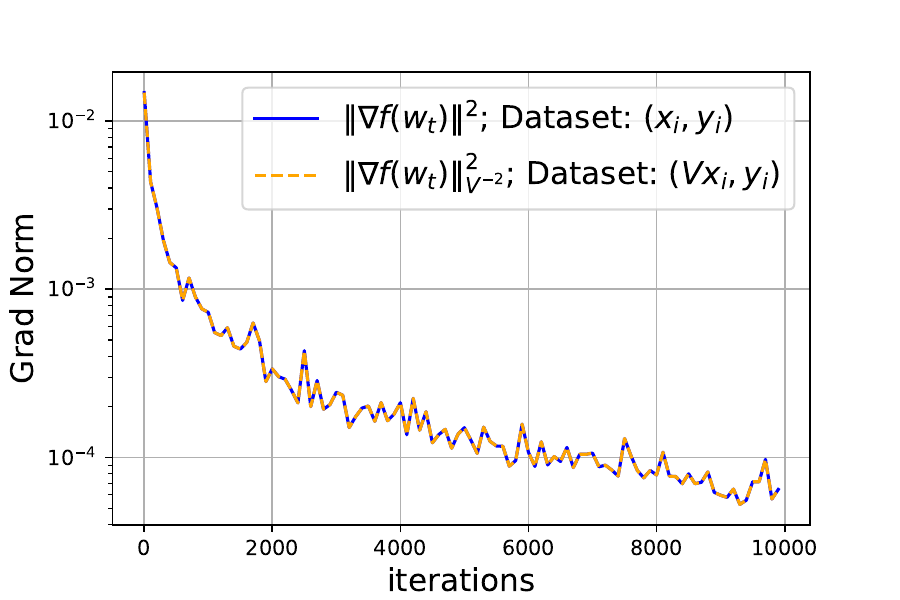}
    \caption{Plot of Gradient Norm}\label{fig:scale_grad}
\end{subfigure}
    \caption{Comparison of \algname{\textcolor{PineGreen}{KATE}} on scaled and un-scaled data. Figures \ref{fig:scale_fval}, and \ref{fig:scale_accuracy} plot the functional value $f(w_t)$ and accuracy on scaled and unscaled data, respectively. Figure \ref{fig:scale_grad} plots $\left\| \nabla f(w_t) \right\|^2$ and $\left\| \nabla f(w_t) \right\|_{V^{-2}}^2$ for unscaled and scaled data respectively.} \label{fig:scale_plots}
\end{figure}

\begin{figure}[H]
\centering
\begin{subfigure}[b]{.32\textwidth}
    \centering
    \includegraphics[width=\textwidth]{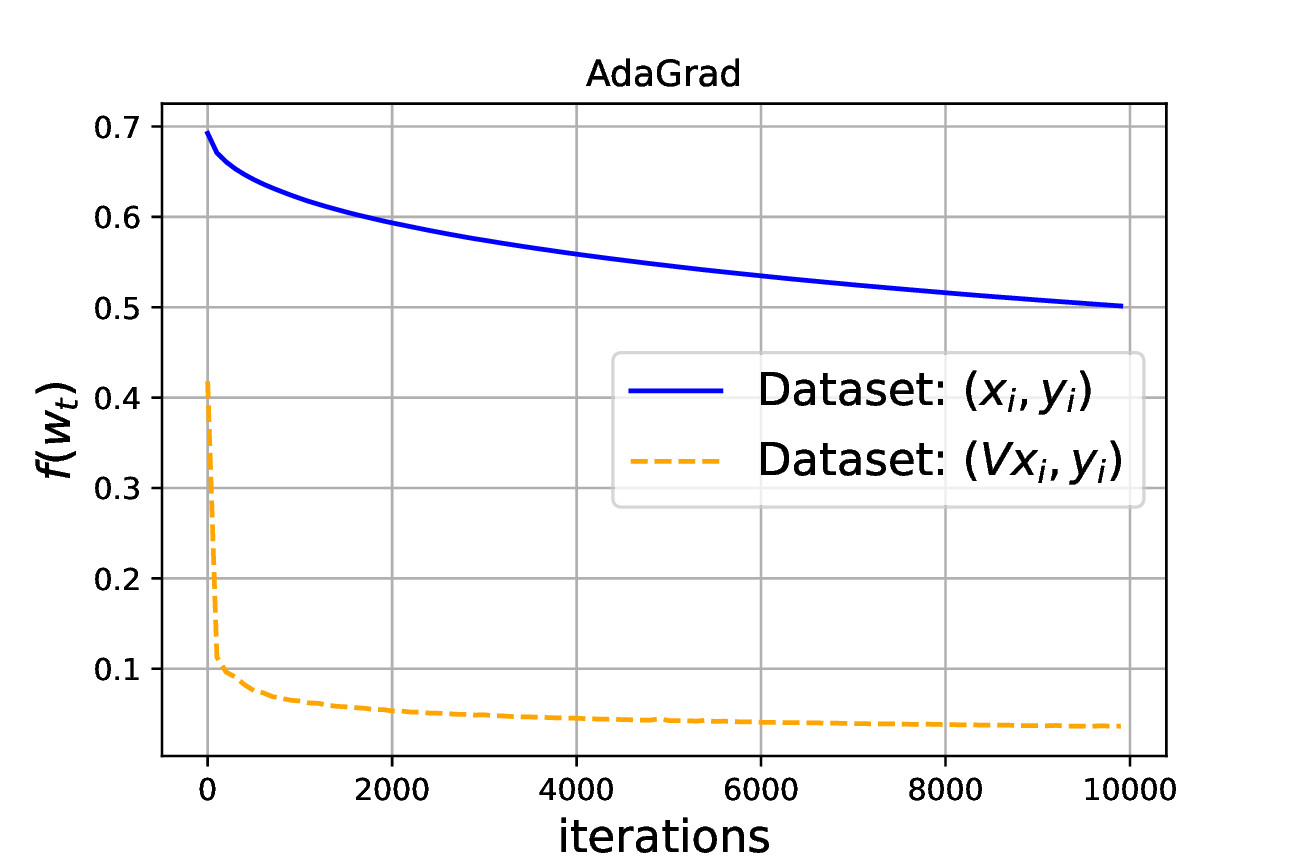}
    \caption{ Plot of $f(w_t)$}\label{fig:scale_fval_adagrad}
\end{subfigure}
\begin{subfigure}[b]{0.32\textwidth}
    \centering
    \includegraphics[width=\textwidth]{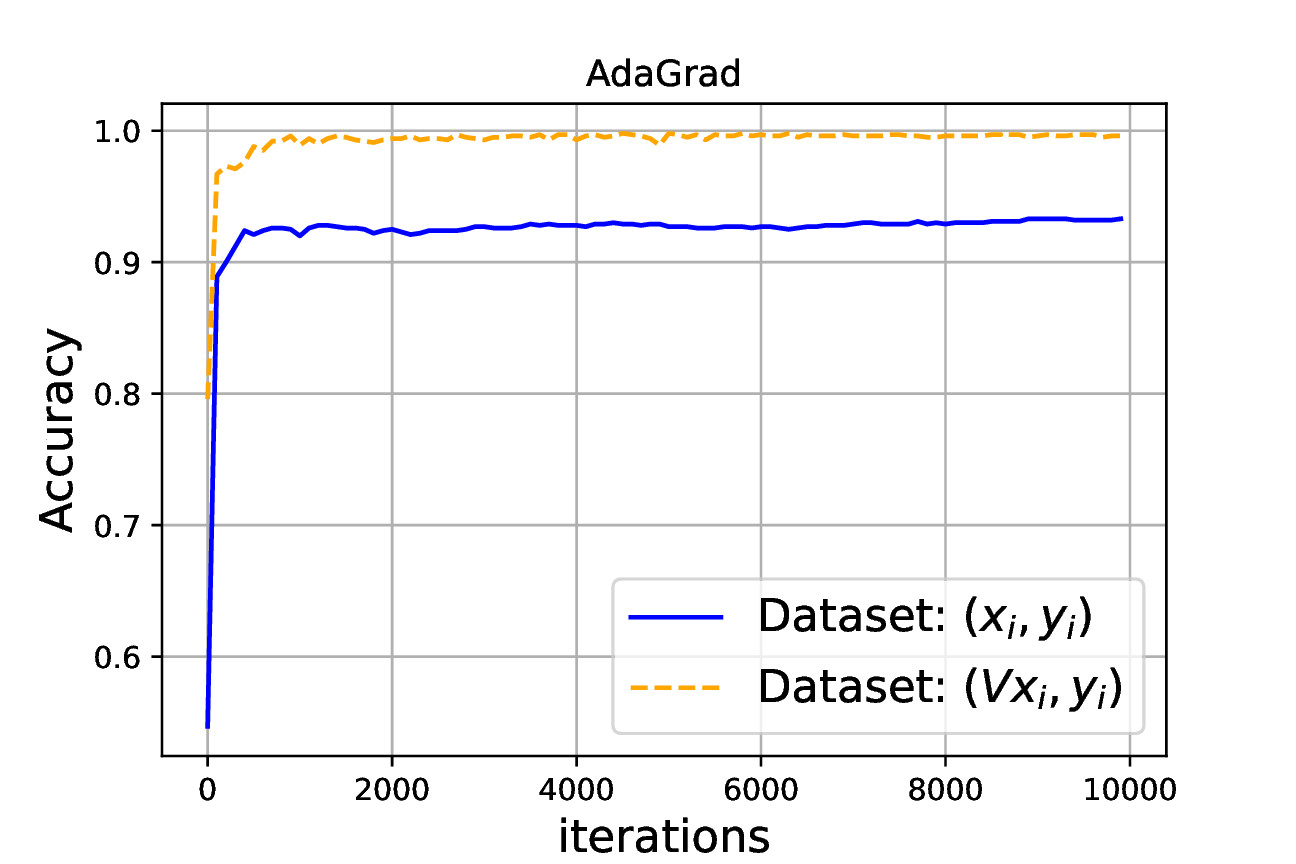}
    \caption{ Plot of Accuracy}\label{fig:scale_accuracy_adagrad}
\end{subfigure}
\begin{subfigure}[b]{0.32\textwidth}
    \centering
    \includegraphics[width=\textwidth]{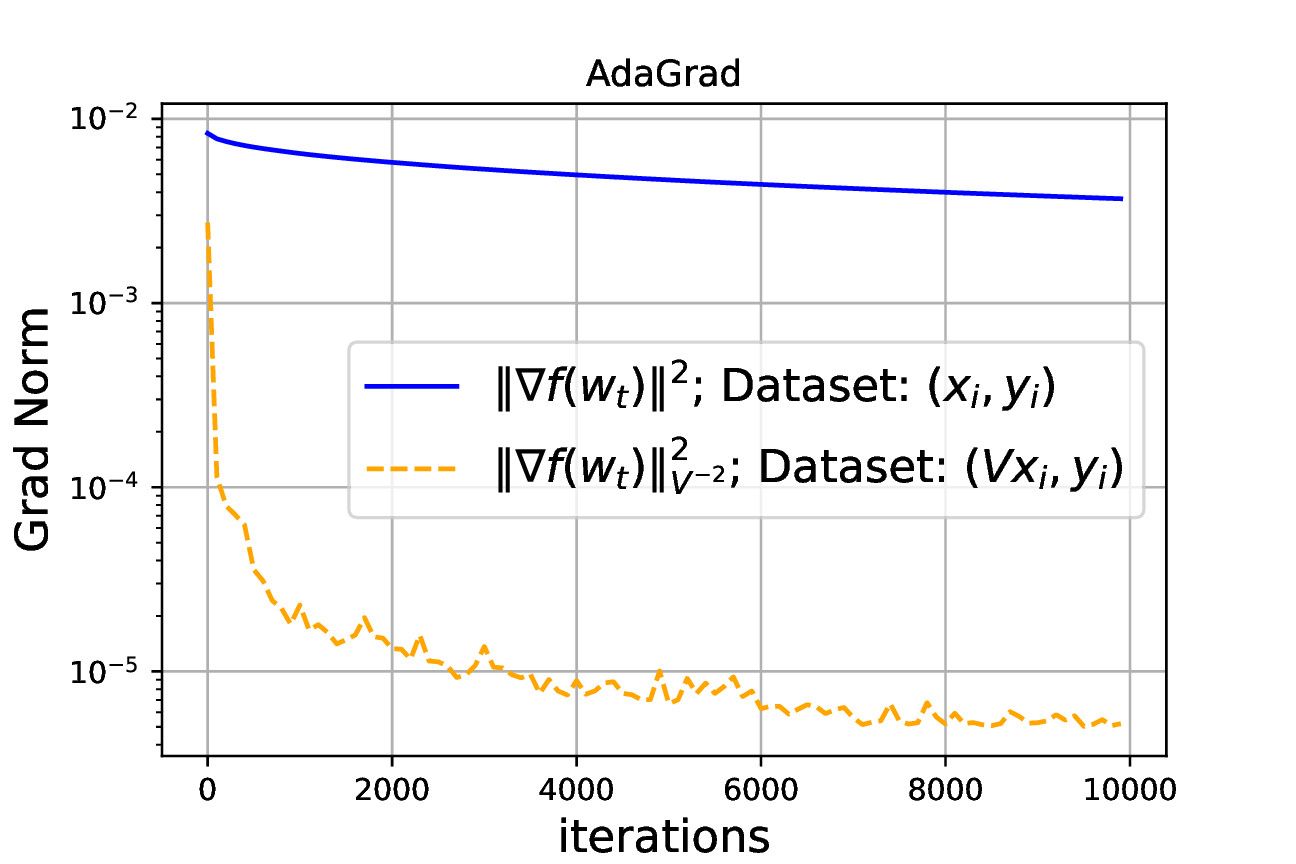}
    \caption{Plot of Gradient Norm}\label{fig:scale_grad_adagrad}
\end{subfigure}
    \caption{Comparison of \algname{\textcolor{PineGreen}{AdaGrad}} on scaled and un-scaled data. Figures \ref{fig:scale_fval_adagrad}, and \ref{fig:scale_accuracy_adagrad} plot the functional value $f(w_t)$ and accuracy on scaled and unscaled data, respectively. Figure \ref{fig:scale_grad_adagrad} plots $\left\| \nabla f(w_t) \right\|^2$ and $\left\| \nabla f(w_t) \right\|_{V^{-2}}^2$ for unscaled and scaled data respectively.} \label{fig:scale_plots_adagrad}
\end{figure}

\newpage

\section*{NeurIPS Paper Checklist}

\begin{enumerate}

\item {\bf Claims}
    \item[] Question: Do the main claims made in the abstract and introduction accurately reflect the paper's contributions and scope?
    \item[] Answer: \answerYes{} 
    \item[] Justification: the new method and its scale invariance property are introduced in Section~\ref{sec:motivation_alg}, main convergence results are provided in Section~\ref{sec:analysis}, and the numerical results are provided in Section~\ref{sec:numerical_results}. 
    \item[] Guidelines:
    \begin{itemize}
        \item The answer NA means that the abstract and introduction do not include the claims made in the paper.
        \item The abstract and/or introduction should clearly state the claims made, including the contributions made in the paper and important assumptions and limitations. A No or NA answer to this question will not be perceived well by the reviewers. 
        \item The claims made should match theoretical and experimental results, and reflect how much the results can be expected to generalize to other settings. 
        \item It is fine to include aspirational goals as motivation as long as it is clear that these goals are not attained by the paper. 
    \end{itemize}

\item {\bf Limitations}
    \item[] Question: Does the paper discuss the limitations of the work performed by the authors?
    \item[] Answer: \answerYes{} 
    \item[] Justification: see Section~\ref{sec:analysis}.
    \item[] Guidelines:
    \begin{itemize}
        \item The answer NA means that the paper has no limitation while the answer No means that the paper has limitations, but those are not discussed in the paper. 
        \item The authors are encouraged to create a separate "Limitations" section in their paper.
        \item The paper should point out any strong assumptions and how robust the results are to violations of these assumptions (e.g., independence assumptions, noiseless settings, model well-specification, asymptotic approximations only holding locally). The authors should reflect on how these assumptions might be violated in practice and what the implications would be.
        \item The authors should reflect on the scope of the claims made, e.g., if the approach was only tested on a few datasets or with a few runs. In general, empirical results often depend on implicit assumptions, which should be articulated.
        \item The authors should reflect on the factors that influence the performance of the approach. For example, a facial recognition algorithm may perform poorly when image resolution is low or images are taken in low lighting. Or a speech-to-text system might not be used reliably to provide closed captions for online lectures because it fails to handle technical jargon.
        \item The authors should discuss the computational efficiency of the proposed algorithms and how they scale with dataset size.
        \item If applicable, the authors should discuss possible limitations of their approach to address problems of privacy and fairness.
        \item While the authors might fear that complete honesty about limitations might be used by reviewers as grounds for rejection, a worse outcome might be that reviewers discover limitations that aren't acknowledged in the paper. The authors should use their best judgment and recognize that individual actions in favor of transparency play an important role in developing norms that preserve the integrity of the community. Reviewers will be specifically instructed to not penalize honesty concerning limitations.
    \end{itemize}

\item {\bf Theory Assumptions and Proofs}
    \item[] Question: For each theoretical result, does the paper provide the full set of assumptions and a complete (and correct) proof?
    \item[] Answer: \answerYes{} 
    \item[] Justification: see Section~\ref{sec:analysis} and the Appendix.
    \item[] Guidelines:
    \begin{itemize}
        \item The answer NA means that the paper does not include theoretical results. 
        \item All the theorems, formulas, and proofs in the paper should be numbered and cross-referenced.
        \item All assumptions should be clearly stated or referenced in the statement of any theorems.
        \item The proofs can either appear in the main paper or the supplemental material, but if they appear in the supplemental material, the authors are encouraged to provide a short proof sketch to provide intuition. 
        \item Inversely, any informal proof provided in the core of the paper should be complemented by formal proofs provided in appendix or supplemental material.
        \item Theorems and Lemmas that the proof relies upon should be properly referenced. 
    \end{itemize}

    \item {\bf Experimental Result Reproducibility}
    \item[] Question: Does the paper fully disclose all the information needed to reproduce the main experimental results of the paper to the extent that it affects the main claims and/or conclusions of the paper (regardless of whether the code and data are provided or not)?
    \item[] Answer: \answerYes{} 
    \item[] Justification: see Section~\ref{sec:numerical_results}.
    \item[] Guidelines:
    \begin{itemize}
        \item The answer NA means that the paper does not include experiments.
        \item If the paper includes experiments, a No answer to this question will not be perceived well by the reviewers: Making the paper reproducible is important, regardless of whether the code and data are provided or not.
        \item If the contribution is a dataset and/or model, the authors should describe the steps taken to make their results reproducible or verifiable. 
        \item Depending on the contribution, reproducibility can be accomplished in various ways. For example, if the contribution is a novel architecture, describing the architecture fully might suffice, or if the contribution is a specific model and empirical evaluation, it may be necessary to either make it possible for others to replicate the model with the same dataset, or provide access to the model. In general. releasing code and data is often one good way to accomplish this, but reproducibility can also be provided via detailed instructions for how to replicate the results, access to a hosted model (e.g., in the case of a large language model), releasing of a model checkpoint, or other means that are appropriate to the research performed.
        \item While NeurIPS does not require releasing code, the conference does require all submissions to provide some reasonable avenue for reproducibility, which may depend on the nature of the contribution. For example
        \begin{enumerate}
            \item If the contribution is primarily a new algorithm, the paper should make it clear how to reproduce that algorithm.
            \item If the contribution is primarily a new model architecture, the paper should describe the architecture clearly and fully.
            \item If the contribution is a new model (e.g., a large language model), then there should either be a way to access this model for reproducing the results or a way to reproduce the model (e.g., with an open-source dataset or instructions for how to construct the dataset).
            \item We recognize that reproducibility may be tricky in some cases, in which case authors are welcome to describe the particular way they provide for reproducibility. In the case of closed-source models, it may be that access to the model is limited in some way (e.g., to registered users), but it should be possible for other researchers to have some path to reproducing or verifying the results.
        \end{enumerate}
    \end{itemize}

\item {\bf Open access to data and code}
    \item[] Question: Does the paper provide open access to the data and code, with sufficient instructions to faithfully reproduce the main experimental results, as described in supplemental material?
    \item[] Answer: \answerYes{} 
    \item[] Justification: see Section~\ref{sec:numerical_results}.
    \item[] Guidelines:
    \begin{itemize}
        \item The answer NA means that paper does not include experiments requiring code.
        \item Please see the NeurIPS code and data submission guidelines (\url{https://nips.cc/public/guides/CodeSubmissionPolicy}) for more details.
        \item While we encourage the release of code and data, we understand that this might not be possible, so “No” is an acceptable answer. Papers cannot be rejected simply for not including code, unless this is central to the contribution (e.g., for a new open-source benchmark).
        \item The instructions should contain the exact command and environment needed to run to reproduce the results. See the NeurIPS code and data submission guidelines (\url{https://nips.cc/public/guides/CodeSubmissionPolicy}) for more details.
        \item The authors should provide instructions on data access and preparation, including how to access the raw data, preprocessed data, intermediate data, and generated data, etc.
        \item The authors should provide scripts to reproduce all experimental results for the new proposed method and baselines. If only a subset of experiments are reproducible, they should state which ones are omitted from the script and why.
        \item At submission time, to preserve anonymity, the authors should release anonymized versions (if applicable).
        \item Providing as much information as possible in supplemental material (appended to the paper) is recommended, but including URLs to data and code is permitted.
    \end{itemize}

\item {\bf Experimental Setting/Details}
    \item[] Question: Does the paper specify all the training and test details (e.g., data splits, hyperparameters, how they were chosen, type of optimizer, etc.) necessary to understand the results?
    \item[] Answer: \answerYes{} 
    \item[] Justification: see Section~\ref{sec:numerical_results}.
    \item[] Guidelines:
    \begin{itemize}
        \item The answer NA means that the paper does not include experiments.
        \item The experimental setting should be presented in the core of the paper to a level of detail that is necessary to appreciate the results and make sense of them.
        \item The full details can be provided either with the code, in appendix, or as supplemental material.
    \end{itemize}

\item {\bf Experiment Statistical Significance}
    \item[] Question: Does the paper report error bars suitably and correctly defined or other appropriate information about the statistical significance of the experiments?
    \item[] Answer: \answerNA{} 
    \item[] Justification: the results are consistent for different runs.
    \item[] Guidelines:
    \begin{itemize}
        \item The answer NA means that the paper does not include experiments.
        \item The authors should answer "Yes" if the results are accompanied by error bars, confidence intervals, or statistical significance tests, at least for the experiments that support the main claims of the paper.
        \item The factors of variability that the error bars are capturing should be clearly stated (for example, train/test split, initialization, random drawing of some parameter, or overall run with given experimental conditions).
        \item The method for calculating the error bars should be explained (closed form formula, call to a library function, bootstrap, etc.)
        \item The assumptions made should be given (e.g., Normally distributed errors).
        \item It should be clear whether the error bar is the standard deviation or the standard error of the mean.
        \item It is OK to report 1-sigma error bars, but one should state it. The authors should preferably report a 2-sigma error bar than state that they have a 96\% CI, if the hypothesis of Normality of errors is not verified.
        \item For asymmetric distributions, the authors should be careful not to show in tables or figures symmetric error bars that would yield results that are out of range (e.g. negative error rates).
        \item If error bars are reported in tables or plots, The authors should explain in the text how they were calculated and reference the corresponding figures or tables in the text.
    \end{itemize}

\item {\bf Experiments Compute Resources}
    \item[] Question: For each experiment, does the paper provide sufficient information on the computer resources (type of compute workers, memory, time of execution) needed to reproduce the experiments?
    \item[] Answer: \answerYes{} 
    \item[] Justification: We have added all the details in Section \ref{sec:numerical_results}.
    \item[] Guidelines:
    \begin{itemize}
        \item The answer NA means that the paper does not include experiments.
        \item The paper should indicate the type of compute workers CPU or GPU, internal cluster, or cloud provider, including relevant memory and storage.
        \item The paper should provide the amount of compute required for each of the individual experimental runs as well as estimate the total compute. 
        \item The paper should disclose whether the full research project required more compute than the experiments reported in the paper (e.g., preliminary or failed experiments that didn't make it into the paper). 
    \end{itemize}
    
\item {\bf Code Of Ethics}
    \item[] Question: Does the research conducted in the paper conform, in every respect, with the NeurIPS Code of Ethics \url{https://neurips.cc/public/EthicsGuidelines}?
    \item[] Answer: \answerYes{} 
    \item[] Justification: the paper follows NeurIPS Code of Ethics.
    \item[] Guidelines:
    \begin{itemize}
        \item The answer NA means that the authors have not reviewed the NeurIPS Code of Ethics.
        \item If the authors answer No, they should explain the special circumstances that require a deviation from the Code of Ethics.
        \item The authors should make sure to preserve anonymity (e.g., if there is a special consideration due to laws or regulations in their jurisdiction).
    \end{itemize}

\item {\bf Broader Impacts}
    \item[] Question: Does the paper discuss both potential positive societal impacts and negative societal impacts of the work performed?
    \item[] Answer: \answerNA{} 
    \item[] Justification: the paper is mostly theoretical and does not have a direct societal impact.
    \item[] Guidelines:
    \begin{itemize}
        \item The answer NA means that there is no societal impact of the work performed.
        \item If the authors answer NA or No, they should explain why their work has no societal impact or why the paper does not address societal impact.
        \item Examples of negative societal impacts include potential malicious or unintended uses (e.g., disinformation, generating fake profiles, surveillance), fairness considerations (e.g., deployment of technologies that could make decisions that unfairly impact specific groups), privacy considerations, and security considerations.
        \item The conference expects that many papers will be foundational research and not tied to particular applications, let alone deployments. However, if there is a direct path to any negative applications, the authors should point it out. For example, it is legitimate to point out that an improvement in the quality of generative models could be used to generate deepfakes for disinformation. On the other hand, it is not needed to point out that a generic algorithm for optimizing neural networks could enable people to train models that generate Deepfakes faster.
        \item The authors should consider possible harms that could arise when the technology is being used as intended and functioning correctly, harms that could arise when the technology is being used as intended but gives incorrect results, and harms following from (intentional or unintentional) misuse of the technology.
        \item If there are negative societal impacts, the authors could also discuss possible mitigation strategies (e.g., gated release of models, providing defenses in addition to attacks, mechanisms for monitoring misuse, mechanisms to monitor how a system learns from feedback over time, improving the efficiency and accessibility of ML).
    \end{itemize}
    
\item {\bf Safeguards}
    \item[] Question: Does the paper describe safeguards that have been put in place for responsible release of data or models that have a high risk for misuse (e.g., pretrained language models, image generators, or scraped datasets)?
    \item[] Answer: \answerNA{} 
    \item[] Justification: we do not release data or models.
    \item[] Guidelines:
    \begin{itemize}
        \item The answer NA means that the paper poses no such risks.
        \item Released models that have a high risk for misuse or dual-use should be released with necessary safeguards to allow for controlled use of the model, for example by requiring that users adhere to usage guidelines or restrictions to access the model or implementing safety filters. 
        \item Datasets that have been scraped from the Internet could pose safety risks. The authors should describe how they avoided releasing unsafe images.
        \item We recognize that providing effective safeguards is challenging, and many papers do not require this, but we encourage authors to take this into account and make a best faith effort.
    \end{itemize}

\item {\bf Licenses for existing assets}
    \item[] Question: Are the creators or original owners of assets (e.g., code, data, models), used in the paper, properly credited and are the license and terms of use explicitly mentioned and properly respected?
    \item[] Answer: \answerYes{} 
    \item[] Justification: see Section~\ref{sec:numerical_results}.
    \item[] Guidelines:
    \begin{itemize}
        \item The answer NA means that the paper does not use existing assets.
        \item The authors should cite the original paper that produced the code package or dataset.
        \item The authors should state which version of the asset is used and, if possible, include a URL.
        \item The name of the license (e.g., CC-BY 4.0) should be included for each asset.
        \item For scraped data from a particular source (e.g., website), the copyright and terms of service of that source should be provided.
        \item If assets are released, the license, copyright information, and terms of use in the package should be provided. For popular datasets, \url{paperswithcode.com/datasets} has curated licenses for some datasets. Their licensing guide can help determine the license of a dataset.
        \item For existing datasets that are re-packaged, both the original license and the license of the derived asset (if it has changed) should be provided.
        \item If this information is not available online, the authors are encouraged to reach out to the asset's creators.
    \end{itemize}

\item {\bf New Assets}
    \item[] Question: Are new assets introduced in the paper well documented and is the documentation provided alongside the assets?
    \item[] Answer: \answerNA{} 
    \item[] Justification: not applicable.
    \item[] Guidelines:
    \begin{itemize}
        \item The answer NA means that the paper does not release new assets.
        \item Researchers should communicate the details of the dataset/code/model as part of their submissions via structured templates. This includes details about training, license, limitations, etc. 
        \item The paper should discuss whether and how consent was obtained from people whose asset is used.
        \item At submission time, remember to anonymize your assets (if applicable). You can either create an anonymized URL or include an anonymized zip file.
    \end{itemize}

\item {\bf Crowdsourcing and Research with Human Subjects}
    \item[] Question: For crowdsourcing experiments and research with human subjects, does the paper include the full text of instructions given to participants and screenshots, if applicable, as well as details about compensation (if any)? 
    \item[] Answer: \answerNA{} 
    \item[] Justification: not applicable.
    \item[] Guidelines:
    \begin{itemize}
        \item The answer NA means that the paper does not involve crowdsourcing nor research with human subjects.
        \item Including this information in the supplemental material is fine, but if the main contribution of the paper involves human subjects, then as much detail as possible should be included in the main paper. 
        \item According to the NeurIPS Code of Ethics, workers involved in data collection, curation, or other labor should be paid at least the minimum wage in the country of the data collector. 
    \end{itemize}

\item {\bf Institutional Review Board (IRB) Approvals or Equivalent for Research with Human Subjects}
    \item[] Question: Does the paper describe potential risks incurred by study participants, whether such risks were disclosed to the subjects, and whether Institutional Review Board (IRB) approvals (or an equivalent approval/review based on the requirements of your country or institution) were obtained?
    \item[] Answer: \answerNA{} 
    \item[] Justification: not applicable.
    \item[] Guidelines:
    \begin{itemize}
        \item The answer NA means that the paper does not involve crowdsourcing nor research with human subjects.
        \item Depending on the country in which research is conducted, IRB approval (or equivalent) may be required for any human subjects research. If you obtained IRB approval, you should clearly state this in the paper. 
        \item We recognize that the procedures for this may vary significantly between institutions and locations, and we expect authors to adhere to the NeurIPS Code of Ethics and the guidelines for their institution. 
        \item For initial submissions, do not include any information that would break anonymity (if applicable), such as the institution conducting the review.
    \end{itemize}

\end{enumerate}

\end{document}